\newenvironment{thm}{\begin{theorem}}{\end{theorem}}
\newenvironment{prop}{\begin{proposition}}{\end{proposition}}
\newenvironment{lem}{\begin{lemma}}{\end{lemma}}
\newenvironment{cor}{\begin{corollary}}{\end{corollary}}
\newenvironment{defn}{\begin{definition} \rm}{\end{definition}}
\newenvironment{ex}{\begin{example} \rm}{\end{example}}
\newenvironment{AppProof}[1]{{\noindent \bf Proof of #1}\quad}{\begin{flushright}$\square$\end{flushright}}
\newtheorem{asm}{Assumption}
\DeclareMathOperator*{\argmin}{argmin}
\DeclareMathOperator*{\sgn}{sgn}
\def\bs#1{\boldsymbol#1}
\newcommand{\mat}[2]{{\rm M}(#1,#2)}                                  %
\newcommand{\pd}[2]{\frac{\partial #1}{\partial #2}}                  %
\newcommand{\pds}[3]{\frac{\partial^2 #1}{\partial #2 \partial #3}}   %
\newcommand{\pdseta}[3]{\frac{\partial^2 #1}{\partial \eta_{#2} \partial \eta_{#3}}} %
\newcommand{\norm}[1]{\| #1 \|}
\newcommand{\spec}[1]{{\rm Spec}(#1)}                 %
\newcommand{\specr}[1]{\rho(#1)}                      %
\newcommand{\E}[2]{{\rm E}_{#1}[#2]}                  %
\newcommand{\cov}[3]{{\rm Cov}_{#1}[#2,#3]}
\newcommand{\corr}[3]{{\rm Cor}_{#1}[#2,#3]}           %
\newcommand{\var}[2]{{\rm Var}_{#1}[#2]}
\newcommand{\inp}[2]{\langle#1,#2\rangle}             %
\newcommand{\diag}{{\rm diag}}
\newcommand{\ug}[1]{{\rm G}_{#1}}                     %
\newcommand{\tr}{{\rm tr}}                   %
\newcommand{\beliefs}{ \{b_{\alpha}(x_{\alpha}), b_i(x_i) \}_{\alpha \in F, i \in V}  }  %
\newcommand{\beliefsw}{ \{b_{\alpha}(x_{\alpha}), b_i(x_i) \}  }  %
\newcommand{\thetasw}{ \{\fa{\theta}, \theta_i \}  }                                 %
\newcommand{\fgdefn}{H=(V \cup F,\vec{E})}   %
\newcommand{\etea}{e' \rightharpoonup e}
\newcommand{\ete}[2]{#1 \rightharpoonup #2}
\newcommand{\vfe}{\mathfrak{X}(\vec{E})}     %
\newcommand{\vfv}{\mathfrak{X}(V)}           %
\newcommand{\matmu}{\mathcal{M}(\bs{u})}     %
\newcommand{\matmc}{\mathcal{M}(\bs{c})}     %
\newcommand{\ed}[2]{#1 \rightarrow #2}       %
\newcommand{\edai}{\alpha \rightarrow i}
\newcommand{\edbi}{\beta \rightarrow i}
\newcommand{\edbj}{\beta \rightarrow j}
\newcommand{\edgi}{\gamma \rightarrow i}
\newcommand{\edbione}{\beta \rightarrow i_{1}}
\newcommand{\edbik}{\beta \rightarrow i_{k}}
\newcommand{\edij}{i \rightarrow j}
\newcommand{\edji}{j \rightarrow i}
\newcommand{\fai}{\{i_{1},\ldots,i_{d_{\alpha}}\}} %
\newcommand{\prodv}{\prod_{i \in V}}
\newcommand{\prodf}{\prod_{\alpha \in F}}
\newcommand{\Hesse}{\nabla^{2}}    %
\newcommand{\bsu}{\bs{u}}
\newcommand{\bsx}{\bs{x}}
\newcommand{\bsphi}{\bs{\phi}}
\newcommand{\bstheta}{\bs{\theta}}
\newcommand{\bseta}{\bs{\eta}}
\newcommand{\bsmu}{\bs{\mu}}
\newcommand\Bfe{Bethe free energy\xspace}
\newcommand\LBP{Loopy Belief Propagation\xspace}
\newcommand\lbp{loopy belief propagation\xspace}
\newcommand\BP{Belief Propagation\xspace}
\newcommand\ifa{inference family\xspace}
\newcommand\ifas{inference families\xspace}
\newcommand\Ifa{Inference family\xspace}
\newcommand\epara{expectation parameter\xspace}
\newcommand\eparas{expectation parameters\xspace}
\newcommand\npara{natural parameter\xspace}
\newcommand\nparas{natural parameters\xspace}
\newcommand\Bzf{Bethe-zeta formula\xspace}
\newcommand\IB{Ihara-Bass\xspace}
\newcommand\ccm{correlation coefficient matrix\xspace}
\newcommand\ccms{correlation coefficient matrices\xspace}
\newcommand{\fa}[1]{{{#1}_{\alpha}}}     %
\newcommand{\pa}[1]{{{#1}_{\langle \alpha \rangle}}}   %
\newcommand{\pb}[1]{{{#1}_{\langle \beta \rangle}}}
\newcommand{\pij}[1]{{{#1}_{\langle i,j \rangle}}}
\newcommand{\va}[2]{{#1}_{\alpha : #2}}
\begin{document}

\title{Loopy Belief Propagation, Bethe Free Energy and\\ Graph Zeta Function}

\author{\name Yusuke Watanabe \email watay@ism.ac.jp \\
        \addr The Institute of Statistical Mathematics \\
        10-3 Midori-cho, Tachikawa, Tokyo 190-8562, Japan
       \AND
       \name Kenji Fukumizu \email fukumizu@ism.ac.jp \\
       \addr  The Institute of Statistical Mathematics \\
        10-3 Midori-cho, Tachikawa, Tokyo 190-8562, Japan}

\editor{}

\maketitle

\begin{abstract}%
We propose a new approach to the theoretical analysis of
Loopy Belief Propagation (LBP) and the Bethe free energy (BFE)
by establishing a formula to connect LBP and BFE with a graph zeta function.
The proposed approach is applicable to a wide class of models including multinomial and Gaussian types.
The connection derives a number of new theoretical results on LBP and BFE.
This paper focuses two of such topics.  One is the analysis of the region where the Hessian of
the Bethe free energy is positive definite, which derives 
the non-convexity of BFE for graphs with multiple cycles,
and a condition of convexity on a restricted set.
This analysis also gives a new condition for the uniqueness of the LBP fixed point.
The other result is to clarify the relation between the local stability of a fixed point of
LBP and local minima of the BFE, which implies, for example, 
that a locally stable fixed point of the Gaussian LBP is a local minimum of the Gaussian Bethe free energy.
\end{abstract}

\begin{keywords}
loopy belief propagation, graphical models, Bethe free energy, graph zeta function, Ihara-Bass formula
\end{keywords}

\section{Introduction}
Probability density functions that have ``local'' factorization structures, called {\it graphical models},
constitute fundamentals in many fields.
In the fields of statistics, artificial intelligence and machine learning, for example, 
graphical modeling has been a powerful tool for representing our prior knowledge and modeling hidden structures
of problems \citep{Wgraphical,Pearl,Jlearning}.
Other examples are found in statistical physics, %
coding theory, %
and combinatorial optimizations \citep{Pcluster,Turbo,MPZanalytic}.
Typically, such probability distributions are derived from
random variables that only have local interactions/constraints.
This factorization structure is clearly visualized by a graph, called factor graph.

Since the {\it inference problems} on graphical models, such as computation of marginal/conditional density functions and partition functions,
are in general intractable for large graphs, %
{\it \LBP } (LBP) has been proposed as an efficient approximation method applicable to any graph-structured density functions.
Originally, {\it Belief Propagation} (BP) algorithm was proposed by \citet{Pearl} to compute exactly the marginals for 
tree-structured graphical models.
This algorithm passes ``messages'' between vertices of the graph until
all information of the graphical model is distributed throughout the graph.
Some researchers have found that LBP, an extended use of BP for graphs with cycles, shows good approximation with high potential applicability
\citep{MWJempiricalstudy,Turbo}.
After the proposal, many extensions and variants have been studied  \citep{YFWGBP,Sudderth2002,WJWMAP}
and have been applied successfully to many problems, including coding theory, 
image processing, sensor network localization and compressive sensing
\citep{ihler2005nonparametric,Baron2008}.

On the theoretical side, a significant number of studies have been carried out by many authors in this decade. %
One theoretical challenge of LBP is that the algorithm may have many fixed points;
the uniqueness is generally guaranteed only for trees and one-cycle graphs \citep{W1loop}.
The LBP fixed points are the solutions of a nonlinear equation associated with the graph, and
the structure of the equation is more complicated as the number of cycle is larger.
Regarding this problem, a notable result is the variational interpretation of LBP; it shows that
the LBP fixed points are the local minima of the Bethe free energy \citep{YFWGBP,YFWconstructing}.
This suggests that the behavior of LBP is more complex with non-convexity of the Bethe free energy.
Another difficulty of LBP is that the algorithm does not necessarily converge and sometimes shows oscillatory
behaviors.
Concerning the multinomial model (also known as discrete variable model), \citet{MKsufficient} and \citet{IFW} give sufficient
conditions for the convergence in terms of the spectral radius of a certain matrix related to the
graph.
\citet{TJgibbsmeasure} also derive a sufficient condition for convergence, interpreting the convergence as
the uniqueness of the Gibbs measure on the universal covering tree.

The purpose of this paper is to provide a novel discrete geometric approach to analysis of the LBP algorithm.
The starting point of our study is a question:
``How are the behaviors of the LBP algorithm affected by the geometry of the graph?''
If the graph is a tree (L)BP works nicely; it terminates in a finite step at the unique fixed point and gives the exact marginals.
If the graph has only one cycle it also works appropriately;
the algorithm converges to the unique fixed point and
finds the MPM (Maximum Posterior Marginal) assignment in binary variable cases \citep{W1loop}.
Additionally, the \Bfe function is convex in these cases \citep{PAstat}.
Existence of multiple cycles, however, breaks down these nice properties.
There have not been many researches that elucidate the effects of cycles on LBP in detail beyond ``tree or non-tree'' classification.
While a notable exception is the walk-sum analysis by \citet{JMWwalk} and \cite{MJWwalk}, it is limited to the Gaussian case.

This paper proposes a method based on a new connection between LBP, \Bfe, and a graph zeta function. 
Graph zeta functions, originally introduced by \citet{Idiscrete},
are popular graph characteristics defined by the products over the  prime cycles.
We capture the effects of cycles on LBP and \Bfe by establishing a novel formula, called \Bzf, which connects
the Hessian of the Bethe free energy with the graph zeta function.
To derive the formula, we extend the definition of existing graph zeta functions and related Ihara-Bass formula \citep{STzeta1,Bass}.

Our discovery of the connection, including the \Bzf, 
derives new ways of analyzing LBP and the \Bfe function taking the graph geometry into account.  
It is applicable to a wide class of graphical models defined by ``marginally closed'' exponential families, which include multinomial and Gaussian models. 
This paper discusses two examples of such analysis: one is the positive definiteness of Hessian for the Bethe free energy, 
and the other is local stability of the LBP dynamics.

First, based on the connection, we derive conditions that the Hessian of Bethe free energy function is positive definite.
As already discussed, analysis of the Bethe free energy is important for theoretical understanding of the complex behavior of LBP.
As the fundamentals, we consider the local properties of the Bethe free energy by elucidating the positive definiteness of its Hessian, 
while there are many studies on modifications and convexifications of the \Bfe function \citep{WHfractional,WJWtree,WYM}.
The direct consequence of our analysis is a sufficient condition of the uniqueness of the LBP fixed point, 
which is derived by giving a condition of global convexity.
In discussing the positive definiteness, 
we consider two defining domains of the Bethe free energy: one is given by the locally consistent pseudomarginals, 
and the other is a more restricted set conditioned by the compatibility functions of given graphical model.  
The beliefs given by LBP always lie in the latter domain.
We show that, when considered in the former domain, 
the necessary and sufficient condition for the Hessian to be positive definite is that the underlying factor graph has no more than one cycle.  
We also give a sufficient condition of the convexity of Bethe free energy on the latter domain, 
which implies the uniqueness of the LBP fixed point.  
By numerical examples, we demonstrate that our new uniqueness condition covers a wider region than the one given by \citet{MKsufficient} for the examples.

In the second application, we clarify a relation between the local structure of the Bethe free energy function
and the local stability of a LBP fixed point.
Such a relation is not necessarily obvious, since LBP is not derived as the gradient descent of the Bethe free energy.
In this line of studies, for multinomial models \citet{Hstable} shows that a locally stable fixed
point of LBP is a local minimum of the Bethe free energy.
We give conditions of the local stability of LBP and the positive definiteness of the Bethe
free energy in terms of the eigenvalues of a matrix that appears in the graph zeta function.
As a consequence, the result by Heskes is extended to a wider class including Gaussian distributions.

This paper is organized as follows.
In section 2, we introduce graphical models, LBP and the Bethe free energy as preliminaries.
We formulate the setting in terms of exponential families.
Section 3 includes the definition of a new class of graph zeta function, the extension of Ihara-Bass formula, and related results.
Using these results, Section 4 shows the fundamental results of this paper, 
\Bzf and positive definiteness condition, in Theorems~\ref{thm:BZ} and \ref{thm:positive}.
Section 5 derives a positive-definite region of the Bethe free energy function, and discusses convexity.
In section 6, we elucidate the relations between the stability of LBP
and the local structure of the \Bfe at LBP fixed points.
Section 7 includes discussion and concluding remarks.
Proofs omitted from the main body of the paper are given in the appendices.

\section{Preliminaries}

In this section we summarize a background of graphical models and LBP.
In Subsection~\ref{sec:prob} we introduce graphical models in terms of hypergraphs.
Subsection~\ref{sec:LBP} introduces LBP algorithm.
The Bethe free energy, which provides alternative language for formulating LBP algorithm,
is discussed in Subsection~\ref{sec:BFE}.

\subsection{Graphical models}\label{sec:prob}
We begin with basic definitions of hypergraphs because the associated structures with
graphical models are, precisely speaking, hypergraphs.

An ordinary {\it graph} $G=(V,E)$ consists of the vertex set $V$ joined by edges of $E$.
Generalizing the notion of graphs, hypergraphs are defined as follows.
A {\it hypergraph} $H=(V,F)$ consists of a set of {\it vertices} $V$ and a set of {\it hyperedges} $F$.
A hyperedge is a non-empty subset of $V$.
For any vertex $i \in V$, the {\it neighbors} of $i$ is defined by
$N_i:=\{\alpha \in F | i \in \alpha \}$.
Similarly, for any hyperedge $\alpha \in F$, the neighbors of $\alpha$ is defined by
$N_{\alpha}:=\{ i \in V | i \in \alpha \}=\alpha$.
The {\it degrees} of $i$ and $\alpha$ are given by
$d_i:=|N_i|$ and $d_{\alpha}:=|N_{\alpha}|=|\alpha|$, respectively.
If all the degrees of hyperedges are two, then the hypergraph is naturally identified with an ordinary graph.

In order to describe the message passing algorithm in Subsection \ref{sec:basicLBP},
it is convenient to identify a relation $i \in \alpha$ with a directed edge $\edai$.
For example, let
$H=(\{1,2,3,4\},\{\alpha_1,\alpha_2,\alpha_3\})$,
where $\alpha_1=\{1,2\}$, $\alpha_2=\{1,2,3,4\}$ and $\alpha_3=\{4\}$;
this hypergraph is shown as a directed graph in Fig.~\ref{fig:Dhypergraph}.
Explicitly writing the set of directed edges $\vec{E}$, a hypergraph $H$ is also denoted by $\fgdefn$.
Note that, forgetting the edge directions, $H$ is also represented as a bipartite graph
(Fig.~\ref{fig:Bhypergraph}).

We define basic notions of hypergraphs via its corresponding bipartite graphs.
A hypergraph $H$ is {\it connected} (resp. {\it tree}) if the corresponding bipartite graph is
connected (resp. tree).
In the same way, the number of connected components (resp. nullity) of $H$ is defined and denoted by $k(H)$ (resp. $n(H)$).
Therefore, $n(H):=|V|+|F|-|\vec{E}|$ and a hypergraph $H$ is a tree if and only if $n(H)=0$ and $k(H)=1$.

\begin{figure}
\begin{minipage}{.5\linewidth}
\begin{center}
\vspace{1mm}
\includegraphics[scale=0.3]{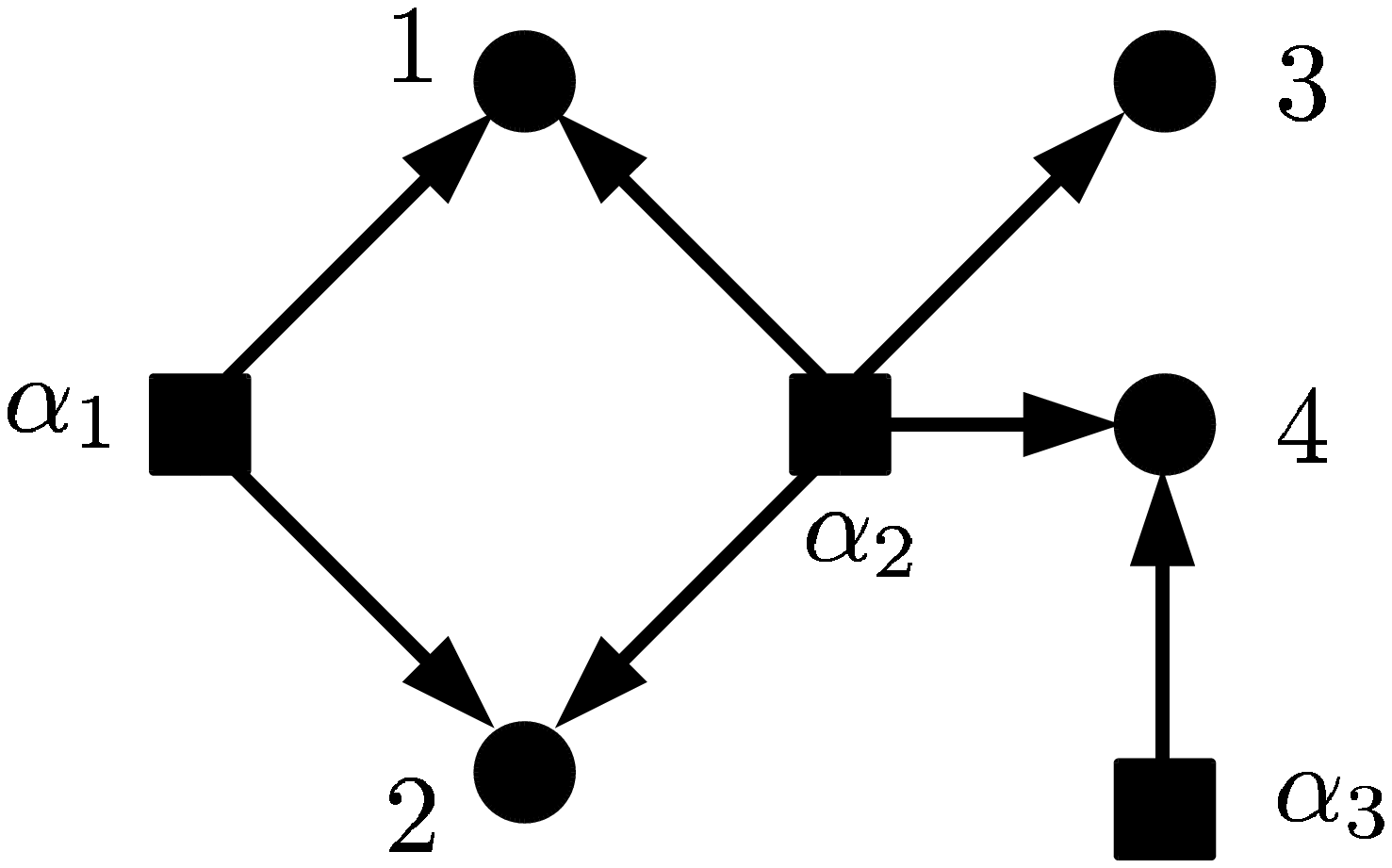}
\vspace{-2mm}
\caption{Directed graph representation. \label{fig:Dhypergraph}}
\end{center}
\end{minipage}
\begin{minipage}{.5\linewidth}
\begin{center}
\includegraphics[scale=0.3]{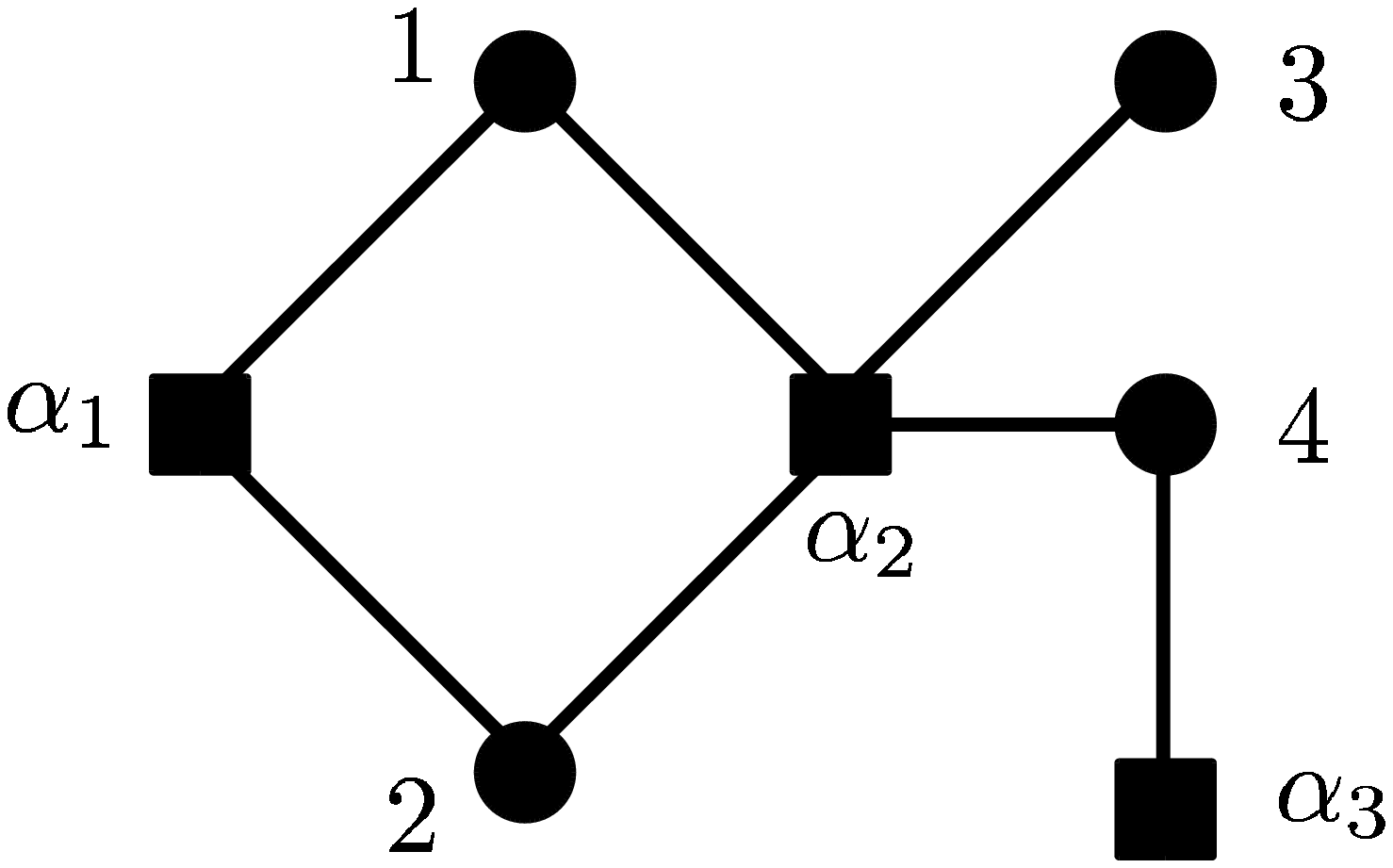}
\vspace{-1mm}
\caption{Bipartite graph representation. \label{fig:Bhypergraph}}
\end{center}
\end{minipage}
\end{figure}

Our primary interest is probability density functions
that have factorization structures represented by hypergraphs.
In such situations, a hypergraph is often referred to as a {\it factor graph} and a hyperedge as a {\it factor}.

\begin{defn}
Let $H=(V,F)$ be a hypergraph.
For each $i \in V$,
let $x_i$ be a variable that takes values in a set $\mathcal{X}_i$.
A probability density function $p$ on $x=(x_i)_{i \in V}$ is said to be {\it graphically factorized}
with respect to $H$ if it has the following factorized form
\begin{equation}
p(x)=\frac{1}{Z}
\prod_{\alpha \in F} \Psi_{\alpha}(x_{\alpha}), \label{defp}
\end{equation}
where $x_{\alpha}=(x_i)_{i \in \alpha}$, $Z$ is the normalization constant
and $\Psi_{\alpha}$ are positive
valued functions called {\it compatibility functions}.
A set of compatibility functions, giving a graphically factorized density function,
is called a {\it graphical model}.
The associated hypergraph $H$ is called the {\it factor graph} of the graphical model.
\end{defn}
Factor graphs are introduced by \citet{KFLfactor}.
Any probability density function on $\mathcal{X}= \prod_{i \in V} \mathcal{X}_i$ is trivially
graphically factorized with respect to the ``one-factor hypergraph'', where the unique factor includes all vertices.
It is more informative if the factorization involves factors of small size.
Our implicit assumption throughout this paper is that
for all factors $\alpha$, $\mathcal{X}_{\alpha}=\prod_i \mathcal{X}_i$ are small
enough, in the sense of cardinality or dimension, to be handled efficiently by computers.

\subsection{Loopy Belief Propagation algorithm}\label{sec:LBP}
Given a graphical model, our task is to solve inference problem such as
computation of marginal/conditional density functions and the partition function.
\BP (BP) efficiently computes the exact marginals of
a joint distribution that is factorized according to a tree-structured factor graph; %
\LBP (LBP) is a heuristic application of the algorithm for factor graphs with cycles,
showing successful performance in various problems.

First, in Subsection \ref{sec:expfamily}, we introduce a collection of exponential families
called {\it inference family} to formulate the LBP algorithm.
In order to perform inferences using LBP, we have to fix an \ifa that ``includes'' the given graphical model.
Our formulation is a variant of the approach by \citet{Wrepara}, where over-complete sufficient statistics are exploited.
The detail of the LBP algorithm is described in Subsections \ref{sec:basicLBP}.

\subsubsection{Exponential families and \Ifa}\label{sec:expfamily}
To clarify notations, here we summarize basic facts on exponential families.
Let ($\mathcal{X},\mathcal{B},\nu)$ be a measure space.
For given $n$ real valued functions ({\it sufficient statistics})
${\phi}(x)=(\phi_1(x),\ldots,\phi_n(x))$, an {\it exponential family} is given by
\begin{equation*}
 p(x;{\theta})=
\exp \left(
\sum_{i=1}^{N} \theta_i \phi_i(x)  - \psi({\theta})
\right), \qquad \quad
\psi({\theta}):= \log \int \exp \left( \sum_{i=1}^{N} \theta_i \phi_i(x) \right) {\rm d} \nu(x).
\end{equation*}
The {\it natural parameter}, ${\theta}$,
ranges over the set %
$\Theta:= {\rm int} \{{\theta} \in \mathbb{R}^{N};
\psi({\theta}) < \infty
\}$, where int denotes the interior of the set.
The function $\psi({\theta})$ is called the {\it log partition function}.
We always assume that the Hessian of this function (i.e. the covariance matrix) is invertible.
The derivative of the log partition function gives a bijective map
\begin{equation*}
 \Lambda : \Theta \ni \theta \longmapsto
\pd{\psi}{\theta}({\theta}) = \E{p_{\bstheta}}{\bsphi}
 \in Y := \Lambda(\Theta)
\end{equation*}
and this alternative parameter $\eta= \pd{\psi}{\theta}({\theta})$ is called
{\it expectation parameter}.
The inverse of this map is given by the derivative of the Legendre transform
$\varphi(\bseta) = \sup_{\bstheta \in \Theta} (  \sum_i \theta_i \eta_ i - \psi(\bstheta))
 =\E{p_{\Lambda^{-1} (\bseta)}}{\log p_{\Lambda^{-1}(\bseta)}}$.

\begin{ex}[Multinomial distributions]
\label{example:multinomial}
 Let $\mathcal{X}=\{0,\ldots,N-1\}$ be a finite set
with the uniform base measure.
One way of taking sufficient statistics is
\begin{equation}
 \phi_k(x)=
\begin{cases}
 1 \text{ \quad  if } x=k \\
 0 \text{ \quad  otherwise}
\end{cases}
\end{equation}
for $k=0,\ldots,N-2$.
Then the given exponential family is called {\it multinomial distributions} and
coincide with the all probability density functions on $\mathcal{X}$
that has positive probabilities for all elements of $\mathcal{X}$.
The region of \nparas is $\Theta=\mathbb{R}^{N-1}$ and
the of \eparas is the interior of the probability simplex.
That is, $ Y=\{ (y_1,\ldots,y_N); \sum_{k=1}^{N}y_k=1, y_k > 0 \}$.
\end{ex}

\begin{ex}[Gaussian distributions]
Let $\mathcal{X}=\mathbb{R}^{n}$ with the Lebesgue measure
and
The exponential family given by the sufficient statistics
$\phi(x)=(x_i,x_j x_k)_{1\leq i \leq n, 1 \leq j \leq k \leq n}$,
is called {\it Gaussian distributions}, consists of probability density functions of the form
\begin{equation}
 p(\bsx;\theta)= \exp \big(
\sum_{i \leq j}\theta_{ij}x_i x_j + \sum_{i}\theta_i x_i
-\psi (\theta)
\big). \nonumber
\end{equation}
\end{ex}

\begin{ex}[Fixed-mean Gaussian distributions]
For a given mean vector $\bsmu=(\mu_i)$,
the {\it fixed-mean Gaussian distributions} is the exponential family obtained by the sufficient
statistics $\phi(x)=\{(x_i-\mu_i)(x_j-\mu_j)\}_{1 \leq i \leq j \leq n}$.

\end{ex}

Here and below, we construct a set of exponential families.
In order to perform inferences using LBP for
a given graphical model, we have to fix a ``family'' that includes the given probability density function.

Let $H=(V,F)$ be a hypergraph.
First, for each vertex $i$, we consider an exponential family $\mathcal{E}_i$ with a sufficient statistic $\phi_i$
and a base measure $\nu_i$ on $\mathcal{X}_i$.
A \npara, \epara, the log partition function and its Legendre transform are denoted by $\theta_i$, $\eta_i$,
$\psi_i$ and $\varphi_i$ respectively.
Secondly, for each factor $\alpha=\fai$,
we give an exponential family $\mathcal{E}_{\alpha}$ on $\mathcal{X}_{\alpha}= \prod_{i \in \alpha} \mathcal{X}_i$
with the base measure $\nu_{\alpha}=\prod_{i \in \alpha} \nu_i$ and a sufficient statistic $\fa{\phi}$ of the form
\begin{equation}
 \fa{\phi}(x_{\alpha})=(\pa{\phi}(x_{\alpha}),\phi_{i_1}(x_{i_1}),\ldots,\phi_{i_{d_{\alpha}}}(x_{i_{d_{\alpha}}}) ).
\end{equation}
An important point is that $\fa{\phi}$ includes the sufficient statistics for $i \in \alpha$ as its components in addition to $\pa{\phi}$ indexed by $\alpha\in F$.
The \npara, \epara, log partition function and its Legendre transform are denoted by
\begin{equation}
 \fa{\theta}=(\pa{\theta},\va{\theta}{i_1},\ldots,\va{\theta}{ i_{d_{\alpha}}} ) \in \fa{\Theta}, \quad
 \fa{\eta}=(\pa{\eta},\va{\eta}{i_1},\ldots,\va{\eta}{i_{d_{\alpha}}}) \in \fa{Y}, \quad
 \psi_{\alpha} \text{ and } \varphi_{\alpha}.
\end{equation}

The following assumption is indispensable to our analysis:
\begin{asm}\label{asm:invetible}
For all $i \in V$ and $\alpha \in F$,
we assume that the Hessian of the log partition functions
, $\psi_i$ and $\psi_{\alpha}$,
(i.e. the covariance matrix) are invertible in the parameter spaces.
\end{asm}

In order to use these exponential families $\mathcal{E}_{\alpha}$ and $\mathcal{E}_i$ for LBP, we need another assumption:
the family is ``closed'' under marginalization operation.
This type of condition on exponential families is also considered in other litterateurs \citep{Mardia2009}.
\begin{asm}
[Marginally closed assumption] \label{asm:marginallyclosed}
For all pair of $i \in \alpha$,
\begin{equation}
 \int  p(x_{\alpha}) {\rm d} \nu_{\alpha \smallsetminus i} ( x_{\alpha \smallsetminus i})
\in \mathcal{E}_i \quad \text{ for all } p \in \mathcal{E}_{\alpha}.
\end{equation}
\end{asm}

\begin{defn}
\label{defn:ifa}
If a collection of the exponential families $\mathcal{I}:=\{\mathcal{E}_{\alpha}, \mathcal{E}_i \}$
given by sufficient statistics $( \pa{\phi}(x_{\alpha}), \phi_i (x_i) )_{ \alpha \in F,i \in V}$ as above 
satisfies Assumptions \ref{asm:invetible} and \ref{asm:marginallyclosed}, it
is called an {\it \ifa} associated with a hypergraph $H$.
An \ifa is called {\it pairwise} if the associated hypergraph is a graph.
\end{defn}
An \ifa has a parameter set $\Theta=\prod_{\alpha} \fa{\Theta} \times \prod_i \Theta_i$, which is bijectively mapped
to the dual parameter set $Y=\prod_{\alpha} \fa{Y} \times \prod_i Y_i$ by the maps of respective components.
An \ifa naturally defines an exponential family on $\mathcal{X}=\prod_i \mathcal{X}_i$
of the sufficient statistic $( \pa{\phi}(x_{\alpha}), \phi_i (x_i) )_{ \alpha \in F,i \in V}$.
We denote it by $\mathcal{E}(\mathcal{I})$.

\begin{ex}[Binary pairwise inference family]
Consider the case that a graph $G=(V,E)$ is the factor graph.
For each $i \in V$, we define an exponential family $\mathcal{E}_i$ on $\mathcal{X}_i=\{0,1\}$
defined by $\phi_i(x_i)=x_i$.
For each $\{i,j\} \in E$, we also define multinomial exponential family $\mathcal{E}_{\{i,j\}}$ on $\{0,1\}^2$
by $\phi_{\{i,j\}}(x_i,x_j)=(x_i,x_j,x_i x_j)$, where $\pij{\phi}(x_i)=x_i x_j$.
Then these exponential family gives an \ifa
since Assumption \ref{asm:marginallyclosed} is trivially satisfied.
\end{ex}

\begin{ex}[Multinomial inference family]
Let $\mathcal{E}_i$ be an exponential family of multinomial distributions.
Choosing functions $\pa{\phi}(x_{\alpha})$, 
we can make the $\mathcal{E}_{\alpha}$
being multinomial distributions on $\mathcal{X}_{\alpha}$;
more precisely, we choose $\pa{\phi}(x_{\alpha})$ so that 
the components of $\phi_{\alpha}(x_\alpha)$, which are regarded as $\prod_{i} |\mathcal{X}_i|$ dimensional vectors,
are linearly independent.
Then we obtain an \ifa called a {\it multinomial inference family}.
\end{ex}

\begin{ex}[Gaussian  inference family]
We consider the case\footnote{Extensions to high dimensional case, i.e. $\mathcal{X}_i=\mathbb{R}^{r_i}$, is straight forward.}
that $\mathcal{X}_i = \mathbb{R}$ .
For Gaussian case, given a factor graph $H=(V,F)$,
the sufficient statistics are given by
\begin{equation}
 \phi_{i}(x_i)=(x_i,x_i^2), \qquad
 \pa{\phi}(x_{\alpha})=(x_i x_j)_{i , j \in \alpha, i \neq j}. \nonumber
\end{equation}
Then the \ifa is called {\it Gaussian \ifa}.
Assumption \ref{asm:marginallyclosed} is satisfied because a marginal of a Gaussian density function is a Gaussian density function.
Fixed-mean \ifa is analogously defined by $\phi_{i}(x_i)=(x_i - \mu_i )^2$ and
$\pa{\phi}(x_{\alpha})=((x_i -\mu_i)(x_j - \mu_j))_{i , j \in \alpha, i \neq j}$.
Usually, for Gaussian cases, the factor graph $H$ is a graph rather than hypergraphs;
thus, we only consider Gaussian \ifas on graphs.
\footnote{Extensions to the cases of hypergraphs are also straightforward.}
\end{ex}

\subsubsection{LBP algorithm}\label{sec:basicLBP}
The LBP algorithm calculates the approximate marginals of a given graphical model $\Psi = \{ \Psi_{\alpha}\}$
using the inference family \ifa $\mathcal{I}$.
We always assume that the \ifa includes the given probability density function:
\begin{asm}
\label{asm:modelindludes}
For every factor $\alpha \in F$, there exists $\fa{\bar{\theta}}$ s.t.
\begin{equation}
 \Psi_{\alpha}(x_{\alpha}) = \exp \left( \inp{\fa{\bar{\theta}} }{ \fa{\phi} (x_{\alpha})}  \right). \label{eq:asm:modelindludes}
\end{equation}
\end{asm}
This is equivalent to the assumption
\begin{equation}
p(x)=  \frac{1}{Z} \prod_{\alpha} \Psi_{\alpha}(x_{\alpha}) \in \mathcal{E}(\mathcal{I})
\end{equation}
up to trivial re-scaling of $\Psi_{\alpha}$, which does not affect LBP algorithm.

The procedures of the LBP algorithm is as follows \citep{KFLfactor}.
For each pair of a vertex $i \in V$ and a factor $\alpha \in F$ satisfying $i \in \alpha$,
an initialized message is given in the form of
\begin{equation}
 m_{\edai }^{0}(x_i) = \exp ( \inp{\mu_{\edai}^{0}}{\phi_i(x_i)} ), \label{messageform}
\end{equation}
where the choice of $\mu_{\edai}^{0}$ is arbitrary.
The set $\{  m_{\edai }^{0} \}$ or $\{ \mu_{\edai}^{0}\}$ is called an {\it initialization} of the LBP algorithm.
At each time $t$, the messages are updated by the following rule:
\begin{equation}
m^{t+1}_{\edai}(x_i)
=\omega
\int
\Psi_{\alpha}(x_{\alpha})
\hspace{-1mm}
\prod_{j \in \alpha, j \neq i}
\prod_{\beta \ni j, \beta \neq \alpha}
\hspace{-1mm} m^{t}_{\edbj}(x_j)
\hspace{1mm} {\rm d}\nu_{\alpha \smallsetminus i}({x_{\alpha \smallsetminus i}})
\qquad (t \geq 0), \label{LBPupdate}
\end{equation}
where $\omega$ is a certain scaling constant.\footnote{
Here and below, we do not care about the integrability problem.
For multinomial and Gaussian cases, there are no problems.}
See Fig~\ref{fig:LBPupdate} for the illustration of this message update scheme.
From Assumptions \ref{asm:marginallyclosed} and \ref{asm:modelindludes},
the messages keep the form of Eq.~(\ref{messageform}).
\begin{figure}
\begin{center}
\includegraphics[scale=0.25]{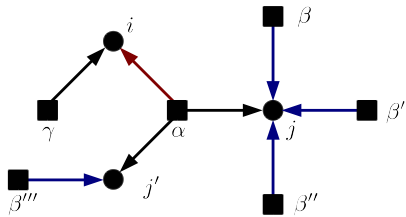}
\vspace{-2mm}
\caption{The blue messages contribute to the red message at the next time step. \label{fig:LBPupdate}}
\end{center}
\end{figure}

Since this update rule simultaneously generates all messages of time $t+1$ by those of
time $t$, it is called a {\it parallel update}.
Another possibility of the update is a {\it sequential update}, where, at each time step,
one message is chosen according to some prescribed or random order of directed edges.
In this paper, we mainly discuss the parallel update.

We repeat the update Eq.~(\ref{LBPupdate}) until the messages converge to a fixed point,
though this procedure is not guaranteed to converge.
Indeed, it sometimes exhibits oscillatory behaviors. %
The set of LBP fixed points does not depend on the choices of the update rule,
but converging behavior, or {\it dynamics}, does depend on the choices.

If the algorithm converges, we obtain the fixed point messages $\{m^{*}_{\edai}\}$
and {\it beliefs} %
that are defined by
\begin{align}
&b_{i}(x_i):= \omega
\prod_{\alpha \ni i} m_{\edai}^{*}(x_i) \label{eq:defbelief1}\\
&b_{\alpha}(x_{\alpha})
:=\omega
\Psi_{\alpha}(x_{\alpha})
\prod_{j \in \alpha }
\prod_{\beta \ni j, \beta \neq \alpha} m^{*}_{\edbj}(x_j),  \label{eq:defbelief2}
\end{align}
where $\omega$ denotes (not necessarily the same) normalization constants that require
\begin{equation}
 \int b_i(x_i) {\rm d} \nu_i =1 \quad \text{ and } \quad \int b_{\alpha}(x_{\alpha}) {\rm d} \nu_{\alpha}=1.
\end{equation}
Note that beliefs automatically satisfy the conditions $b_{\alpha}(x_{\alpha}) > 0$ 
and
\begin{equation}
\int b_{\alpha}(x_{\alpha}) {\rm d} \nu_{\alpha \smallsetminus i}({x_{\alpha \smallsetminus i}}) =b_i(x_i). \label{eq:localconsistency}
\end{equation}
The beliefs are used for approximation of the true marginal density functions.

If $H$ is a tree,
the LBP algorithm stops at most $|\vec{E}|$ updates and
the computed beliefs are equal to the exact marginals of the given density function.

\subsection{Bethe free energy and characterization of LBP fixed points}\label{sec:BFE}
The Bethe approximation was initiated by \citet{Bethe}
and was found to be essentially equivalent to LBP by \citet{YFWGBP}.
The modern formulation for presenting the approximation is a variational problem
of the {\it Bethe free energy} \citep{Anote}.
In this subsection, we summarize these facts in our settings.

First, we should introduce the Gibbs free energy function
because the Bethe free energy function is a computationally tractable approximation of the Gibbs free energy function.
For given graphical model $\Psi=\{ \Psi_{\alpha}\}$, the {\it Gibbs free energy} $F_{Gibbs}$ is
a convex function over the set of probability distributions $\hat{p}$ on $x=(x_i)_{i \in V}$ defined by
\begin{equation}
 F_{Gibbs}(\hat{p})= \int \hat{p}(x) \log \left(
\frac{\hat{p}(x)}{\prod_{\alpha} \Psi_{\alpha}(x_{\alpha})}
\right) {\rm d}\nu(x), \label{def:GibbsFE}
\end{equation}
where $\nu = \prod_{i \in V} \nu_i$ is the base measure on $\mathcal{X}= \prod_{i \in V} \mathcal{X}_i$.
Using Kullback-Leibler divergence $D(q||p)=\int\hat{p}\log(q/p)$,
Eq.~(\ref{def:GibbsFE}) comes to
$
 F_{Gibbs}(\hat{p})= D(\hat{p}||p) - \log Z.
$
Therefore, the exact density function Eq.~(\ref{defp})
is characterized by a variational problem
\begin{align}
 p(x)= \argmin_{\hat{p}} F_{Gibbs}( \hat{p} ), \label{eq:Gibbsvariation}
\end{align}
where the minimum is taken over
all probability distributions on $x$.
As suggested from the name of ``free energy'',
the minimum value of this function is equal to $- \log Z$.

\label{sec:twoBfes}
In many cases including discrete variables, computing values of the Gibbs free energy function is intractable in general
because the integral in Eq.~(\ref{def:GibbsFE}) is indeed a sum over $|\mathcal{X}|= \prod_i |\mathcal{X}_i|$ states.
We introduce functions called \Bfe that does not include such an exponential number of state sum.

\begin{defn}
The \Bfe (BFE) function is a function of \eparas.
For a given \ifa $\mathcal{I}$,
define
$L(\mathcal{I}): = \{ \bs{\eta}=\{\fa{\eta},\eta_{i}\} \in Y | \va{\eta}{i}=\eta_i \ {}^{\forall}( i \in \alpha) \}$
\footnote{We often write $L(\mathcal{I})$ as $L$ when $\mathcal{I}$ is obvious from the context.
Since $Y=\prod_{\alpha} \fa{Y} \times \prod_i Y_i$ is convex, $L$ is a convex set.
If the \ifa is multinomial, the closure of this set is called {\it local polytope} \citep{WJgraphical,WJvariational}.}.
On this set, the \Bfe function is defined by
\begin{equation}
 F(\bs{\eta}):=
-\sum_{\alpha \in F} \inp{\fa{\bar{\theta}}}{ \fa{\eta} } +
\sum_{\alpha \in F}\varphi_{\alpha}(\fa{\eta} ) +
\sum_{i \in V} (1-d_i)\varphi_i(\eta_i), \label{defn:Bfe}
\end{equation}
where $\fa{\bar{\theta}}$ is the \npara of $\Psi_{\alpha}$ in Eq.~(\ref{eq:asm:modelindludes}).
\end{defn}
An expectation parameter specifies a probability density function in the exponential family.
Thus, $\bseta \in Y$ specifies $\beliefs$, where
$b_{\alpha}(x_{\alpha}) \in \mathcal{E}_{\alpha}$ and $b_{i}(x_{i}) \in \mathcal{E}_{i}$.
The constraint $\va{\eta}{i}=\eta_i$ means that
\begin{equation*}
 \int \phi_i(x_i) b_{\alpha}(x_{\alpha}) {\rm d} \nu_{\alpha} = \int \phi_i(x_i) b_i(x_i)  \nu_{i}.
\end{equation*}
Under Assumption \ref{asm:modelindludes}, this condition is equivalent to
$\int b_{\alpha}(x_{\alpha}) {\rm d} \nu_{\alpha \smallsetminus i} = b_i(x_i)$
because a probability density function in $\mathcal{E}_i$ is specified by the expectation of $\phi_i(x_i)$.
An element of $L$ is called a set of {\it pseudomarginals}. %
Therefore, we have the following identification
\begin{equation*}
 L = \Big\{ \beliefs | ~b_{\alpha}(x_{\alpha}) \in \mathcal{E}_{\alpha}, ~b_{i}(x_{i}) \in \mathcal{E}_{i}
 \quad and \int b_{\alpha}(x_{\alpha}) {\rm d} \nu_{\alpha \smallsetminus i} = b_i(x_i) \Big\}.
\end{equation*}
The second condition is called {\it local consistency}.
Under this identification, the \Bfe function is
\begin{align}
 F(\beliefsw)= -\sum_{\alpha \in {F}} \int
 b_{\alpha}(x_{\alpha})\log\Psi_{\alpha}(x_{\alpha})   {\rm d}\nu_{\alpha}
& + \sum_{\alpha \in {F}} \int  b_{\alpha}(x_{\alpha})\log b_{\alpha}(x_{\alpha}) {\rm d} \nu_{\alpha} \nonumber \\
&+ \sum_{i \in V}(1-d_i) \int b_i(x_i)\log b_i(x_i) {\rm d}\nu_i. \nonumber
\end{align}

If $H$ is a tree, the variational problem of the Bethe free energy over $L$
is equivalent to that of the Gibbs free energy in the following sense.
See \citet{WJgraphical} for more details.
First, it can be shown that, for any $\beliefsw \in L$,
\begin{equation}
 \Pi(\beliefsw) :=\prod_{\alpha}b_{\alpha}(x_{\alpha}) \prod_{i} b_i(x_i)^{1-d_i}   \label{factorb}
\end{equation}
is a probability density function because it is summed up to one.
For these type of density functions, we can see that the Gibbs free energy function is equal to
the \Bfe function: $F = F_{Gibbs}\circ \Pi$.
Secondly, it is also known that the true density function $p$ for a tree has the factorization of the form Eq.~(\ref{factorb}).
Therefore, the variational problem Eq.~(\ref{eq:Gibbsvariation}) reduces to that of the \Bfe function over $L$.

For general factor graphs,
the Bethe variational problem approximates the Gibbs variational problem and
a minimizer of the Bethe problem can be used to approximate the marginal density function.
As shown by \cite{PAstat}, the \Bfe function is convex if the factor graph has at most one cycle.
Therefore, the minimization of the \Bfe is easy for these cases.
In general, however, the convexity of $F$ is broken as the nullity of the underlying factor graph becomes large,
yielding multiple minima.
Though the functions $\varphi_{\alpha}$ and $\varphi_i$ are convex,
the negative coefficients $(1-d_i)$ makes the function $F$ complex.
The positive-definiteness of the Hessian of the Bethe free energy will be analyzed in
Section~\ref{sec:key} and \ref{sec:pdconv}.

\label{sec:LBPcharacterizations}
The \Bfe function gives an alternative description of the LBP fixed points.
The following fact is shown by \citet{YFWGBP}; LBP finds a stationary point of the \Bfe function,
which is a necessary condition of the minimality.
We give the proof in our term in Appendix~\ref{app:DetailProofs}.

\begin{thm}
 \label{thm:LBPcharacterizations}
Let $\mathcal{I}$ be an \ifa and $\Psi=\{\Psi_{\alpha}\}$ be a graphical model.
The following sets are naturally identified each other.
\begin{enumerate}
 \item The set of fixed points of \lbp.
 \item The set of stationary points of $F$ over $L(\mathcal{I})$.
\end{enumerate}
\end{thm}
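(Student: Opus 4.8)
The plan is to realize the stationary points of $F$ as the solutions of a constrained optimization problem and then match the resulting Lagrange conditions with the LBP fixed-point equations, using the Legendre duality of the exponential families to pass between expectation and natural parameters. First I would introduce, for each directed edge $\edai$, a Lagrange multiplier $\lambda_{\edai}$ (dual to $\phi_i$) enforcing the local-consistency constraint $\va{\eta}{i}=\eta_i$ that cuts out $L(\mathcal{I})$, and form the Lagrangian $F(\bseta)-\sum_{\alpha}\sum_{i\in\alpha}\inp{\lambda_{\edai}}{\va{\eta}{i}-\eta_i}$. Differentiating and using the duality identities $\partial\varphi_\alpha/\partial\fa{\eta}=\fa{\theta}$ and $\partial\varphi_i/\partial\eta_i=\theta_i$ recorded in the preliminaries, the stationarity conditions become $\pa{\theta}=\pa{\bar{\theta}}$, $\va{\theta}{i}=\va{\bar{\theta}}{i}+\lambda_{\edai}$, and $(d_i-1)\theta_i=\sum_{\alpha\ni i}\lambda_{\edai}$, where $\fa{\theta}$ and $\theta_i$ are the natural parameters of the beliefs $b_\alpha$ and $b_i$ specified by $\bseta$.

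The core of the argument is a dictionary between the multipliers and the log-messages. Reading off the natural parameters of the beliefs defined by Eqs.~(\ref{eq:defbelief1})--(\ref{eq:defbelief2}), a message family $\{m_{\edai}=\exp(\inp{\mu_{\edai}}{\phi_i})\}$ yields $\theta_i=\sum_{\alpha\ni i}\mu_{\edai}$, $\pa{\theta}=\pa{\bar{\theta}}$, and $\va{\theta}{i}=\va{\bar{\theta}}{i}+\sum_{\beta\ni i,\beta\neq\alpha}\mu_{\edbi}$. Comparing with the stationarity conditions suggests the correspondence $\lambda_{\edai}=\sum_{\beta\ni i,\beta\neq\alpha}\mu_{\edbi}$, equivalently $\mu_{\edai}=\theta_i-\lambda_{\edai}$; a short counting check confirms $\sum_{\alpha\ni i}\lambda_{\edai}=(d_i-1)\theta_i$ and $\sum_{\beta\ni i}\mu_{\edbi}=\theta_i$, so the two sets of relations are interchangeable.

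For the forward direction (LBP fixed point $\Rightarrow$ stationary point), I would take fixed-point messages $\{m^*_{\edai}\}$, form the beliefs, and first verify they lie in $L$: integrating $b_\alpha$ over $x_{\alpha\smallsetminus i}$ and pulling out the factor $\prod_{\beta\ni i,\beta\neq\alpha}m^*_{\edbi}(x_i)$ that is constant in $x_{\alpha\smallsetminus i}$, the fixed-point update collapses the remaining integral to $m^*_{\edai}(x_i)$, which is exactly the local-consistency relation Eq.~(\ref{eq:localconsistency}), so $\bseta\in L$. Setting $\lambda_{\edai}=\sum_{\beta\ni i,\beta\neq\alpha}\mu^*_{\edbi}$ then makes the stationarity conditions hold via the dictionary. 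For the converse, given a stationary point with multipliers $\lambda$, I would define messages by $\mu_{\edai}:=\theta_i-\lambda_{\edai}$; the dictionary shows the beliefs reconstructed from these messages carry exactly the natural parameters $\fa{\theta}$ and $\theta_i$ of the stationary point, and running the integration computation in reverse turns the local-consistency relation $\int b_\alpha\,\di\nu_{\alpha\smallsetminus i}=b_i$ back into the fixed-point message-update equation.

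The main obstacle I anticipate is making ``naturally identified'' precise as a genuine bijection rather than a mere pair of maps, which rests on two technical points. First, at a stationary point the multipliers $\lambda_{\edai}$ must be shown to be uniquely determined; this follows because the active constraints $\va{\eta}{i}=\eta_i$ have linearly independent gradients, one per directed edge in distinct coordinates. Second, the messages carry a normalization (gauge) freedom hidden in the scaling constants $\omega$, so the belief-to-message assignment is well-defined only after quotienting by this normalization; I would therefore phrase the identification at the level of beliefs and pseudomarginals to absorb the ambiguity, and check that $\mu_{\edai}=\theta_i-\lambda_{\edai}$ recovers the messages uniquely up to that scaling.
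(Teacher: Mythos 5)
Your proposal is correct and follows essentially the same route as the paper: the stationarity conditions you derive via Lagrange multipliers ($\pa{\theta}=\pa{\bar\theta}$ and $(d_i-1)\theta_i=\sum_{\alpha\ni i}(\va{\theta}{i}-\va{\bar\theta}{i})$) are exactly the ones the paper writes down, and your dictionary $\mu_{\edai}=\theta_i-\lambda_{\edai}$ is precisely the paper's correspondence $m_{\edai}(x_i)=\exp(\inp{\theta_i+\va{\bar\theta}{i}-\va{\theta}{i}}{\phi_i})$. The only difference is presentational: the paper states the stationarity conditions directly and keeps the multipliers implicit, whereas you make the Lagrangian and the gauge/normalization bookkeeping explicit.
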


\section{Graph zeta function}

The aim of this section is to introduce the graph zeta function and develop some results, which are used in the later sections.

{\it Ihara's graph zeta function}
was originally introduced by Y.~\cite{Idiscrete} for a certain algebraic object, 
and was abstracted and extended to be defined on arbitrary finite graphs
by J.~P.~\cite{Strees}, \cite{SL-functions} and \cite{Bass}.
{\it The edge zeta function} is a multi-variable generalization of Ihara's graph zeta function,
allowing arbitrary scalar weight for each directed edge \citep{STzeta1}.
Extending those graph zeta functions,
we introduce a graph zeta function defined on hypergraphs with matrix weights.

The central result of this section is the \IB type determinant formula in Subsection~\ref{sec:detIhara}.
This formula plays an important role in deriving the positive definiteness condition in Subsection~\ref{sec:zetapositive}.
These results are utilized to establish the relations between this zeta function and the LBP algorithm in the next section.

\subsection{Definition of the graph zeta function}
\label{sec:defgraphzeta}
In the first part of this subsection,
we further introduce basic definitions and notations
of hypergraphs required for the definition of our graph zeta function.

Let $H=(V,F)$ be a hypergraph.
As noted before,
it can be regarded as a directed graph $\fgdefn$.
For each edge $e=(\edai) \in \vec{E}$,
$s(e)=\alpha \in F$ is the
{\it starting hyperedge} of $e$ and
$t(e)=i \in V$ is the {\it terminus vertex} of $e$.
If two edges $e,e'\in \vec{E}$ satisfy conditions
$t(e) \in s(e')$ and $t(e) \neq t(e')$, this pair is denoted by $\ete{e}{e'}$.
(See Figure \ref{fig:edgerelation}.)
A sequence of directed edges $(e_1,\ldots,e_k)$ is said to be a
{\it closed geodesic} if $\ete{e_l}{e_{l+1}}$
for $l \in \mathbb{Z}/k\mathbb{Z}$.
For a closed geodesic $c$, we may form the {\it m-multiple} $c^{m}$ by repeating $c$ $m$-times.
If $c$ is not a multiple of strictly shorter closed geodesic, $c$ is said to be {\it prime}.
For example, a closed geodesic $c=(e_1,e_2,e_3,e_1,e_2,e_3)$ is not prime
because $c=(e_1,e_2,e_3)^2$.
A closed geodesic $c=(e_1,e_2,e_3,e_4,e_1,e_2,e_3)$ is prime because it is not $c \neq c'^m$
for any $c'$ and $m (\geq 2)$.
Two closed geodesics are said to be {\it equivalent} if one is obtained by cyclic
permutation of the other.
For example, closed geodesics $(e_1,e_2,e_3), (e_2,e_3,e_1) \text{ and } (e_3,e_1,e_2)$ are equivalent.
An equivalence class of a prime closed geodesic is called a {\it prime cycle}.
The set of prime cycles of $H$ is denoted by $\mathfrak{P}_H$.

\begin{figure}
\begin{center}
\includegraphics[scale=0.25]{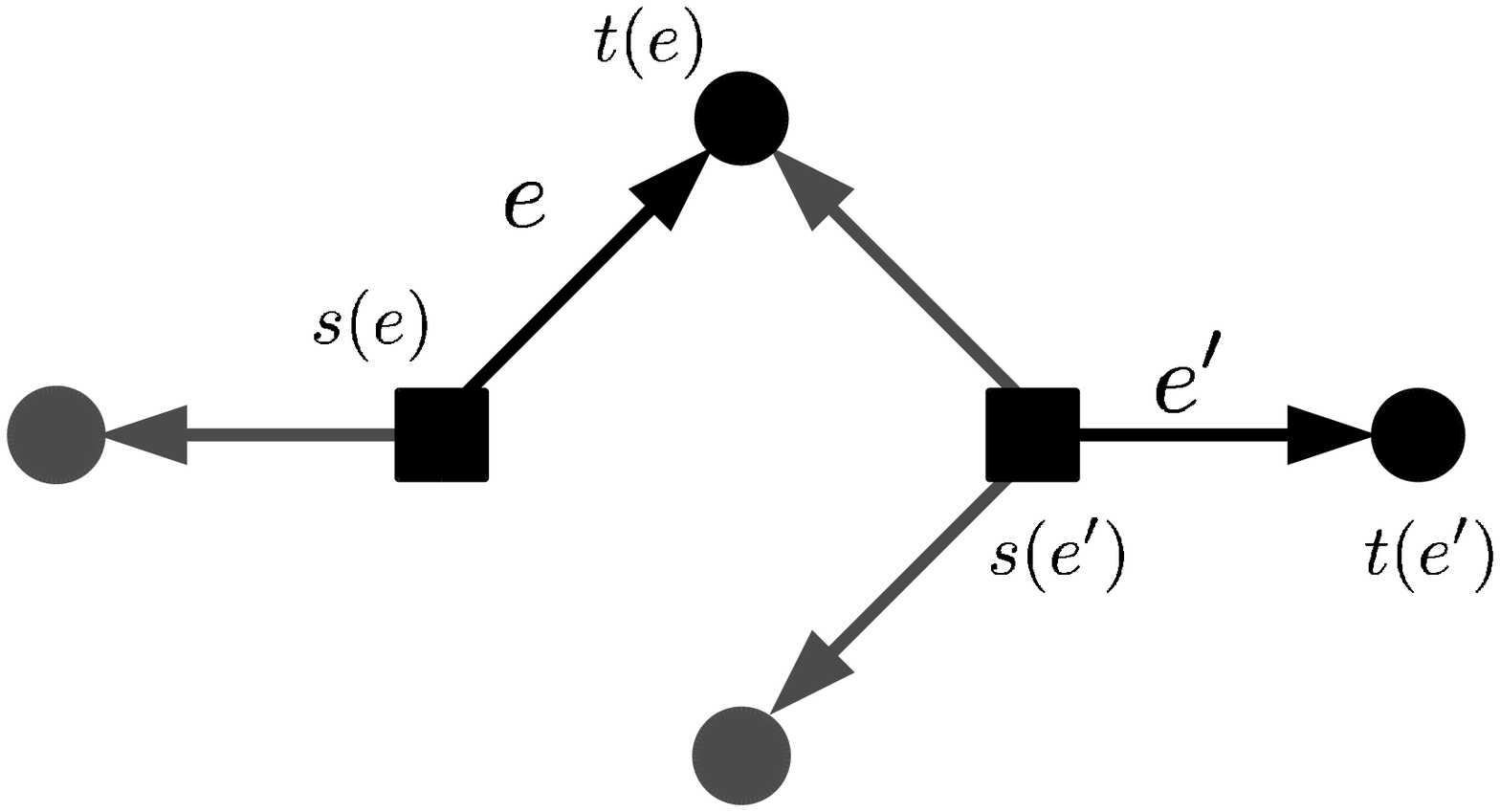}
\vspace{-1mm}
\caption{Example of the relation $\ete{e}{e'}$. \label{fig:edgerelation}}
\end{center}
\end{figure}

If $H$ is a graph (i.e. $d_{\alpha}=2$ for all $\alpha \in F$),
these definitions reduce to standard definitions \citep{KSzeta}.
(We will explicitly give them in Subsection \ref{sec:specialIB}.)
In this case, a factor $\alpha=\{i,j\}$ is identified with an undirected edge $ij$ and
$(\edai)$ is identified with a directed edge $(\edij)$.

Usually, in graph theory, Ihara's graph zeta function is a uni-variate function and associated with a graph.
Our graph zeta function is much more involved:
it is defined on a hypergraph having weights of matrices.
To define matrix weights, we have to prescribe its sizes;
we associate a positive integer $r_e$ with each edge $e \in \vec{E}$.

Here are additional notations used in the following definition.
The set of functions
\footnote{In mathematical usage, this is not a ``function'' because it takes a value on a different set for each argument $e \in \vec{E}$.
However, we do not stick this point.} 
on $\vec{E}$ that take values on $\mathbb{C}^{r_e}$
for each $e \in \vec{E}$ is denoted by $\vfe$.
The set of $n_1 \times n_2$ complex matrices is denoted by $\mat{n_1}{n_2}$.

\begin{defn}
Assume that for each $\etea$, a matrix weight $u_{\etea} \in \mat{r_e}{r_{e'}}$ is associated.
For this matrix weights $\bs{u}=\{u_{\etea}\}$,
the graph zeta function of $H$ is defined by
\begin{equation}
 \zeta_{H}(\bs{u}):=
\prod_{\mathfrak{p} \in \mathfrak{P}_H}
\frac{1}{\det \big( I- \pi(\mathfrak{p}) \big) },  \nonumber
\end{equation}
where $\pi(\mathfrak{p}):=$
$u_{e_k \rightharpoonup e_1}\ldots u_{e_2 \rightharpoonup e_3}  u_{e_1 \rightharpoonup e_2}$
for $\mathfrak{p}=(e_1,\ldots,e_k)$.
\end{defn}
Since $\det(I_n-AB)=\det(I_m-BA)$ for $n \times m $ and $m \times n$ matrices $A$ and $B$,
$\det( I- \pi(\mathfrak{p}))$ is well defined for an equivalence class $\mathfrak{p}$.
The definition is an analogue of the Euler product formula of the Riemann zeta function which is represented by the
product over all the prime numbers.

If $H$ is a graph and $r_{e}=1$ for all $e \in \vec{E}$,
this zeta function reduces to the edge zeta function by \cite{STzeta1}.
If in addition all these scalar weights are set to be equal, i.e. $u_{\etea}=u$,
the zeta function reduces to the Ihara zeta function.
These reductions will be discussed in Subsection \ref{sec:specialIB}.
Moreover, for general hypergraphs,
we obtain the one-variable hypergraph zeta function
by setting all matrix weights to be the same scalar $u$ \citep{Shypergraph}.

\begin{ex} \label{example1}
$\zeta_{H}(\boldsymbol{u})^{}=1$ if $H$ is a tree.
For 1-cycle graph $C_N$ of length
$N$, the prime cycles are $(e_1,e_2,\ldots,e_N)$ and
$(\bar{e}_N,\bar{e}_{N-1},\ldots,\bar{e}_1)$. (See Figure \ref{fig:prime1cycle}.)
The zeta function is
\begin{small}
\begin{equation*}
 \zeta_{C_N}(\boldsymbol{u})=
\det(I_{r_{e_1}}- u_{e_N \rightharpoonup e_1}\ldots u_{e_2 \rightharpoonup e_3}  u_{e_1 \rightharpoonup e_2}        )^{-1}
\det(I_{r_{\bar{e}_N}}- u_{\bar{e}_1 \rightharpoonup \bar{e}_N}\ldots u_{\bar{e}_{N-1} \rightharpoonup \bar{e}_{N-2}}  u_{\bar{e}_N \rightharpoonup \bar{e}_{N-1}}  )^{-1}.
\end{equation*}
\end{small}
\end{ex}

\begin{figure}
\begin{center}
\includegraphics[scale=0.25]{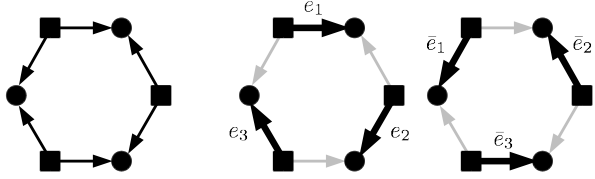}
\vspace{-1mm}
\caption{$C_3$ and its prime cycles. \label{fig:prime1cycle}}
\end{center}
\end{figure}

Except for the above two types of hypergraphs,
the number of prime cycles is infinite.
Therefore, rigorously speaking, we have to care about the convergence of the product
and restrict the definition for sufficiently small matrix weights $\bsu$.
However, as we will see below, the zeta function
has a determinant formula and is well defined on the whole space of matrix weights.
The proof is given in Appendix~\ref{app:DetailProofs}.

\begin{thm}
[The first determinant formula of zeta function]
\label{thm:det1}
We define a linear operator $\matmu : \vfe \rightarrow \vfe$ by
\begin{equation}
 \matmu f (e) =  \sum_{e':\etea} u_{\etea}f(e') \qquad f \in \vfe. \nonumber
\end{equation}
Then, the following formula holds
\begin{equation}
  \zeta_{G}(\bs{u})^{-1}=
\det(I- \matmu). \nonumber
\end{equation}
\end{thm}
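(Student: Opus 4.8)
The plan is to prove the identity first for matrix weights $\bs{u}$ small enough that $\specr{\matmu}<1$ and the Euler product defining $\zeta_H(\bs{u})$ converges absolutely, and then to observe that $\det(I-\matmu)$ is a polynomial in the entries of $\bs{u}$: equality of the two sides on a neighbourhood of the origin forces it everywhere and, at the same time, shows that the a priori only formally convergent product extends to the claimed reciprocal polynomial. On this small-weight regime both sides are nonvanishing and admit logarithms, so it suffices to match $\log$ of each side.

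For the determinant side I would use $\log\det(I-\matmu)^{-1}=\tr\log(I-\matmu)^{-1}=\sum_{m\ge 1}\frac1m\tr(\matmu^m)$. Viewing $\matmu$ as a block matrix indexed by $\vec{E}$ whose $(e,e')$-block is $u_{\ete{e'}{e}}$ when $\ete{e'}{e}$ and $0$ otherwise, the diagonal blocks of $\matmu^m$ are sums over sequences $e_1 \rightharpoonup e_2 \rightharpoonup \cdots \rightharpoonup e_m \rightharpoonup e_1$, so that $\tr(\matmu^m)=\sum_{c}\tr\,\pi(c)$, the sum ranging over all closed geodesics $c$ of length $m$ carrying a marked starting edge. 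Hence $\log\det(I-\matmu)^{-1}=\sum_{m\ge1}\frac1m\sum_{|c|=m}\tr\,\pi(c)$.

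For the zeta side I would expand each Euler factor the same way, $-\log\det(I-\pi(\mathfrak{p}))=\sum_{j\ge1}\frac1j\tr(\pi(\mathfrak{p})^j)$ with $\pi(\mathfrak{p})^j=\pi(\mathfrak{p}^j)$, giving $\log\zeta_{H}(\bs{u})=\sum_{\mathfrak{p}\in\mathfrak{P}_H}\sum_{j\ge1}\frac1j\tr\,\pi(\mathfrak{p}^j)$. The heart of the argument is the combinatorial reindexing matching this with the previous double sum. Every closed geodesic $c$ of length $m$ is a power $d^k$ of a unique primitive closed geodesic $d$, with $m=k|d|$; a prime cycle $\mathfrak{p}$ of length $\ell$ consists of exactly $\ell$ primitive closed geodesics, namely its cyclic rotations (all distinct by primality), and the $k$-th powers of these rotations are precisely the length-$k\ell$ geodesics whose primitive lies in the class $\mathfrak{p}$. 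Since the trace is invariant under cyclic permutation of the factors, each of these $\ell$ geodesics contributes $\tr\,\pi(\mathfrak{p})^k$. Grouping the right-hand sum by primitive class thus produces $\sum_{\mathfrak{p}}\sum_{k:\,k\ell=m}\ell\,\tr\,\pi(\mathfrak{p})^k$ at length $m$, and after the weight $\frac1m=\frac1{k\ell}$ the factor $\ell$ cancels, leaving exactly $\sum_{\mathfrak{p}}\sum_{k\ge1}\frac1k\tr\,\pi(\mathfrak{p})^k=\log\zeta_{H}(\bs{u})$. Exponentiating gives $\zeta_{H}(\bs{u})=\det(I-\matmu)^{-1}$.

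The step I expect to be delicate is this reindexing: one must verify that the primitive decomposition of a closed geodesic is unique, that a prime cycle genuinely has $\ell$ distinct cyclic rotations (no hidden shorter period, which is exactly where primality is used), and that the noncommutativity of the weights $u_{\ete{e'}{e}}$ does no harm because only traces of full cyclic products enter the sums. The remaining technical point is the justification of the passage from small $\bs{u}$ to arbitrary $\bs{u}$, which rests on $\det(I-\matmu)$ being an entire—indeed polynomial—function of the weights, so that no genuine analytic continuation beyond a polynomial identity is needed.
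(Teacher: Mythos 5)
Your proof is correct and follows essentially the same route as the paper's: both reduce the identity to matching $\sum_{k\geq 1}\tr(\matmu^k)$ against the sum of $\tr\,\pi(C)$ over all closed geodesics, using the unique primitive decomposition, the $|\mathfrak{p}|$ distinct rotations of a prime cycle, and cyclic invariance of the trace. The only cosmetic difference is that the paper clears the $1/k$ denominators by applying an Euler-type differential operator $\mathcal{H}$ to both logarithms rather than carrying the weights through the reindexing as you do, and your small-weight/polynomial-identity justification is addressed in the paper's surrounding discussion rather than inside the proof.
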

This type of determinant formula is well known in the context of graph zeta functions;
in fact this theorem is a straightforward generalization of Theorem 3 of \citet{STzeta1}.
In the next section we derive a new determinant formula of the zeta function by manipulating
the matrix $\matmu$ in the above determinant.

Note that the matrix representation of the operator $\matmu$ is
\begin{equation}
\matmu_{e,e'}=
\begin{cases}
u_{\etea}  \qquad \text{if } \etea \\
0          \qquad \qquad \text{otherwise.}
\end{cases} \nonumber
\end{equation}
The simplification of this matrix obtained by setting $r_e=1$ and $u=1$
is called {\it directed edge matrix} and denoted by $\mathcal{M}$ \citep{STzeta1}.
 \cite{KSzeta} call this matrix a {\it Perron-Frobenius operator}.
A noteworthy difference, in our and their definitions, is that directions of edges are opposite,
because we choose the directions to be consistent with illustrations of the LBP algorithm.

\subsection{Determinant formula of \IB type}\label{sec:detIhara}
In the previous subsection, we have shown that the zeta function is expressed as a determinant
of size $\sum_{e \in \vec{E} } r_{e}$.
In this subsection, we show another determinant expression with additional assumptions on the matrix weights.
The formula is called {\it \IB type determinant formula} and plays a key role in proofs of Theorem~\ref{thm:zetapositive}
and Theorem~\ref{thm:BZ}.

In the rest of this subsection, we fix a set of positive integers $\{r_i\}_{i \in V}$
associated with vertices.
Let $\{u^{\alpha}_{\edij}\}_{\alpha \in F, i,j \in \alpha}$ be a set of matrices 
$u^{\alpha}_{\edij} \in \mat{r_j,r_i}$.
Our additional assumption on the set of matrix weights, which is the argument of the zeta function, is that
\begin{equation}
 r_e:=r_{t(e)} \text{~ and  ~} u_{\etea}:=u^{s(e)}_{t(e') \rightarrow t(e)}. \nonumber
\end{equation}
Then the graph zeta function can be seen as a function of $\bs{u}=\{u^{\alpha}_{\edij}\}$.
With slight abuse of notation, it is also denoted by $\zeta_{H}(\bsu)$.
Later in Section~\ref{sec:key},
$r_i$ corresponds to the dimension of the sufficient statistic $\phi_{i}$, and
$u^{\alpha}_{\edij}$ to a matrix $\var{b_j}{\phi_j}^{-1} \cov{b_{\alpha}}{\phi_j}{\phi_i}$.

To state the \IB type determinant formula,
we introduce a linear operator $\bs{\iota}(\bs{u}): \vfe \rightarrow \vfe$
defined by
\begin{equation}
 (\bs{\iota}(\bs{u})f)(e):=
\sum_{e': {s(e')=s(e) \atop t(e')\neq t(e) }} u^{s(e)}_{\ed{t(e')}{t(e)}}f(e')  \qquad f \in \vfe.  \nonumber %
\end{equation}
\begin{figure}
\begin{center}
\includegraphics[scale=0.25]{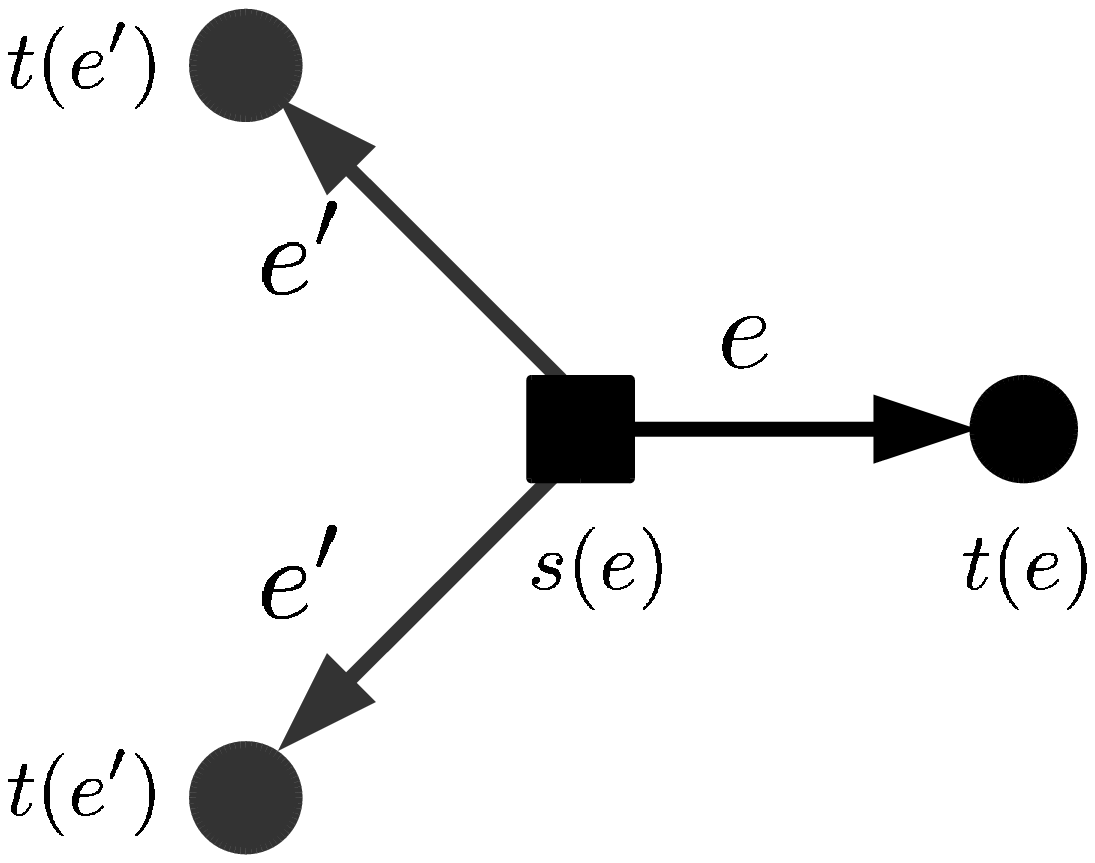}
\vspace{0mm}
\caption{Illustration for the definition of $\bs{\iota}(\bsu)$. \label{fig:iotaillustration}}
\end{center}
\vspace{0mm}
\end{figure}
The matrix representation of $\bs{\iota}(\bs{u})$ is a block diagonal matrix
because it acts on each factor separately.
Therefore $I+\bs{\iota}(\bs{u})$ is also a block diagonal matrix.
Each block is indexed by $\alpha \in F$ and denoted by $U_{\alpha}$.
Thus, for $\alpha=\{i_1,\ldots,i_{d_{\alpha}}\}$,
\begin{small}
\begin{equation}
 U_{\alpha}=
\begin{bmatrix}
 I_{r_{i_1}}    & u^{\alpha}_{i_2 \rightarrow i_1} & \cdots     &  u^{\alpha}_{i_{d_{\alpha}} \rightarrow i_1} \\
 u^{\alpha}_{i_1 \rightarrow i_2}   &  I_{r_{i_2}}         &  \cdots   & u^{\alpha}_{i_{d_{\alpha}} \rightarrow i_2} \\
   \vdots              &    \vdots      &  \ddots          & \vdots \\
u^{\alpha}_{i_{d_1 \rightarrow i_{d_{\alpha}}}} &  u^{\alpha}_{i_{d_2 \rightarrow i_{d_{\alpha}}}} &  \cdots & I_{r_{i_{d_{\alpha}}}}\\
\end{bmatrix}. \label{eq:defU}
\end{equation}
\end{small}
We also define $w^{\alpha}_{\ed{i}{j}}$ by the elements of $W_{\alpha}=U_{\alpha}^{-1}$:
\begin{small}
\begin{equation}
 W_{\alpha}=
\begin{bmatrix}
w^{\alpha}_{i_1 \rightarrow i_1}  & w^{\alpha}_{i_2 \rightarrow i_1} & \cdots     & w^{\alpha}_{i_{d_{\alpha}} \rightarrow i_1} \\
w^{\alpha}_{i_1 \rightarrow i_2}  & w^{\alpha}_{i_2 \rightarrow i_2} &  \cdots    & w^{\alpha}_{i_{d_{\alpha}} \rightarrow i_2} \\
   \vdots              &    \vdots      &  \ddots          & \vdots \\
w^{\alpha}_{i_{d_1 \rightarrow i_{d_{\alpha}}}} &  w^{\alpha}_{i_{d_2 \rightarrow i_{d_{\alpha}}}} &  \cdots
& w^{\alpha}_{i_{d_{\alpha}} \rightarrow i_{d_{\alpha}}    }  \\
\end{bmatrix}. \label{eq:defW}
\end{equation}
\end{small}

Similar to the definition of $\vfe$ in Subsection \ref{sec:defgraphzeta},
we define $\vfv$ as the set of functions on $V$ that takes value on $\mathbb{C}^{r_i}$ for each $i \in V$.

\begin{thm}[Determinant formula of Ihara-Bass type]
\label{thm:Ihara}
Let $\mathcal{D}$ be $\mathcal{W}$ are linear transforms on $\vfv$
defined by
\begin{equation}
 (\mathcal{D}g)(i):= d_i g(i), \qquad
 (\mathcal{W}g)(i):= \sum_{{e,e' \in \vec{E} \atop {t(e)=i, s(e)=s(e')}  }}
w^{s(e)}_{\ed{t(e')}{i}} g(t(e')).  \label{eq:defDW}
\end{equation}
Then, we have the following formula
\begin{equation}
  \zeta_{G}(\bs{u})^{-1}=
\det \big(
I_{r_V}- \mathcal{D} + \mathcal{W}
\big)
\prodf \det U_{\alpha},  \nonumber
\end{equation}
where $r_V:=\sum_{i \in V}r_i$
\end{thm}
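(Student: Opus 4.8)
The plan is to start from Theorem~\ref{thm:det1}, which already gives
$\zeta_H(\bsu)^{-1} = \det(I - \matmu)$, and then massage the matrix $I - \matmu$ into the claimed form by a sequence of algebraic manipulations on the edge-indexed operator. The key idea — standard in the Ihara-Bass literature and traceable to the Bass/Stark-Terras arguments — is to ``factor out'' the size-$r_V$ (vertex) information from the size-$r_E$ (edge) determinant using a clever block factorization. First I would write $\matmu = \bs{\iota}(\bsu)\,S - \text{(correction)}$ or, more precisely, I would introduce the two natural ``start'' and ``terminus'' maps between $\vfe$ and $\vfv$ and express $\matmu$ through them. Recall from the definition of $\bs{\iota}(\bsu)$ that its action sums over $e'$ sharing the \emph{same} starting hyperedge $s(e')=s(e)$ but with $t(e')\neq t(e)$, whereas $\matmu$ sums over $e'$ with $\etea$, i.e.\ $t(e')\in s(e)$ and $t(e')\neq t(e)$. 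The relationship between these two sums over the graph structure is the combinatorial heart of the proof.

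**Main technical device: a size-$(r_E+r_V)$ block identity.**

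The workhorse will be the classical trick of comparing two factorizations of a single $(r_E + r_V)\times(r_E+r_V)$ block matrix. I would introduce linear maps
\begin{equation*}
\tau : \vfe \to \vfv, \quad (\tau f)(i) = \!\!\sum_{e: t(e)=i}\!\! f(e), \qquad
\sigma : \vfv \to \vfe, \quad (\sigma g)(e) = g(t(e)),
\end{equation*}
together with the block-diagonal $I+\bs{\iota}(\bsu)$ whose blocks are the $U_\alpha$ of Eq.~(\ref{eq:defU}). The goal is to find matrices $A, B$ so that the two Schur-complement expansions of
$\begin{pmatrix} I+\bs{\iota}(\bsu) & A \\ B & I_{r_V} \end{pmatrix}$ (or a closely related block matrix) give, on one side, $\det(I+\bs{\iota}(\bsu))\cdot\det(\text{something of size }r_V)$, and on the other side $\det(I-\matmu)$ times $\prod_\alpha \det U_\alpha$. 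Computing the Schur complement in the vertex block is exactly where the operators $\mathcal{D}$ and $\mathcal{W}$ of Eq.~(\ref{eq:defDW}) should emerge: the diagonal degree term $\mathcal{D}$ comes from counting, for each vertex $i$, the $d_i$ edges terminating at $i$, and the term $\mathcal{W}$ arises from the inverse blocks $W_\alpha = U_\alpha^{-1}$ contracted against the same start/terminus incidence data. The factor $\prod_\alpha \det U_\alpha = \det(I+\bs{\iota}(\bsu))$ is precisely the determinant of the block-diagonal operator that gets pulled out of the identity.

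**The main obstacle and how I expect to handle it.**

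I expect the genuinely delicate step to be verifying that the Schur complement taken in the edge block reproduces $I - \matmu$ exactly (with the correct weight placement $u^{s(e)}_{t(e')\to t(e)}$ and the correct ``$t(e')\neq t(e)$'' exclusion), because the bookkeeping of which index the matrix weight $u^{\alpha}_{\edij}$ acts between — and in which order it multiplies — is easy to get wrong, and the noncommutativity of the matrix weights means one cannot be cavalier about transposes or left/right action. In the scalar Stark-Terras case this amounts to the identity $\mathcal{M} = (\text{incidence}) - (\text{backtracking})$, but here each entry is a block, so I would carefully track the $\mat{r_i}{r_j}$ typing throughout and confirm that $\bs{\iota}(\bsu)$ supplies exactly the ``same factor, different terminus'' off-diagonal contributions while $\sigma,\tau$ supply the ``move to a new factor'' contributions that constitute the genuine geodesic steps. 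Once the two block factorizations are shown to yield the same determinant and the algebra is confirmed to be consistent, comparing them and cancelling $\det(I+\bs{\iota}(\bsu))=\prod_\alpha\det U_\alpha$ against the same factor on the other side yields
\begin{equation*}
\det(I-\matmu) = \det(I_{r_V} - \mathcal{D} + \mathcal{W})\,\prod_{\alpha\in F}\det U_\alpha,
\end{equation*}
which is the claim after invoking Theorem~\ref{thm:det1}. I would take care to note where Assumption~\ref{asm:invetible}-type invertibility of the $U_\alpha$ is needed (to define $W_\alpha$ and to perform the Schur complement), and observe that the final formula, being polynomial in the entries of $\bsu$, then extends to all weights by continuity/analytic continuation even though the Euler product only converges for small $\bsu$.
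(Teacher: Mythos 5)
Your proposal follows essentially the same route as the paper's proof: the maps you call $\sigma$ and $\tau$ are exactly the paper's $\mathcal{T}$ and $\mathcal{T}^{*}$, the decomposition you anticipate is precisely the paper's key lemma $\matmu=\bs{\iota}(\bsu)\mathcal{T}\mathcal{T}^{*}-\bs{\iota}(\bsu)$, and your two-Schur-complement block identity is just the $\det(I-AB)=\det(I-BA)$ swap the paper applies after pulling out $\det(I+\bs{\iota}(\bsu))=\prodf\det U_{\alpha}$, with $\mathcal{D}=\mathcal{T}^{*}\mathcal{T}$ and $\mathcal{W}=\mathcal{T}^{*}(I+\bs{\iota}(\bsu))^{-1}\mathcal{T}$ emerging exactly where you predict. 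The plan is correct as stated; the only work remaining is the routine verification of the decomposition lemma, which you correctly flag as the one delicate bookkeeping step.
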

The proof is given in Appendix~\ref{app:DetailProofs}

\subsection{\IB type determinant formula on ordinary graphs}
\label{sec:specialIB}
In this subsection, we explicitly write definitions and the above formula for better understanding.
A hypergraph $H=(V,F)$, which has only hyperedges of degrees two,
is naturally identified with an (undirected) graph $G_{H}=(V,E)$.
In the next section, we see that this case corresponds to the pairwise \ifa.

First, we define the zeta function $Z_G$ of a graph $G=(V,E)$.
For each undirected edge, we make a pair of oppositely
directed edges, which form a set of {\it directed edges} $\vec{E}$.
Thus $|\vec{E}|=2|E|$. For each directed edge $e \in \vec{E}$, $o(e)
\in V$ is the {\it origin} of $e$ and $t(e) \in V$ is the {\it
terminus} of $e$.   For $e \in \vec{E}$, the {\it inverse edge} is
denoted by $\bar{e}$, and the corresponding undirected edge by
$[e]=[\bar{e}] \in E$.

A {\it closed geodesic} in $G$ is a sequence $(e_1,\ldots,e_k)$ of
directed edges such that
$t(e_i)=o(e_{i+1}), e_i \neq \bar{e}_{i+1}$ for $i \in \mathbb{Z}/k\mathbb{Z}$.
Prime cycles are defined in the same manner to that of hypergraphs.
The set of prime cycles is denoted by $\mathfrak{P}_{G}$.

\begin{defn}
Let $G=(V,E)$ a graph.
For given positive integers $\{r_i\}_{i \in V}$ and
matrix weights $\bsu=\{ u_{e} \}_{e \in \vec{E}}$ with 
$u_e \in \mat{r_{t(e)}}{r_{o(e)}}$,
 \begin{equation}
Z_{G}(\boldsymbol{u}):=\prod_{\mathfrak{p} \in \mathfrak{P}_G }
\det(1-\pi(\mathfrak{p}))^{-1},
\quad
\pi(\mathfrak{p}):=u_{e_1} \cdots u_{e_k}
\quad
\text{ for }
\mathfrak{p}=(e_1,\ldots,e_k). \nonumber
\end{equation}
\end{defn}
Since $\mathfrak{P}_{\ug{H}}$ is naturally identified with $\mathfrak{P}_{H}$,
$Z_{\ug{H}}=\zeta_{H}$ holds.
This zeta function is the matrix weight extension of the edge zeta function analyzed by \citet{STzeta1}.

Since the degree of every hyperedge is equal to two for a graph,
the matrix $W_{\alpha}$ defined in Eq.~(\ref{eq:defW}) has explicit expressions.
Using this fact, we obtain the following simplification of Theorem~\ref{thm:Ihara}.
\begin{cor}
\label{cor:IBfornonhyper}
For a graph $G=(V,E)$,
 \begin{equation}
  Z_{G}(\bs{u})^{-1}=
  \det ( I + \hat{\mathcal{D}}(\bsu) - \hat{\mathcal{A}}(\bsu) )
  \prod_{[e] \in {E}} \det(I - u_e u_{\bar{e}}), \label{eq:IBfornonhyper}
 \end{equation}
where $\hat{\mathcal{D}}$ and $\hat{\mathcal{A}}$ are defined by
\begin{align}
 &(\hat{\mathcal{D}}(\bsu)g)(i):= \Big( \sum_{e: t(e)=i}(I_{r_i}-u_eu_{\bar{e}})^{-1}u_eu_{\bar{e}} \Big)g(i), \label{eq:modDop} \\
 &(\hat{\mathcal{A}}(\bsu)g)(i):= \sum_{e: t(e)=i}(I_{r_i}-u_eu_{\bar{e}})^{-1}u_e g(o(e)). \label{eq:modAop}
\end{align}
\end{cor}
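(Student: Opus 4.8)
The plan is to specialize the Ihara--Bass formula of Theorem~\ref{thm:Ihara} to the case $d_\alpha = 2$, where every block $U_\alpha$ is a $2\times 2$ block matrix that can be inverted explicitly. Since $Z_{G_H}=\zeta_H$ under the weight correspondence $u_e = u^{[e]}_{o(e)\to t(e)}$ for each directed edge $e$ (so that $u_e \in \mat{r_{t(e)}}{r_{o(e)}}$ as required), it suffices to rewrite the three ingredients $\prod_\alpha \det U_\alpha$, $\mathcal{D}$, and $\mathcal{W}$ of Theorem~\ref{thm:Ihara} in terms of the edge weights $u_e, u_{\bar e}$.

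First I would handle the determinant prefactor. For $\alpha=\{i,j\}$, Eq.~(\ref{eq:defU}) reads
\[
U_\alpha = \begin{bmatrix} I_{r_i} & u^\alpha_{j\to i} \\ u^\alpha_{i\to j} & I_{r_j}\end{bmatrix},
\]
so the Schur complement with respect to the $(j,j)$ block gives $\det U_\alpha = \det(I_{r_i}-u^\alpha_{j\to i}u^\alpha_{i\to j}) = \det(I-u_e u_{\bar e})$ for the edge $e\colon j\to i$ with $[e]=\alpha$. Running over all factors reproduces the product $\prod_{[e]\in E}\det(I-u_e u_{\bar e})$, the choice of orientation being immaterial by $\det(I-AB)=\det(I-BA)$.

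Next I would invert $U_\alpha$ blockwise to read off the entries of $W_\alpha$ in Eq.~(\ref{eq:defW}): the diagonal block is $w^\alpha_{i\to i}=(I_{r_i}-u_e u_{\bar e})^{-1}$ and the off-diagonal block is $w^\alpha_{j\to i}=-(I_{r_i}-u_e u_{\bar e})^{-1}u_e$, with $e\colon j\to i$. Substituting into $\mathcal{W}$ of Eq.~(\ref{eq:defDW}), and noting that in the pairwise case the inner sum over $e'$ with $s(e')=s(e)=\alpha$ ranges exactly over $t(e')\in\{i,j\}$, I obtain
\[
(\mathcal{W}g)(i)=\sum_{e:\,t(e)=i}\Big[(I_{r_i}-u_e u_{\bar e})^{-1}g(i)-(I_{r_i}-u_e u_{\bar e})^{-1}u_e\,g(o(e))\Big].
\]

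Finally, combining this with $I_{r_V}$ and $-\mathcal{D}$, and writing $d_i=\sum_{e:\,t(e)=i}1$, the degree term cancels the identity inside the diagonal part via the elementary identity $(I-u_e u_{\bar e})^{-1}-I=(I-u_e u_{\bar e})^{-1}u_e u_{\bar e}$. This merges $-\mathcal{D}$ with the diagonal of $\mathcal{W}$ into precisely $\hat{\mathcal{D}}(\bsu)$ of Eq.~(\ref{eq:modDop}), while the off-diagonal part becomes $-\hat{\mathcal{A}}(\bsu)$ of Eq.~(\ref{eq:modAop}), yielding $I_{r_V}-\mathcal{D}+\mathcal{W}=I+\hat{\mathcal{D}}(\bsu)-\hat{\mathcal{A}}(\bsu)$. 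Taking determinants and multiplying by the prefactor gives Eq.~(\ref{eq:IBfornonhyper}). The computation itself is routine; the one place that demands care is the bookkeeping of the index and orientation conventions --- tracking which of $u_e, u_{\bar e}$ corresponds to $u^\alpha_{i\to j}$ versus $u^\alpha_{j\to i}$ and keeping the dimensions $\mat{r_{t(e)}}{r_{o(e)}}$ consistent throughout --- since a sign or transposition slip there would break the identification with $\hat{\mathcal{D}}$ and $\hat{\mathcal{A}}$.
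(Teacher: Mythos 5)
Your proposal is correct and follows essentially the same route as the paper: write out the $2\times 2$ block $U_{[e]}$, compute $\det U_{[e]}=\det(I-u_eu_{\bar e})$ and the block inverse $W_{[e]}$, and substitute into Theorem~\ref{thm:Ihara}. The only difference is that you spell out the final merging of $-\mathcal{D}$ with the diagonal of $\mathcal{W}$ via $(I-u_eu_{\bar e})^{-1}-I=(I-u_eu_{\bar e})^{-1}u_eu_{\bar e}$, which the paper leaves implicit.
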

\begin{proof}
For $e=(\edij)$, the $U_{[e]}$ block is given by
\begin{equation}
 U_{[e]}=
\begin{bmatrix}
 I_{r_i} & u_e \\
u_{\bar{e}} & I_{r_j} \\
\end{bmatrix} \nonumber
\end{equation}
Therefore $\det  U_{[e]}= \det(I_{r_i}-u_eu_{\bar{e}})$ and
the inverse $W_{[e]}$ is
\begin{equation}
 W_{[e]}=
\begin{bmatrix}
(I_{r_i}-u_eu_{\bar{e}})^{-1} & 0 \\
0 & (I_{r_j}-u_{\bar{e}}u_{e})^{-1} \\
\end{bmatrix}
\begin{bmatrix}
 I_{r_i} & -u_e \\
-u_{\bar{e}} & I_{r_j} \\
\end{bmatrix}. \nonumber
\end{equation}
Plugging these equations into Theorem \ref{thm:Ihara}, we obtain the assertion.
\end{proof}

\citet{MSweighted,HSTweighted} have derived a weighted graph version of \IB type determinant formula under 
assumption that the scalar weights $\{u_e\}$ satisfy conditions $u_e u_{\bar{e}} = u^2$.
In this case, the factors $(1-u_e u_{\bar{e}})^{-1}$ in Eqs.~(\ref{eq:modDop},\ref{eq:modAop}) do not depend on $e$ and
Eq.~(\ref{eq:IBfornonhyper}) is further simplified.
Corollary \ref{cor:IBfornonhyper} gives the extension of the result to graphs with arbitrary weights.
A direct proof of Corollary \ref{cor:IBfornonhyper}, without discussing hypergraphs, is found in
the supplementary material of \citet{WFzeta}.

If all the weights are set to $u$,
the result reduces to the following formula
known as {\it Ihara-Bass formula}:
\begin{equation*}
 Z_{G}(u)^{-1}
=(1-u^2)^{|E|-|V|}
\det(I-u\mathcal{A}+ u^2 (\mathcal{D}-I) ),
 \end{equation*}
where
$\mathcal{D}$ is the {\it degree matrix} defined by
$\mathcal{D}_{i,j}= d_i \delta_{i,j}$, 
and
$\mathcal{A}$ is the {\it adjacency matrix}
by 
\begin{equation}
 \mathcal{A}_{i,j}=
\begin{cases}
1& \quad \text{if} \quad \{i,j\} \in E \\
0& \quad \text{otherwise.}
\end{cases} \nonumber
\end{equation}
Many authors have discussed the proof of the \IB formula.
The first proof was given by \citet{Bass}. See \citet{KSzeta,STzeta1} for others.
A combinatorial proof is given by \citet{FZcombinatorial}.

\subsection{Positive definiteness condition}\label{sec:zetapositive}
The \IB type determinant formula relates the matrices $\matmu$ and $(I_{r_V}- \mathcal{D} + \mathcal{W})$.
In the later sections, we see that $\matmu$ corresponds to the derivative of the LBP update
and $(I_{r_V}- \mathcal{D} + \mathcal{W})$ is closely related to the Hessian of the \Bfe function.

The following theorem is fundamental to prove Theorem \ref{thm:positive}.
\begin{thm}
\label{thm:zetapositive}
Assume that $\bsu=\{u^{\alpha}_{\edij}\}_{\alpha \in F, i,j \in \alpha}$
satisfies
$u^{\alpha}_{\edij} = u^{\alpha}_{\edji}$
and $\norm{u^{\alpha}_{\edij}} <1$, where $\norm{ \cdot }$ is an arbitrary operator norm.
If
$\spec{ \matmu } \hspace{0.5mm} \subset \hspace{0.5mm} \mathbb{C} \smallsetminus \mathbb{R}_{\geq 1}$, where $\spec{\cdot }$ denotes the set of eigenvalues, 
then $(I_{r_V}- \mathcal{D} + \mathcal{W})$ is a positive definite matrix.
\end{thm}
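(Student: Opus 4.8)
The plan is to prove positive definiteness by a continuity (homotopy) argument in which the matrix weights are scaled continuously down to zero, using the two determinant formulas to forbid any eigenvalue of $I_{r_V}-\mathcal{D}+\mathcal{W}$ from reaching the origin along the path. Concretely, for $t\in[0,1]$ I replace $\bsu$ by $t\bsu$ and write $U_\alpha(t)$, $\mathcal{W}(t)$, and $X(t):=I_{r_V}-\mathcal{D}+\mathcal{W}(t)$ for the corresponding objects. The symmetry hypothesis $u^{\alpha}_{\edij}=u^{\alpha}_{\edji}$ makes each block $U_\alpha$ symmetric, hence the inverse blocks $W_\alpha=U_\alpha^{-1}$ and the operator $\mathcal{W}$ symmetric, so every $X(t)$ is a real symmetric matrix; its positive definiteness is equivalent to having no nonpositive eigenvalue. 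Since the eigenvalues of a symmetric matrix vary continuously, it suffices to exhibit a manifestly positive-definite endpoint and to show $\det X(t)\neq 0$ throughout $[0,1]$.

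First I would record the endpoint $t=0$: there $U_\alpha(0)=I$, so $\mathcal{W}(0)=\mathcal{D}$ and $X(0)=I_{r_V}\succ 0$. Next, the operator $\matmu$ is linear in the weights, so $\mathcal{M}(t\bsu)=t\,\matmu$ and $\spec{\mathcal{M}(t\bsu)}=t\,\spec{\matmu}$. Hence $\det(I-\mathcal{M}(t\bsu))=0$ for some $t\in[0,1]$ if and only if $\spec{\matmu}$ meets $\{1/t:t\in(0,1]\}=\mathbb{R}_{\geq 1}$; the assumption $\spec{\matmu}\subset\mathbb{C}\smallsetminus\mathbb{R}_{\geq 1}$ says exactly that $\det(I-\mathcal{M}(t\bsu))\neq 0$ for all $t\in[0,1]$. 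Combining the first determinant formula (Theorem~\ref{thm:det1}) with the Ihara--Bass formula (Theorem~\ref{thm:Ihara}) on the scaled weights gives $\det(I-\mathcal{M}(t\bsu))=\det X(t)\,\prodf\det U_\alpha(t)$. Because the left-hand side never vanishes on $[0,1]$, this identity forces $\det X(t)\neq 0$ throughout, \emph{provided} the factor $\prodf\det U_\alpha(t)$ stays finite and nonzero, i.e. provided each $U_\alpha(t)$ stays invertible along the path.

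The main obstacle is precisely this proviso. Observe that $U_\alpha(t)=(1-t)I+tU_\alpha$ is a convex combination of $I$ and $U_\alpha$, so if one knows $U_\alpha\succ 0$ then $U_\alpha(t)\succ 0$ for every $t\in[0,1]$, no pole of $\mathcal{W}(t)$ can occur, and $X(t)$ is a continuous path of invertible symmetric matrices. In the pairwise (graph) case $U_\alpha\succ0$ is immediate from the norm hypothesis: by Corollary~\ref{cor:IBfornonhyper} each factor block is $U_{[e]}=\left(\begin{smallmatrix} I & u_e\\ u_{\bar e} & I\end{smallmatrix}\right)$, whose Schur complement is $I-u_e u_{\bar e}$, and $\norm{u_e}<1$ together with $u_e=u_{\bar e}$ yields $I-u_e u_{\bar e}\succ 0$, hence $U_{[e]}\succ 0$. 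I therefore expect the delicate point to be justifying invertibility of $U_\alpha(t)$ (equivalently, ruling out poles of $\mathcal{W}(t)$) for higher-degree factors, where the operator-norm bound is genuinely weaker than positive definiteness and one must use the additional structure of the weights.

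Granting that each $U_\alpha(t)$ is invertible on $[0,1]$, the argument closes cleanly: $X(t)$ is a continuous family of real symmetric matrices with $X(0)=I_{r_V}\succ 0$ and $\det X(t)\neq 0$ for all $t$, so by continuity of the spectrum no eigenvalue can cross $0$, and all eigenvalues of $X(1)$ remain positive. Thus $I_{r_V}-\mathcal{D}+\mathcal{W}\succ 0$, as claimed.
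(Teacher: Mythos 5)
You have reproduced the paper's proof step for step: the homotopy $\bsu(t)=t\bsu$, the symmetry of $\mathcal{W}$, the endpoint $I_{r_V}$ at $t=0$, the Ihara--Bass identity to convert non-vanishing of $\det(I-\mathcal{M}(t\bsu))$ into non-vanishing of $\det(I-\mathcal{D}+\mathcal{W}(t))$, and continuity of the eigenvalues of a symmetric family. The one step you decline to prove --- that every $U_{\alpha}(t)$ stays invertible on $[0,1]$ --- is exactly the step the paper dispatches with the single phrase ``From the assumption, $U_{\alpha}(t)$ is invertible,'' and your suspicion about it is correct: the operator-norm hypothesis alone does not deliver it once a factor has degree three or more. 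Concretely, take a single factor $\alpha=\{1,2,3\}$ with $r_i=1$ and all scalar weights $u^{\alpha}_{\edij}=a$ for some $a\in(-1,-\tfrac{1}{2})$. Every stated hypothesis holds: $|a|<1$, the weights are symmetric, and $\matmu=0$ (all directed edges share the factor $\alpha$, so no pair $\etea$ exists and the spectral condition is vacuous). Yet $U_{\alpha}$ has the eigenvalue $1+2a<0$, and since here $\mathcal{D}=I$ one gets $I-\mathcal{D}+\mathcal{W}=U_{\alpha}^{-1}$, which is indefinite. So the missing ingredient is not merely invertibility of $U_\alpha(t)$ along the path but positive definiteness of each $U_{\alpha}$, and it cannot be extracted from the norm bound; the gap you flagged is genuine and, as printed, unfillable.

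The defect lies in the theorem's hypotheses rather than in your (or the paper's) architecture, and it is harmless where the theorem is invoked. In the proof of Theorem~\ref{thm:positive} the weights are correlation-coefficient matrices, so $U_{\alpha}$ is the correlation matrix of $(\phi_i)_{i\in\alpha}$ under $b_{\alpha}$ and is positive definite by Assumption~\ref{asm:invetible}; then, exactly as you observe, $U_{\alpha}(t)=(1-t)I+tU_{\alpha}\succ 0$ for all $t\in[0,1]$, $\mathcal{W}(t)$ has no poles, and your continuity argument closes. Your Schur-complement check that degree-two blocks are automatically positive definite under the norm bound is also correct and explains why no issue can arise for pairwise models. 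If you add the hypothesis $U_{\alpha}\succ 0$ (or restrict to weights of the form Eq.~(\ref{def:u})), your write-up becomes a complete and correct proof --- and a more careful one than the original.
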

\begin{proof}
From the assumption of symmetry, $W_{\alpha}$ in Eq.~(\ref{eq:defW}) is a symmetric matrix.
Therefore, $\mathcal{W}$, defined in Eq.~(\ref{eq:defDW}), is also symmetric.
To prove the positive definiteness, we define $u^{\alpha}_{\edij}(t):=t u^{\alpha}_{\edij} \quad (t \in [0,1])$, 
which implies $\mathcal{M}(\bsu(t))= t \mathcal{M}(\bsu)$.
From the assumption, $U_{\alpha}(t)$ is invertible and thus $W_{\alpha}(t)$ is well defined for all $t$.
If $t=0$, $W_{\alpha}(0)= \mathcal{D}$
and $I- \mathcal{D} + \mathcal{W}(0)= I$ is obviously positive definite.
Since the eigenvalues of a symmetric matrix are real and continuous with respect to its entries,
it is enough to prove that
$\det(I- \mathcal{D} + \mathcal{W}(t)) \neq 0 $ on the interval $[0,1]$.
Under the condition on the eigenvalues of $\matmu$, $\det (I - \mathcal{M}(\bsu(t))) \neq 0$ holds for $t \in [0,1]$.
Therefore, Theorem~\ref{thm:Ihara} implies the claim.
\end{proof}

\section{Main theoretical results}
\label{sec:key}

In this section, we establish the connection between
the graph zeta function and the \Bfe function.
These results form a basis of the analyses in later sections.

In Subsection \ref{sec:BZ}, we prove a formula using the \IB type determinant formula
proved in the previous section.
The formula shows a concrete relation between the \Bfe function and the graph zeta function.
In Subsection \ref{sec:PDC}, we give a condition that the Hessian of the \Bfe function is positive-definite.

\subsection{\Bzf}\label{sec:BZ}
In this subsection, we show that the determinant of the Hessian of the \Bfe function
is essentially equal to the reciprocal of the graph zeta function\footnote{An intuitive understanding of this result,
based on the Legendre duality of two types of the \Bfe functions is discussed by \citet{Wthesis}}.

In order to make the assertion clear, we first recall the definitions and notations.
Let $H=(V,F)$ be a hypergraph and
let $\mathcal{I}=\{ \mathcal{E}_{\alpha},\mathcal{E}_i\}$ be an \ifa on $H$.
Exponential families $\mathcal{E}_i$ and $\mathcal{E}_{\alpha}$ are given by sufficient statistics $\phi_i$
and $\fa{\phi}$ as discussed in Subsection~\ref{sec:expfamily}.
Furthermore, as discussed in Subsection~\ref{sec:BFE},
a point $\bseta=\{\pa{\eta},\eta_i\} \in L$ is identified with
a set of pseudomarginals $\beliefs$.
\begin{thm}
\label{thm:BZ}
At any point of $\bseta=\{\pa{\eta},\eta_i\} \in L$ the following equality holds.
\begin{equation*}
\zeta_{H}(\bsu)^{-1}
\hspace{-1mm}
=
\det (I -\matmu)
=
\det(\nabla^2 F)
\prod_{\alpha \in F}
\hspace{-0mm}
\det(\var{b_{\alpha}}{ \fa{\phi}   })
\prod_{i \in V}
\hspace{-0mm}
\det(\var{b_{i}}{\phi_{i}})^{1-d_i},
\end{equation*}
where
\begin{equation}
 u^{\alpha}_{\ed{i}{j}}:=
\var{b_j}{{\phi}_{j}}^{-1}
\cov{b_{\alpha}}{{\phi}_{j}}{{\phi}_{i}} \label{def:u}
\end{equation}
is an $r_j \times r_i$ matrix, and $\nabla^2 F$ is the Hessian matrix with respect to the coordinate $\{\pa{\eta},\eta_i\}$.
\end{thm}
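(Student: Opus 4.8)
The plan is to take the first equality for granted and put all the work into the second. The identity $\zeta_{H}(\bsu)^{-1}=\det(I-\matmu)$ is exactly Theorem~\ref{thm:det1} specialized to the weights $u^{\alpha}_{\ed{i}{j}}=\var{b_j}{\phi_j}^{-1}\cov{b_\alpha}{\phi_j}{\phi_i}$ and $r_i=\dim\phi_i$, so nothing remains there. For the second equality I would run both sides through the Ihara--Bass factorization of Theorem~\ref{thm:Ihara}, which for these weights reads $\det(I-\matmu)=\det(I_{r_V}-\mathcal{D}+\mathcal{W})\prod_{\alpha}\det U_{\alpha}$. Writing $\Sigma_\alpha:=\var{b_\alpha}{\fa{\phi}}$ and $\Sigma_i:=\var{b_i}{\phi_i}$, the goal then reduces to the purely linear-algebraic identity
\[
\det(\nabla^2 F)\,\prod_{\alpha}\det\Sigma_\alpha\,\prod_{i}(\det\Sigma_i)^{1-d_i}=\det(I_{r_V}-\mathcal{D}+\mathcal{W})\,\prod_{\alpha}\det U_{\alpha}
\]
at the chosen point of $L$.

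First I would assemble the exponential-family inputs. Because $\varphi_\alpha,\varphi_i$ are the Legendre duals of the log partition functions, their Hessians are $\nabla^2\varphi_\alpha=\Sigma_\alpha^{-1}$ and $\nabla^2\varphi_i=\Sigma_i^{-1}$, both invertible by Assumption~\ref{asm:invetible}, and the linear term $-\sum_\alpha\inp{\fa{\bar{\theta}}}{\fa{\eta}}$ drops out of the Hessian. Ordering the coordinates of $L$ as the factor-internal block $\{\pa{\eta}\}$ followed by the vertex block $\{\eta_i\}$ and using the defining identification $\va{\eta}{i}=\eta_i$, the Hessian $\nabla^2 F$ is a sum over $\alpha$ of the embedded blocks of $\Sigma_\alpha^{-1}$ plus the diagonal term $(1-d_i)\Sigma_i^{-1}$. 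I also need one preliminary fact: on $L$ the marginal of $b_\alpha$ onto $x_i$ equals $b_i$ --- this follows from local consistency together with Assumption~\ref{asm:marginallyclosed} and the fact that an expectation parameter pins down its exponential-family element --- hence $\var{b_\alpha}{\phi_i}=\Sigma_i$. This is exactly what makes the diagonal blocks of $U_\alpha$ equal to the identity and gives the clean factorization $U_\alpha=\diag_{i\in\alpha}(\Sigma_i^{-1})\,\Sigma^\alpha_{VV}$, where $\Sigma^\alpha_{VV}$ is the vertex-vertex block of $\Sigma_\alpha$; since $\Sigma^\alpha_{VV}$ is a principal submatrix of the positive-definite $\Sigma_\alpha$, it is invertible, so $W_\alpha=U_\alpha^{-1}=(\Sigma^\alpha_{VV})^{-1}\diag(\Sigma_i)$ is well defined.

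The crux is to eliminate the factor-internal coordinates by a Schur complement. As the factor block of $\nabla^2 F$ is block-diagonal over $F$ and each $\pa{\eta}$ couples only to its own vertices, the Schur complement onto the vertex coordinates splits as a sum over $\alpha$, and the standard identity ``Schur complement of a block equals the inverse of the complementary block of the inverse matrix'' collapses the contribution of $\Sigma_\alpha^{-1}$ to $(\Sigma^\alpha_{VV})^{-1}$. This yields an effective vertex Hessian $\tilde H$ with blocks $\tilde H_{ij}=\sum_{\alpha\ni i,j}((\Sigma^\alpha_{VV})^{-1})_{ij}+\delta_{ij}(1-d_i)\Sigma_i^{-1}$ and the bookkeeping $\det(\nabla^2 F)=\det\tilde H\cdot\prod_\alpha(\det\Sigma^\alpha_{VV}/\det\Sigma_\alpha)$, using $\det((\Sigma_\alpha^{-1})_{\langle\alpha\rangle\langle\alpha\rangle})=\det\Sigma^\alpha_{VV}/\det\Sigma_\alpha$. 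A direct block comparison of the definitions of $\mathcal{D},\mathcal{W}$ with $W_\alpha=(\Sigma^\alpha_{VV})^{-1}\diag(\Sigma_i)$ then gives $I_{r_V}-\mathcal{D}+\mathcal{W}=\tilde H\,\diag_i(\Sigma_i)$; substituting this together with $\det U_\alpha=\det\Sigma^\alpha_{VV}/\prod_{i\in\alpha}\det\Sigma_i$ makes both sides of the target identity reduce to $\det\tilde H\cdot\prod_\alpha\det\Sigma^\alpha_{VV}\cdot\prod_i(\det\Sigma_i)^{1-d_i}$, each $\det\Sigma_i$ appearing with multiplicity $d_i$ so that the powers match.

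The hard part will be the determinant bookkeeping in the Schur-complement step: one must simultaneously track the embedding of each $\Sigma_\alpha^{-1}$ into the global coordinates, the Schur identity turning the eliminated factor-internal block into $(\Sigma^\alpha_{VV})^{-1}$, and the way the diagonal scalings $\diag(\Sigma_i)$ migrate among $\tilde H$, $\mathcal{W}$, $U_\alpha$ and the final covariance determinants so that every stray $\det\Sigma_i$ cancels with the right multiplicity. The most error-prone point is keeping the index conventions of $u^{\alpha}_{\ed{i}{j}}$, $U_\alpha$ and $W_\alpha$ mutually consistent --- note the transposed roles of $i$ and $j$ between $u^{\alpha}_{\ed{i}{j}}$ and the $(i,j)$ block of $U_\alpha$ --- but once the two structural relations $U_\alpha=\diag(\Sigma_i^{-1})\Sigma^\alpha_{VV}$ and $I_{r_V}-\mathcal{D}+\mathcal{W}=\tilde H\,\diag(\Sigma_i)$ are in place, the claimed equality is immediate.
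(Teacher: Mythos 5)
Your proposal is correct and follows essentially the same route as the paper: apply the Ihara--Bass factorization of Theorem~\ref{thm:Ihara}, eliminate the factor-internal coordinates $\{\pa{\eta}\}$ by a Schur complement using the block-diagonal $(F,F)$-block (the paper phrases this as a Gaussian elimination producing the vertex matrix $Y$, which equals your $\tilde H$ via the standard inverse-of-complementary-block identity), identify $U_{\alpha}=\diag_{i\in\alpha}(\Sigma_i^{-1})\,\Sigma^{\alpha}_{VV}$ using the local-consistency fact $\var{b_{\alpha}}{\phi_i}=\var{b_i}{\phi_i}$, and match $I_{r_V}-\mathcal{D}+\mathcal{W}=\tilde H\,\diag_i(\Sigma_i)$ before the final determinant bookkeeping. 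The two structural relations you single out are exactly the ones the paper establishes, and your power count of $\det\Sigma_i$ agrees with the paper's computation.
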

Note that he Hessian $\nabla^2 F$ does not depend on the given compatibility functions $\Psi_{\alpha}$ because
those only affect linear terms in $F$, and thus 
the formula is a property of \ifa $\mathcal{I}$.
Note also that the determinants of variances in the formula are always positive, because we assume 
all the local exponential families $\mathcal{E}_{\alpha}$ and $\mathcal{E}_i$
have positive definite covariance matrices.

The proof is based on the Ihara-Bass type determinant formula;
we check that the Hessian $\nabla^2 F$ is related to the matrix $(I- \mathcal{D} + \mathcal{W})$
if weights has the form of Eq.~(\ref{def:u}).
The key condition satisfied on the set $L$ is $\var{b_{\alpha}}{\phi_i}=\var{b_i}{\phi_i}$.

\begin{proof}
From the definition of the \Bfe function Eq.~(\ref{defn:Bfe}),
the (V,V)- block of $\Hesse F$ is given by
\begin{equation*}
 \pdseta{F}{i}{i}= \sum_{\alpha \ni i} \pdseta{\varphi_{\alpha}}{i}{i}+(1-d_i)\pdseta{\varphi_i}{i}{i},
\quad
 \pdseta{F}{i}{j}= \sum_{\alpha \supset \{i,j\}} \pdseta{\varphi_{\alpha}}{i}{j} \quad (i \neq j).
\end{equation*}
The (V,F)-block and (F,F)-block are given by
\begin{equation*}
 \pds{F}{\eta_i}{\pa{\eta}}= \pds{\varphi_{\alpha}}{\eta_i}{\pa{\eta}},
\quad \quad
 \pds{F}{\pa{\eta}}{\pb{\eta}}= \pds{\varphi_{\alpha}}{\pa{\eta}}{\pb{\eta}}\delta_{\alpha,\beta}.
\end{equation*}
Using the diagonal blocks of (F,F)-block, we erase (V,F)-block and
(F,V)-block of the Hessian by Gaussian elimination.
In other words, we choose a square matrix $X$ such that $\det X =1$ and
\begin{equation}
X^T (\nabla^2 F) X
=
\begin{bmatrix}
\quad Y & 0 \\
\quad 0 &
\Big( \pds{F}{\pa{\eta}}{\pb{\eta}} \Big)
\end{bmatrix},   \nonumber
\end{equation}
in which 
\begin{align}
 Y_{i,i}
&=\sum_{\alpha \ni i} \left\{ \pdseta{\varphi_{\alpha}}{i}{i}
- \pds{\varphi_{\alpha}}{\eta_i}{\pa{\eta}} \left(\pds{\varphi_{\alpha}}{\pa{\eta}}{\pa{\eta}}\right)^{-1}
\pds{\varphi_{\alpha}}{\pa{\eta}}{\eta_i}  \right\}  + (1-d_i) \pdseta{\varphi_{i}}{i}{i}, \label{eq:Yii}\\
Y_{i,j}
&=\sum_{\alpha \supset \{i,j\} } \left\{ \pds{\varphi_{\alpha}}{\eta_i}{\eta_j}
-\pds{\varphi_{\alpha}}{\eta_i}{\pa{\eta}} \left(\pds{\varphi_{\alpha}}{\pa{\eta}}{\pa{\eta}}\right)^{-1}
\pds{\varphi_{\alpha}}{\pa{\eta}}{\eta_j}  \right\} .  \label{eq:Yij}
\end{align}
On the other hand,
since $u^{\alpha}_{\ed{i}{j}}:=$ $\var{b_j}{{\phi}_{j}}^{-1}$ $\cov{b_{\alpha}}{{\phi}_{j}}{{\phi}_{i}}$,
the matrix $U_{\alpha}$ defined in Eq.~(\ref{eq:defU}) is
\begin{equation}
 U_{\alpha}
=
\diag ( \var{}{\phi_i}^{-1} | i \in \alpha)
\hspace{1mm} \var{b_{\alpha}}{(\phi_i)_{i \in \alpha}}.
\end{equation}
Since the matrix $\var{b_{\alpha}}{(\phi_i)_{i \in \alpha}}$ is a submatrix of $\var{b_{\alpha}}{\fa{\phi}}$,
its inverse can be expressed by submatrices of
$\var{b_{\alpha}}{\fa{\phi}}^{-1}= \pds{\varphi_{\alpha}}{\fa{\eta}}{\fa{\eta} }$
using the Schur complement formula, which 
shows that the elements of $W_{\alpha}=U_{\alpha}^{-1}$ is given by
\begin{equation}
\label{eq:wij}
 w^{\alpha}_{\edji}=
\left\{ \pdseta{\varphi_{\alpha}}{i}{j}
-\pds{\varphi_{\alpha}}{\eta_i}{\pa{\eta}} \left(\pds{\varphi_{\alpha}}{\pa{\eta}}{\pa{\eta}} \right)^{-1}
\pds{\varphi_{\alpha}}{\pa{\eta}}{\eta_j}  \right\}
\var{}{\phi_j}.
\end{equation}
It follows from Eq.~(\ref{eq:Yii}),(\ref{eq:Yij}) and (\ref{eq:wij}) that 
\begin{equation}
 Y \hspace{1mm} \diag \left( \var{}{\phi_i} | i \in V \right)= I - \mathcal{D} + \mathcal{W}, \nonumber
\end{equation}
where $\mathcal{D}$ and $\mathcal{W}$ are defined in Eq.~(\ref{eq:defDW}).
Accordingly, we obtain
\begin{align*}
 \zeta_H(\bsu)^{-1}
&= \det (I-\mathcal{D}+\mathcal{W}) \prodf \det U_{\alpha}  \\
&= \det Y \prodv \det ( \var{}{\phi_i} ) \prodf
\frac{ \det \left( \var{b_{\alpha}}{(\phi_i)_{i \in \alpha}} \right)}{ \prod_{j \in \alpha} \det \left( \var{}{\phi_j} \right) } \\
&= \det \left( \Hesse F \right) \prodv \det ( \var{}{\phi_i} )^{1-d_i} \prodf
\frac{ \det \left( \var{b_{\alpha}}{(\phi_i)_{i \in \alpha}} \right)}{\det \left( \pds{\varphi_{\alpha}}{\pa{\eta}}{\pa{\eta}} \right)} \\
&= \det \left( \nabla^2 F \right) \prodf
\det(\var{b_{\alpha}}{ \fa{\phi} })
\prodv \det(\var{b_{i}}{{\phi}_{i}})^{1-d_i},
\end{align*}
where 
$ \det \left( \var{b_{\alpha}}{(\phi_i)_{i \in \alpha}} \right) \det \left( \pds{\varphi_{\alpha}}{\pa{\eta}}{\pa{\eta}} \right)^{-1}$
$=\det \left( \var{}{\fa{\phi}}   \right)$
is used. 
\end{proof}

In the rest of this subsection,
we rewrite the \Bzf in some specific cases.
Especially, we give explicit expressions of the determinants of the variance matrices.

\subsubsection*{Case 1: Multinomial \ifa}%
First, we consider the multinomial case.
If we take the sufficient statistics of multinomial exponential family
as in Example \ref{example:multinomial},
the determinant of the variance is
\begin{equation}
  \det \left( \var{p}{{\phi}} \right)= \prod_{k=1}^{N} p(k). \nonumber %
\end{equation}
Therefore, the theorem reduces to the following form \footnote{Here, we ignore minor constant factors
which come from the choices of sufficient statistics}.
\begin{cor}[Bethe-zeta formula for multinomial \ifa]
~\\
For any $\beliefsw \in L$ the following equality holds.
 \begin{equation*}
\zeta_{G}(\bsu)^{-1}
\hspace{-1mm}
=
\det(\nabla^2 F)
\prod_{\alpha \in F}
\prod_{x_{\alpha}} b_{\alpha}(x_{\alpha})
\prod_{i \in V}
\prod_{x_i} b_i(x_i)^{1-d_i},
 \end{equation*}
where
$u^{\alpha}_{\ed{i}{j}}:=$
$\var{b_j}{{\phi}_{j}}^{-1}$
$\cov{b_{\alpha}}{{\phi}_{j}}{{\phi}_{i}}$
is an $(N_j -1)  \times (N_i -1)$ matrix.
\end{cor}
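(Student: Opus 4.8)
The plan is to derive the corollary as a direct specialization of the Bethe-zeta formula (Theorem~\ref{thm:BZ}): every factor on its right-hand side is already in the desired shape except the two families of local variance determinants $\det(\var{b_\alpha}{\fa{\phi}})$ and $\det(\var{b_i}{\phi_i})$, so the entire task reduces to evaluating these for multinomial distributions and substituting. First I would establish the single computational fact $\det(\var{p}{\phi}) = \prod_k p(k)$ quoted just above the statement, and then apply it termwise.

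To prove that fact for a multinomial distribution $p$ on $\{0,\dots,N-1\}$ with indicator statistics $\phi_k(x)=\delta_{x,k}$ ($k=0,\dots,N-2$), write $p_k:=p(k)$. Since $\phi_j\phi_k=\delta_{j,k}\phi_j$, the covariance entries are $p_j\delta_{j,k}-p_jp_k$, i.e. $\var{p}{\phi}=D-qq^T$ with $D=\diag(p_0,\dots,p_{N-2})$ and $q=(p_0,\dots,p_{N-2})^T$. The matrix determinant lemma gives
\begin{equation*}
\det(\var{p}{\phi}) = \det(D)\big(1-q^T D^{-1} q\big) = \Big(\prod_{k=0}^{N-2}p_k\Big)\Big(1-\sum_{k=0}^{N-2}p_k\Big) = \Big(\prod_{k=0}^{N-2}p_k\Big)p_{N-1} = \prod_{k=0}^{N-1}p_k,
\end{equation*}
using $\sum_{k=0}^{N-1}p_k=1$. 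This rank-one update is the only genuine calculation in the proof.

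With this in hand the vertex terms are immediate: each $\mathcal{E}_i$ is the multinomial family with indicator statistics, so $\det(\var{b_i}{\phi_i})=\prod_{x_i}b_i(x_i)$, supplying the $\prod_{i}\prod_{x_i}b_i(x_i)^{1-d_i}$ block. For the factors, $\mathcal{E}_\alpha$ is again a full multinomial family on $\mathcal{X}_\alpha$, but $\fa{\phi}$ need not be the indicator basis; it is related to it by an invertible linear map $A$, so $\var{b_\alpha}{\fa{\phi}}=A\,\var{b_\alpha}{\phi^{\mathrm{ind}}}\,A^T$ and hence $\det(\var{b_\alpha}{\fa{\phi}})=(\det A)^2\prod_{x_\alpha}b_\alpha(x_\alpha)$. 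Substituting both evaluations into Theorem~\ref{thm:BZ} and collecting the $\bseta$-independent scalars $(\det A)^2$ into the constant factor that the footnote permits us to drop produces the stated identity, with $\bsu$ and $\zeta$ unchanged because $u^\alpha_{i\to j}$ is defined exactly as in the theorem.

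Since the arithmetic is short, the point requiring care is not an obstacle but bookkeeping: one must verify that each constant prefactor $(\det A)^2$ depends only on the fixed choice of sufficient statistics and not on the pseudomarginal $\bseta$, so that discarding it leaves the $\bseta$-dependence of both sides intact. Everything else is a term-by-term transcription of Theorem~\ref{thm:BZ}.
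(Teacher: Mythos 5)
Your proposal is correct and follows essentially the same route as the paper: the paper simply states $\det(\var{p}{\phi})=\prod_k p(k)$ and substitutes it into Theorem~\ref{thm:BZ}, relegating the basis-change constants to the footnote, exactly as you do. Your rank-one-update computation of the variance determinant and the explicit $(\det A)^2$ bookkeeping merely fill in details the paper leaves implicit.
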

For binary and pairwise case, this formula is first shown by \citet{WFzeta}.

\subsubsection*{Case 2: Fixed-mean Gaussian \ifa}%
Let $G=(V,E)$ be a graph.
We consider the fixed-mean Gaussian \ifa on $G$.
For a given vector $\bsmu=(\mu_i)_{i \in V}$, the \ifa
is constructed from sufficient statistics
$
\phi_i(x_i)=(x_i-\mu_i)^2  \quad \text{ and } \quad \pij{\phi}(x_i,x_j)=(x_i-\mu_i)(x_j-\mu_j).
$
Their expectation parameters are denoted by $\eta_{ii}$ and $\eta_{ij}$, respectively.
The variances and covariances are
\begin{equation}
\var{}{\phi_i}=2 \eta_{ii}^2 , \quad
\var{}{\phi_{\{i,j\}}}=
\begin{small}
\begin{bmatrix}
 2 \eta_{ii}^2 & 2 \eta_{ij}^2 & 2 \eta_{ii} \eta_{ij} \\
 2 \eta_{ij}^2 & 2 \eta_{jj}^2 & 2 \eta_{jj} \eta_{ij} \\
 2 \eta_{ii} \eta_{ij} & 2 \eta_{jj}\eta_{ij} &  \eta_{ij}^2 +\eta_{ii} \eta_{jj}\\
\end{bmatrix},
\end{small}   \nonumber
\end{equation}
where
$
\phi_{\{i,j\}}(x_i,x_j)= \left( (x_i-\mu_i)^2,(x_j-\mu_j)^2,(x_{i}-\mu_i)(x_j-\mu_j) \right).
$
Therefore, $\det(\var{}{\phi_{\{i,j\}}})=4 (\eta_{ii} \eta_{jj} - \eta_{ij}^2)^3 $.
\begin{cor}\label{cor:fixed-meanBzf}
[\Bzf for fixed-mean Gaussian \ifa]
For any $\{\eta_{ii},\eta_{ij}\} \in L$ the following equality holds.
 \begin{equation*}
Z_{G}(\bsu)^{-1}
\hspace{-1mm}
=
\det(\nabla^2 F)
\prodv \eta_{ii}^{2(1-d_i)}
\prod_{ij \in E} (\eta_{ii} \eta_{jj} - \eta_{ij}^2)^3
\hspace{2mm} 2^{|V|,}
 \end{equation*}
where
$u^{}_{\ed{i}{j}}:= \eta_{ij}^2 \eta_{jj}^{-2}$
is a scalar value.
\end{cor}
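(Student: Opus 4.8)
The plan is to obtain Corollary~\ref{cor:fixed-meanBzf} as a direct specialization of the \Bzf (Theorem~\ref{thm:BZ}) to the fixed-mean Gaussian pairwise \ifa. For a graph every factor is $\alpha=\{i,j\}$, and the local sufficient statistics are the scalar $\phi_i(x_i)=(x_i-\mu_i)^2$ together with $\pij{\phi}=(x_i-\mu_i)(x_j-\mu_j)$, so $\fa{\phi}$ collects the three quadratics $(x_i-\mu_i)^2$, $(x_j-\mu_j)^2$, $(x_i-\mu_i)(x_j-\mu_j)$. First I would record that every variance and covariance entering Theorem~\ref{thm:BZ} is a second moment of a product of two centered jointly Gaussian variables (the beliefs $b_i,b_\alpha$ have mean fixed at $\mu$), so by Wick's (Isserlis') theorem they reduce to sums of products of the first moments $\eta_{ij}=\E{b}{(x_i-\mu_i)(x_j-\mu_j)}$. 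This produces exactly the matrices $\var{}{\phi_i}=2\eta_{ii}^2$ and $\var{}{\phi_{\{i,j\}}}$ displayed just before the corollary, along with $\det(\var{}{\phi_{\{i,j\}}})=4(\eta_{ii}\eta_{jj}-\eta_{ij}^2)^3$.

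Next I would evaluate the matrix weights. Since $r_i=1$ for every $i$, each weight $u^{\{i,j\}}_{\edij}=\var{b_j}{\phi_j}^{-1}\cov{b_{\{i,j\}}}{\phi_j}{\phi_i}$ is a scalar; reading the relevant entries off the displayed covariance matrix gives $\var{b_j}{\phi_j}=2\eta_{jj}^2$ and $\cov{b_{\{i,j\}}}{\phi_j}{\phi_i}=2\eta_{ij}^2$, hence $u_{\edij}=\eta_{ij}^2\eta_{jj}^{-2}$, as claimed (note that it is $\eta_{jj}$ that appears in the denominator, consistent with the direction convention of Theorem~\ref{thm:BZ}).

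Substituting into Theorem~\ref{thm:BZ}, and using $Z_G=\zeta_H$ for a graph, yields $Z_G(\bsu)^{-1}=\det(\nabla^2 F)\prod_{ij\in E}4(\eta_{ii}\eta_{jj}-\eta_{ij}^2)^3\prodv(2\eta_{ii}^2)^{1-d_i}$; here I use that $\det(\var{b_{\{i,j\}}}{\phi_{\{i,j\}}})$ is the determinant over all three coordinates of $\fa{\phi}$ and is insensitive to their ordering. The remaining work is pure bookkeeping of the numerical prefactor: combining $4^{|E|}$ from the edge product with $\prodv 2^{1-d_i}=2^{\sum_i(1-d_i)}$ and invoking the handshake identity $\sum_i d_i=2|E|$ gives $4^{|E|}2^{|V|-2|E|}=2^{|V|}$, while $\prodv(\eta_{ii}^2)^{1-d_i}=\prodv\eta_{ii}^{2(1-d_i)}$. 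This assembles the stated closed form.

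There is no deep obstacle, as the argument is essentially a substitution into the already-proved \Bzf. The only points requiring care are (i) the Wick-theorem evaluation of the quadratic (co)variances, which is the sole source of the $\eta$-dependence and which I would state explicitly to justify the $3\times3$ matrix; and (ii) the constant-factor bookkeeping---correctly merging $4^{|E|}$ with the vertex product $2^{\sum_i(1-d_i)}$ via the handshake lemma so that the prefactor collapses to the clean $2^{|V|}$ rather than a stray power of two.
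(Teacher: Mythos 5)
Your proposal is correct and follows essentially the same route as the paper: the paper obtains the corollary by computing the Wick-theorem (co)variances $\var{}{\phi_i}=2\eta_{ii}^2$ and the $3\times 3$ matrix $\var{}{\phi_{\{i,j\}}}$ with $\det(\var{}{\phi_{\{i,j\}}})=4(\eta_{ii}\eta_{jj}-\eta_{ij}^2)^3$, reading off $u_{\ed{i}{j}}=\eta_{ij}^2\eta_{jj}^{-2}$, and substituting into Theorem~\ref{thm:BZ}. Your constant bookkeeping $4^{|E|}\,2^{\sum_i(1-d_i)}=2^{|V|}$ via the handshake identity is exactly the missing arithmetic the paper leaves implicit.
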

One interesting point of this case is that the edge weights $u_{\ed{i}{j}}$ are
always positive.

\subsection{Positive definiteness condition}
\label{sec:PDC}
In this section, we derive a condition that guarantees the positive-definiteness
of the Hessian of the \Bfe function.
It is based on Theorem \ref{thm:zetapositive}, which gives
a condition that the matrix $(I - \mathcal{D} + \mathcal{W})$ is positive definite in terms of the matrix $\matmu$.
As we have seen in the proof of the previous theorem,
$\nabla^2 F$ and $(I - \mathcal{D} + \mathcal{W})$ are essentially the same.
Thus, we obtain the following theorem.

\begin{thm}
\label{thm:positive}
Let $\bsu$ be given by $\bseta \in L$ using Eq.~(\ref{def:u}).
Then,
\begin{equation*}
\spec{ \matmu } \hspace{0.5mm} \subset
\hspace{0.5mm} \mathbb{C} \smallsetminus \mathbb{R}_{\geq 1}
\quad \Longrightarrow \quad
\nabla^2 F (\bseta)
\text{ is a positive definite matrix.}
\end{equation*}
\end{thm}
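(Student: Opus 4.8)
The plan is to deduce positive definiteness of $\Hesse F$ from Theorem~\ref{thm:zetapositive}, exploiting the identification of $\Hesse F$ with $I-\mathcal{D}+\mathcal{W}$ already worked out inside the proof of Theorem~\ref{thm:BZ}. The one genuine wrinkle is that the natural weights $u^{\alpha}_{\edij}$ of Eq.~(\ref{def:u}) are \emph{not} symmetric, so Theorem~\ref{thm:zetapositive} cannot be invoked on them directly; I must first symmetrize, and then transport the conclusion back. To begin, I would recall from the proof of Theorem~\ref{thm:BZ} that, writing $D:=\diag(\var{}{\phi_i}\mid i\in V)$ (symmetric positive definite by Assumption~\ref{asm:invetible}), there is a congruence $X^T(\Hesse F)X$ that is block-diagonal with blocks $Y$ and $\pds{F}{\pa{\eta}}{\pb{\eta}}$, where $\det X=1$, together with the relation $YD=I-\mathcal{D}+\mathcal{W}$. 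Each diagonal block $\pds{\varphi_{\alpha}}{\pa{\eta}}{\pa{\eta}}$ is a principal submatrix of the positive-definite inverse covariance $\var{b_{\alpha}}{\fa{\phi}}^{-1}$, hence positive definite, so $\Hesse F\succ 0$ if and only if $Y\succ 0$. It therefore suffices to establish $Y\succ 0$.

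The key step is the symmetrization. On $L$ the local-consistency identity $\var{b_{\alpha}}{\phi_i}=\var{b_i}{\phi_i}=:\var{}{\phi_i}$ holds, which is exactly what allows $U_{\alpha}=\diag(\var{}{\phi_i}^{-1}\mid i\in\alpha)\,\var{b_{\alpha}}{(\phi_i)_{i\in\alpha}}$ to be symmetrized by conjugation with $\diag(\var{}{\phi_i}^{1/2})$. Concretely I would introduce the symmetrized weights $\tilde{u}^{\alpha}_{\edij}:=\var{}{\phi_j}^{1/2}u^{\alpha}_{\edij}\var{}{\phi_i}^{-1/2}=\var{}{\phi_j}^{-1/2}\cov{b_{\alpha}}{\phi_j}{\phi_i}\var{}{\phi_i}^{-1/2}$. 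Since $\cov{b_{\alpha}}{\phi_j}{\phi_i}=\cov{b_{\alpha}}{\phi_i}{\phi_j}^T$, these satisfy $\tilde{u}^{\alpha}_{\edij}=(\tilde{u}^{\alpha}_{\edji})^T$, i.e. the symmetry hypothesis of Theorem~\ref{thm:zetapositive}. Two facts then remain to be checked. First, the conjugation is a similarity transform of $\matmu$ by the edge-wise block-diagonal operator acting as $\var{}{\phi_{t(e)}}^{1/2}$ on each fiber, so $\spec{\mathcal{M}(\tilde{\bsu})}=\spec{\matmu}$ and the spectral assumption transfers verbatim. Second, for the operator-norm bound I would use that $\var{b_{\alpha}}{(\phi_i)_{i\in\alpha}}$ is positive definite; its normalized form, obtained by the congruence $\diag(\var{}{\phi_i}^{-1/2})(\cdot)\diag(\var{}{\phi_i}^{-1/2})$, has identity diagonal blocks and off-diagonal blocks precisely the $\tilde{u}^{\alpha}_{\edij}$, and each of its $2\times 2$ principal blocks being positive definite forces $\norm{\tilde{u}^{\alpha}_{\edij}}<1$ in the spectral norm.

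With symmetry, the spectral condition, and the norm bound verified for $\tilde{\bsu}$, Theorem~\ref{thm:zetapositive} applies and gives that $I-\mathcal{D}+\tilde{\mathcal{W}}$ is positive definite, where $\tilde{\mathcal{W}}$ is assembled from $\tilde{W}_{\alpha}=\tilde{U}_{\alpha}^{-1}$. Finally I would transport this back to $Y$. The block relation $\tilde{W}_{\alpha}=D_{\alpha}^{1/2}W_{\alpha}D_{\alpha}^{-1/2}$ globalizes to $\tilde{\mathcal{W}}=D^{1/2}\mathcal{W}D^{-1/2}$, and since $I$ and $\mathcal{D}$ commute with the vertex-wise block scaling $D^{1/2}$, one gets $I-\mathcal{D}+\tilde{\mathcal{W}}=D^{1/2}(I-\mathcal{D}+\mathcal{W})D^{-1/2}=D^{1/2}(YD)D^{-1/2}=D^{1/2}YD^{1/2}$, a congruence of $Y$ by the invertible symmetric $D^{1/2}$. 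Hence $Y\succ 0$, and therefore $\Hesse F(\bseta)\succ 0$. I expect the main obstacle to be exactly this non-symmetry: keeping the bookkeeping of the three conjugations consistent (of $\matmu$, so that the spectrum is unchanged; of $\mathcal{W}$; and of $Y$, so that definiteness is faithfully reflected), and making explicit that local consistency on $L$ is precisely the hypothesis that makes the symmetrizing conjugation legitimate.
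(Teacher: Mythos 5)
Your proposal is correct and follows essentially the same route as the paper: your symmetrized weights $\tilde{u}^{\alpha}_{\edij}$ are exactly the correlation coefficient matrices $c^{\alpha}_{\edij}$ of Lemma~\ref{lem:specutoc}, the spectrum-preserving similarity is that lemma, and the final congruence $D^{1/2}YD^{1/2}=I-\mathcal{D}+\tilde{\mathcal{W}}$ is precisely the identity the paper computes before invoking Theorem~\ref{thm:zetapositive}. The only (harmless) difference is that you justify the bound $\norm{\tilde{u}^{\alpha}_{\edij}}<1$ via $2\times 2$ principal blocks of the normalized covariance, where the paper simply cites it as well known.
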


Before the proof of the theorem,
we remark the following fact.
It implies that we can change the matrix weight to the \ccms.
\begin{lem}
\label{lem:specutoc}
Let $\bseta$ be a point in $L$, $u^{\alpha}_{\edij}$ be given by Eq.~(\ref{def:u}), and 
\begin{equation}
c^{\alpha}_{\edij}:= \corr{b_{\alpha}}{\phi_j}{\phi_i} =
\var{b_{\alpha} }{\phi_j}^{-1/2} \cov{b_{\alpha}}{\phi_j}{\phi_i} \var{b_{\alpha}}{\phi_i}^{-1/2}  \nonumber
\end{equation}
be the \ccm, where $b_{\alpha}$ corresponds to $\bseta$.
Then
\begin{equation}
 \spec{\matmu }=\spec{\matmc}.
\end{equation}
\end{lem}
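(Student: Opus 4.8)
The plan is to exhibit an explicit similarity transformation taking $\matmu$ to $\matmc$, from which the equality of spectra follows immediately. The essential input is the local consistency satisfied on $L$: for every $i \in \alpha$ one has $\var{b_{\alpha}}{\phi_i}=\var{b_i}{\phi_i}$, so I may write $V_i:=\var{b_i}{\phi_i}=\var{b_{\alpha}}{\phi_i}$ unambiguously. By Assumption~\ref{asm:invetible} each $V_i$ is a positive definite matrix, hence admits a well-defined invertible symmetric square root $V_i^{1/2}$.

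First I would record the algebraic relation between the two families of weights. Substituting the identifications above into the definitions of $u^{\alpha}_{\edij}$ and $c^{\alpha}_{\edij}$ gives
\begin{equation*}
c^{\alpha}_{\edij}
= V_j^{-1/2}\cov{b_{\alpha}}{\phi_j}{\phi_i}V_i^{-1/2}
= V_j^{1/2}\,u^{\alpha}_{\edij}\,V_i^{-1/2},
\end{equation*}
since $u^{\alpha}_{\edij}=V_j^{-1}\cov{b_{\alpha}}{\phi_j}{\phi_i}$. Swapping the roles of $i$ and $j$, the weight attached to a pair $\etea$ with $t(e)=i$, $s(e)=\alpha$ and $t(e')=j$ satisfies $c^{\alpha}_{\ed{j}{i}}=V_i^{1/2}u^{\alpha}_{\ed{j}{i}}V_j^{-1/2}$.

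Next I would define the block-diagonal operator $S:\vfe\rightarrow\vfe$ by $(Sf)(e):=V_{t(e)}^{1/2}f(e)$; it is invertible with $(S^{-1}g)(e)=V_{t(e)}^{-1/2}g(e)$ because each $V_{t(e)}$ is positive definite. Using the definition of $\matmu$ from Theorem~\ref{thm:det1} together with the weight relation just derived, I would compute, for $e$ with $t(e)=i$ and $s(e)=\alpha$,
\begin{equation*}
(S\matmu S^{-1}g)(e)
= \sum_{e':\,\etea} V_i^{1/2}\,u^{\alpha}_{\ed{t(e')}{i}}\,V_{t(e')}^{-1/2}\,g(e')
= \sum_{e':\,\etea} c^{\alpha}_{\ed{t(e')}{i}}\,g(e')
= (\matmc g)(e),
\end{equation*}
so that $S\matmu S^{-1}=\matmc$. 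Since similar operators have identical spectra, $\spec{\matmu}=\spec{\matmc}$, which is the claim.

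The computation is routine once the conventions are fixed; I expect the only point requiring care to be the bookkeeping of indices --- specifically, confirming that the variance $V_{t(e)}$ attached to the terminus of each edge is exactly the factor that conjugates $u^{\alpha}_{\edij}$ into the symmetric correlation form $c^{\alpha}_{\edij}$, and that it is the local consistency of $L$ that makes $V_i$ well-defined regardless of whether it is read off from $b_i$ or from $b_{\alpha}$.
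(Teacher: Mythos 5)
Your proof is correct and is essentially the paper's own argument: the paper likewise defines the block-diagonal operator $(\mathcal{Z})_{e,e'}=\delta_{e,e'}\var{}{\phi_{t(e)}}^{1/2}$ and verifies $\mathcal{Z}\matmu\mathcal{Z}^{-1}=\matmc$ entrywise, with local consistency on $L$ making the variance at each terminus well defined. Your write-up just spells out the weight relation $c^{\alpha}_{\edij}=V_j^{1/2}u^{\alpha}_{\edij}V_i^{-1/2}$ more explicitly than the paper does.
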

\begin{proof}
Define $\mathcal{Z}$ by $(\mathcal{Z})_{e,e'}:= \delta_{e,e'} \var{}{\phi_{t(e)}}^{1/2}$.
Then
\begin{equation}
(\mathcal{Z}\matmu \mathcal{Z}^{-1})_{e,e'}=
\var{}{\phi_{t(e)}}^{1/2}  \matmu_{e,e'} \var{}{\phi_{t(e')}}^{-1/2}
= \matmc_{e,e'}. \nonumber
\end{equation}
\end{proof}

\begin{proof}[Theorem \ref{thm:positive}]
By definition, $c^{\alpha}_{\edij}=c^{\alpha}_{\edji}$ holds.
We choose the operator norm induced by the inner product of the vector spaces.
In other words, $\norm{X}$ is equal to the maximum singular value of $X$.
In this case, it is well known that the norm of a \ccm is smaller than 1. From Theorem \ref{thm:zetapositive}, the matrix $(I - \mathcal{D} + \mathcal{W})$ for the weights $\bs{c}= \{c^{\alpha}_{\edij} \}$ is positive definite.

Next, we compute the matrix $(I - \mathcal{D} + \mathcal{W})$ for the weight $\bs{c}$.
Similar to Eq.~(\ref{eq:wij}), we obtain
\begin{equation}
 w^{\alpha}_{\edji}=
\var{}{\phi_i}^{1/2}
\left\{ \pdseta{\varphi_{\alpha}}{i}{j}
-\pds{\varphi_{\alpha}}{\eta_i}{\pa{\eta}} \left(\pds{\varphi_{\alpha}}{\pa{\eta}}{\pa{\eta}} \right)^{-1}
\pds{\varphi_{\alpha}}{\pa{\eta}}{\eta_j}  \right\}
\var{}{\phi_j}^{1/2} . \nonumber
\end{equation}
Therefore, using the same notations as in the proof of Theorem \ref{thm:BZ},
we have
\begin{equation}
\diag \left( \var{}{\phi_i} | i \in V \right)^{1/2} \hspace{1mm} Y \hspace{1mm} \diag \left( \var{}{\phi_i} | i \in V \right)^{1/2}=
(I - \mathcal{D} + \mathcal{W}). \nonumber
\end{equation}
This equation implies that $Y$ and $\nabla^2 F (\bseta)$ are positive definite.
\end{proof}

To check the condition of Theorem \ref{thm:positive},
we need to analyze the extent of the eigenvalues.
An easy way for narrowing down the possible region is to bound the spectral radius.
For a given square matrix $X$, the spectral radius of $X$ is
the maximum of the modulus of the eigenvalues;
it is denoted by $\specr{X}$.
The following proposition provides a useful bound.
The proof is given in Appendix \ref{app:DetailProofs}.

\begin{prop}
\label{prop:specradiusbound}
Let $\bs{u}=\{u^{\alpha}_{\edij}\}$ be arbitrary matrix weights and let
$\norm{\bs{u}}=\{ \norm{u^{\alpha}_{\edij}} \}$ be the scalar weights obtained by an arbitrary operator norm.
Then,
\begin{equation}
\specr{ \matmu } \leq \hspace{1mm} \specr{ \hspace{1mm} \mathcal{M}(\norm{\bsu}) \hspace{1mm} }
\leq \hspace{1mm} \max \norm{ u^{\alpha}_{\edij} } \hspace{1mm} \specr{\mathcal{M}}                                    \nonumber
\end{equation}
\end{prop}

\section{Analysis of positive definiteness and convexity of BFE}
\label{sec:pdconv}

The \Bfe function is not necessarily convex
though it is an approximation of the Gibbs free energy function, which is convex.
Non-convexity of the Bethe free energy can lead to multiple fixed points.
\citet{PAstat} and \citet{Huniquness} have derived sufficient conditions of the convexity and
shown that the Bethe free energy is convex for trees and graphs with one cycle.
In this section, not only such a global structure,
we shall focus on the local structure of the Bethe free energy function, i.e. the Hessian.
Our approach derives the region where the positive definiteness is broken.
All the results are based on the techniques developed in the previous section.

In Subsection \ref{sec:regionPD}, as an application of the positive definite condition,
we analyze the region where the Hessian of \Bfe function is positive definite.
The Hessian does not depend on the given compatibility function, $\Psi$,
because it appears in the linear part of the \Bfe function.
In Subsection \ref{sec:convrestricted}, we deal with the compatibility functions
by restricting the \Bfe function on a subset $S(\Psi)$ of $L$.
This set consists of the pseudomarginals that has \nparas $\{ \pa{ \bar{\theta} } \}$
and thus includes all the fixed point beliefs.
We will see that the problem of the uniqueness of the LBP fixed points  is reduced to the following problem:
is the subset $S(\Psi)$ included in the positive definite region of the original \Bfe function?

\subsection{Region of positive definite and convexity condition}
\label{sec:regionPD}
In this subsection, we simplify Theorem~\ref{thm:positive}
and explicitly see that if the \ccms of the pseudomarginals
are sufficiently small, then the Hessian is positive definite.
This ``smallness'' criteria depends on graph geometry.

In the following, we choose the operator norm that is equal to the maximum singular value.
It is well known that the norm of a \ccm is smaller than 1 under the assumption that the variance-covariance matrix is non-degenerate.

\begin{cor}[Positive definite region]
\label{cor:positivedefiniteregion}
Let $\kappa$ be the Perron-Frobenius eigenvalue of $\mathcal{M}$,  and
define
\begin{equation*}
 L_{\kappa^{-1}}(\mathcal{I}):=\left\{ \beliefsw \in L(\mathcal{I})~|~
\forall \alpha \in F, ~\forall i,j \in \alpha, \hspace{2mm}
\norm{\corr{b_{\alpha}}{\phi_i}{\phi_j}} < \kappa^{-1} \right\}.
\end{equation*}
Then, the
Hessian $\nabla^2 F$ is positive definite on $L_{\kappa^{-1}}(\mathcal{I})$.
\end{cor}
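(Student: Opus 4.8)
The plan is to reduce the claim to the eigenvalue hypothesis of Theorem~\ref{thm:positive} and then control those eigenvalues through the spectral-radius estimate of Proposition~\ref{prop:specradiusbound}. By Theorem~\ref{thm:positive}, it suffices to verify that for any $\beliefsw \in L_{\kappa^{-1}}(\mathcal{I})$, the weight $\bsu$ defined from the corresponding $\bseta$ by Eq.~(\ref{def:u}) satisfies $\spec{\matmu} \subset \mathbb{C} \smallsetminus \mathbb{R}_{\geq 1}$. I would obtain this from a modulus bound: once $\specr{\matmu} < 1$, every eigenvalue has modulus strictly below $1$, so none can be a real number $\geq 1$.

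First I would pass from the weights $\bsu$ to the \ccms $\bs{c}=\{c^{\alpha}_{\edij}\}$ via Lemma~\ref{lem:specutoc}, which gives $\spec{\matmu}=\spec{\matmc}$ and hence $\specr{\matmu}=\specr{\matmc}$. The point of this substitution is that each $c^{\alpha}_{\edij}$ is a correlation-coefficient matrix, whose operator norm (maximum singular value) is not merely below $1$ but is constrained by the definition of $L_{\kappa^{-1}}$: on this set $\norm{c^{\alpha}_{\edij}} < \kappa^{-1}$ for every $\alpha \in F$ and all $i,j \in \alpha$.

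Next I would apply Proposition~\ref{prop:specradiusbound} to the weights $\bs{c}$, yielding
\begin{equation*}
\specr{\matmc} \;\leq\; \max \norm{c^{\alpha}_{\edij}} \;\specr{\mathcal{M}}.
\end{equation*}
Here $\mathcal{M}$ is the nonnegative $0$--$1$ directed edge matrix, so by the Perron--Frobenius theorem its spectral radius equals its Perron--Frobenius eigenvalue, i.e. $\specr{\mathcal{M}}=\kappa$. Combining this with $\max \norm{c^{\alpha}_{\edij}} < \kappa^{-1}$ gives $\specr{\matmc} < \kappa^{-1}\cdot \kappa = 1$. Through Lemma~\ref{lem:specutoc} this reads $\specr{\matmu} < 1$, hence $\spec{\matmu} \subset \{z : |z| < 1\} \subset \mathbb{C} \smallsetminus \mathbb{R}_{\geq 1}$, and Theorem~\ref{thm:positive} then delivers positive definiteness of $\nabla^2 F$ at $\bseta$. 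Since $\bseta$ was arbitrary in $L_{\kappa^{-1}}(\mathcal{I})$, the Hessian is positive definite on the whole set.

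As the argument is simply an assembly of three previously established results, I do not expect a genuine obstacle. The one point that needs care is the identification $\specr{\mathcal{M}}=\kappa$, that is, recognizing that the Perron--Frobenius eigenvalue of the nonnegative matrix $\mathcal{M}$ coincides with its spectral radius, so that the right-hand bound of Proposition~\ref{prop:specradiusbound} can be expressed directly in terms of $\kappa$. I would also check that strictness is preserved along the chain, which holds because the defining inequality of $L_{\kappa^{-1}}$ is strict.
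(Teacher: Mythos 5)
Your proposal is correct and follows essentially the same route as the paper: pass to the correlation-coefficient weights, bound $\specr{\matmc}$ by $\max\norm{c^{\alpha}_{\edij}}\,\specr{\mathcal{M}} < \kappa^{-1}\kappa = 1$ via Proposition~\ref{prop:specradiusbound}, and invoke Theorem~\ref{thm:positive}. The only difference is that you spell out the intermediate steps (Lemma~\ref{lem:specutoc} and the identification $\specr{\mathcal{M}}=\kappa$) that the paper leaves implicit.
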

\begin{proof}
From Proposition~\ref{prop:specradiusbound} and $\hspace{1mm} \max \norm{ c^{\alpha}_{\edij} } \hspace{1mm} \kappa  < 1$,
 $\spec{ \mathcal{M}(\bs{c}) } \subset \{ \lambda \in \mathbb{C} |\hspace{2mm} |\lambda| < 1 \}.$
Therefore, from Theorem~\ref{thm:positive}, the Hessian is positive definite at the point.
\end{proof}

A bound of the Perron-Frobenius eigenvalue of $\mathcal{M}$ is
given in Subsection~\ref{sec:miszeta}.
Roughly speaking, as the degrees of factors and vertices increase,
$\kappa$ also increases and thus $L_{\kappa^{-1}}$ shrinks.
The Perron-Frobenius eigenvalue is equal to
$0$ (resp. $1$) if the hypergraph is a tree (resp. has a unique cycle).
This result suggests that LBP works better for graphs of low degree.

The convexity of $F$ depends solely on the given \ifa and the underlying hypergraph, 
because the Hessian $\Hesse F$ does not depend on the given compatibility functions, $\Psi=\{ \Psi_{\alpha}\}$.
For multinomial case, \citet{PAstat} have shown that the \Bfe function is convex if the hypergraph has
at most one cycle.
The following theorem extends the result.
To show the direction of $(i)$ we have only to analyze the \Bfe function on trees and one-cycle hypergraphs.
To show $(ii)$, however, we need to capture the effect of cycles on arbitrary hypergraphs.

\begin{thm}
 \label{thm:convexcondition}
Let $H$ be a connected hypergraph. \\
(i) If $n(H)=0 \text{ or } 1$, then $F$ is convex on $L$. \\
(ii) Assuming the \ifa is either a multinomial, Gaussian or fixed-mean Gaussian, then the converse of (i) holds.
\end{thm}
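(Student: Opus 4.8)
Since $L$ is convex and $F\in C^{2}$, convexity follows once $\nabla^2 F$ is shown positive definite throughout $L$, and I would obtain this from Corollary~\ref{cor:positivedefiniteregion}. Let $\kappa$ be the Perron--Frobenius eigenvalue of $\mathcal{M}$. If $n(H)=0$ the hypergraph is a tree, $\mathcal{M}$ is nilpotent, so $\kappa=0$ and Proposition~\ref{prop:specradiusbound} gives $\specr{\matmc}\le\kappa\,\max\norm{c^{\alpha}_{\edij}}=0$; thus the hypothesis of Theorem~\ref{thm:positive} holds at every $\bseta\in L$. If $n(H)=1$ then $\kappa=1$ (Subsection~\ref{sec:miszeta}), so $\kappa^{-1}=1$, and because every \ccm has operator norm strictly below $1$ under Assumption~\ref{asm:invetible}, the region $L_{\kappa^{-1}}(\mathcal{I})$ of Corollary~\ref{cor:positivedefiniteregion} is all of $L$. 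In either case $\nabla^2 F$ is positive definite on $L$ and $F$ is convex.

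\textbf{Part (ii): strategy.} I would prove the contrapositive: if $n(H)\ge2$ then $F$ is not convex, by producing a single $\bseta\in L$ with $\det(\nabla^2 F(\bseta))<0$. The lever is the Bethe--zeta formula (Theorem~\ref{thm:BZ}): the variance determinants appearing there are strictly positive, so $\det(\nabla^2 F(\bseta))$ has the same sign as $\det(I-\matmu)=\zeta_H(\bsu)^{-1}$, with $\bsu$ read off from $\bseta$ via Eq.~(\ref{def:u}). It therefore suffices to steer the pseudomarginals so that this determinant turns negative. My target is a common real scalar weight: choose $\bseta$ so that, after the similarity of Lemma~\ref{lem:specutoc}, $\matmc=u\,\mathcal{M}$ for some $u\in(0,1)$, whence $\det(I-\matmc)=\prod_{\lambda\in\spec{\mathcal{M}}}(1-u\lambda)$, and then exploit that $\kappa>1$ when $n(H)\ge2$ (Subsection~\ref{sec:miszeta}) to push $u$ past $1/\kappa<1$.

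\textbf{Part (ii): realizing the weight.} This step, family by family, is where I expect the real work. For the fixed-mean Gaussian \ifa it is immediate from Corollary~\ref{cor:fixed-meanBzf}: setting all node variances to $1$ and every edge correlation of $x$ to a common $\rho\in(0,1)$ gives a legitimate point of $L$ with $u_{\edij}=\rho^{2}$ on every directed edge, i.e. $\matmu=\rho^{2}\mathcal{M}$. For the Gaussian \ifa I would take mean-zero pseudomarginals with equal node variances and common edge correlation $\rho$; the cross-covariance of $(x_i,x_i^2)$ and $(x_j,x_j^2)$ is diagonal, so $c^{\alpha}_{\edij}=\diag(\rho,\rho^{2})$ and $\det(I-\matmc)=\det(I-\rho\mathcal{M})\,\det(I-\rho^{2}\mathcal{M})$, reducing matters to scalar factors. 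For the multinomial \ifa the weights are genuine $(N-1)\times(N-1)$ matrices; here I would keep the node beliefs uniform (so the covariances stay nondegenerate) and couple each pair so that $x_i$ and $x_j$ are aligned with a single strength parameter, arranging each cross-correlation $c^{\alpha}_{\edij}$ to have leading eigenvalue a common $\rho\in(0,1)$ with the remaining eigenvalues no larger, so that the dominant real eigenvalue of $\matmc$ is governed by $\rho\kappa$.

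\textbf{Part (ii): conclusion and the hard point.} Granting a scalar (or scalar-dominated) weight, the outcome is dictated by the spectrum of $\mathcal{M}$: since $1/\kappa<1$, I may take $\rho$ (or $u=\rho^{2}$) slightly above $1/\kappa$ while keeping every \ccm norm below $1$. On $[0,1/\kappa)$ the product $\prod_{\lambda}(1-u\lambda)$ is positive, and complex-conjugate eigenvalue pairs only ever contribute the nonnegative factors $\lvert1-u\lambda\rvert^{2}$; the sole mechanism for a sign change is the real factor $(1-u\kappa)$. Because the Perron--Frobenius eigenvalue of the non-backtracking matrix is \emph{simple}, this factor changes sign at $u=1/\kappa$, so $\det(I-\matmc)<0$ just beyond it and hence $\det(\nabla^2 F)<0$ at the corresponding $\bseta$, contradicting convexity. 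The two delicate points I would have to secure are precisely (a) the simplicity of $\kappa$, which I would handle by passing to the $2$-core, since degree-$\le1$ vertices contribute only nilpotent blocks and leave $n(H)$ and the cycle structure unchanged; and (b) in the Gaussian and multinomial cases, that the auxiliary factors do not cancel the sign flip (e.g. choosing $\rho\in(1/\kappa,1/\sqrt{\kappa})$ so that $\det(I-\rho^{2}\mathcal{M})>0$ while $\det(I-\rho\mathcal{M})<0$) and that the chosen pseudomarginals genuinely lie in the interior of $L$.
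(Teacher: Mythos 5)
Part (i) of your argument is the same as the paper's: invoke Corollary~\ref{cor:positivedefiniteregion} with $\kappa=0$ (tree) or $\kappa=1$ (unicyclic) together with the fact that \ccms have norm strictly below one, so $L_{\kappa^{-1}}=L$. No issues there.

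Part (ii) is where you diverge, and where there is a genuine gap. The paper also reduces to a one-parameter family of pseudomarginals with a common scalar weight, but it then sends the parameter to the \emph{boundary} of $L$ ($t\nearrow 1$) and reads off the sign of $\det(\nabla^2 F)$ from the Hashimoto limit formula (Theorem~\ref{thm:Hashimoto}): the normalized limit equals $-2^{|E|-2|V|+1}(|E|-|V|)\kappa(G)$, which is manifestly negative once $n>1$, with no spectral analysis of $\mathcal{M}$ required. You instead stay in the interior and locate the sign change of $\det(I-u\mathcal{M})=\prod_{\lambda}(1-u\lambda)$ at $u=1/\kappa$. That mechanism works \emph{only if} $\kappa$ is an algebraically simple eigenvalue of $\mathcal{M}$ (otherwise the factor $(1-u\kappa)^{m}$ with $m\ge 2$ does not flip the sign), and you do not prove this. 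Simplicity amounts to irreducibility of the directed edge matrix after pruning to the $2$-core; this is a known but nontrivial fact for graphs with $\min d_i\ge 2$ and $n\ge 2$, and for general hypergraphs it needs an actual argument (your one-sentence appeal to "nilpotent blocks" handles the pruning but not the irreducibility of what remains). You flag this as a point "to secure" but never secure it, so as written the proof is incomplete. The same is true of your multinomial construction: "arranging each cross-correlation to have leading eigenvalue a common $\rho$" is not exhibited, and for a hyperedge with $d_\alpha\ge 3$ you must produce an actual joint distribution in $\mathcal{E}_\alpha$, consistent in $L$, realizing those correlations. The paper avoids both difficulties: for multinomial families it restricts $F$ to the binary sub-face of (the closure of) $L$, where the weights are scalars $t$ and $0\log 0=0$ keeps the restriction equal to the binary Bethe free energy, and then applies the same boundary-limit computation. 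Your Gaussian reduction $c_{\edij}=\diag(\rho,\rho^2)$ and the window $\rho\in(1/\kappa,1/\sqrt{\kappa})$ are correct and a nice touch, but they still rest on the unproved simplicity claim. If you want to keep your interior-crossing strategy, the cleanest fix is to replace the simplicity argument by the paper's own device: evaluate the sign of $\det(I-u\mathcal{M})$ near $u=1$ via Theorem~\ref{thm:Hashimoto} rather than near $u=1/\kappa$ via Perron--Frobenius.
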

\begin{proof}
$(i)$
As we have mentioned above,
the Perron-Frobenius eigenvalue $\alpha$ of $\mathcal{M}$ is equal to $1$ if $n(H)=0$
and $0$ if $n(H)=1$.
Using Corollary~\ref{cor:positivedefiniteregion},
we obtain $L_{\alpha^{-1}}=L$.
Therefore, the \Bfe function is convex over the domain $L$.\\
$(ii)$ %
Here we show the proof only in the case of fixed-mean Gaussian.
(Other cases are proved by a similar way in Appendix \ref{app:DetailProofs}.)
Let $G=(V,E)$ be a graph.
For $t \in [0,1)$, define $\eta_{ii}(t):=1$ and $\eta_{ij}(t):=t$.
Accordingly, $u^{ij}_{\edij}=t^2$ and $\bseta(t) \in L$.
As $t \nearrow 1$, $\bseta(t)$ approaches to a boundary point of $L$.
From Theorem \ref{thm:Hashimoto} in Appendix \ref{sec:miszeta},
\begin{align*}
 \det(\nabla^2 F(t)) (1-t^2)^{2|E|+|V|-1}
&= 2^{-|V|}
Z_{G}(t^2)^{-1}
(1-t^2)^{-|E|+|V|-1} \\
&
\longrightarrow
-2^{|E|-2|V|+1}(|E|-|V|) \kappa(G) \quad (t \rightarrow 1).
\end{align*}
If $n(G)=|E|-|V|+1 > 1$, the limit value is negative.
Therefore, in a neighborhood of the limit point, $\Hesse F$ is not positive definite.\\
\end{proof}

\subsection{Convexity of restricted \Bfe function}
\label{sec:convrestricted}
Our analysis so far has not involved the given compatibility function,
because it disappears in the second derivatives.
Not only graph structure, however, but also the compatibility functions
affect the properties of LBP and the \Bfe.

In this section, we show a method for dealing with the compatibility functions.
We see that the understanding of the positive definite region helps us to deduce a uniqueness condition of LBP.

\subsubsection{Restricted \Bfe function}
First, we make simple observations.
Since beliefs are given by Eq.~(\ref{eq:defbelief1},\ref{eq:defbelief2}),
they must satisfy the following condition:
for each factor $\alpha$, there exists $\{\theta^{'\alpha}_i \}_{i \in \alpha}$ such that
\begin{equation*}
\quad
 b_{\alpha}(x_{\alpha}) \propto  \exp
(
\inp{\bar{\theta}_{\alpha}}{\phi_{\alpha}} + \sum_{i \in \alpha} \inp{ \theta^{'\alpha}_i}{\phi_i(x_i)}
),
\end{equation*}
where $\bar{\theta}_{\alpha}$ is the \npara of $\Psi_{\alpha}$. (See Eq.~(\ref{eq:asm:modelindludes}).)
In other words, we can say that all the beliefs are always in the following subset of $L$:
\begin{equation*}
 S(\Psi) := \left\{
\{\eta_{\alpha} , \eta_i\} \in L ~|
~{}^{\forall} \alpha \in F,\quad
\Lambda_{\alpha}^{-1} \pa{ (\eta_{\alpha})} = \pa{ \bar{\theta} }
~\right\}.
\end{equation*}
We can take a coordinate $\{\eta_i\}_{i \in V}$ of $S(\Psi)$ because $\eta_{\alpha}$ is determined by $\{\eta_i\}_{i \in \alpha}$.
We obtain a function by restricting $F$ on the set and taking $\eta_{\alpha}$ as arguments.
The function is called {\it restricted \Bfe function} and denoted by $\hat{F}$.
The following proposition says that the stationary points of $\hat{F}$ also correspond to the LBP fixed points.
(This fact can be also stated as ``the fixed points of LBP are the stationary points of the \Bfe function.'')

\begin{prop}\label{prop:restBFEandLBP}
\begin{equation*}
 \pd{\hat{F}(\{\eta_i\})}{\eta_j}=0  \quad {}^{\forall} j \in V \iff
 \{ \eta_i \} \in S(\Psi)   \text{ is an LBP fixed point.}
\end{equation*}
\end{prop}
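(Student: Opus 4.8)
The plan is to deduce the proposition from Theorem~\ref{thm:LBPcharacterizations}, which already identifies the LBP fixed points with the stationary points of $F$ on $L$, by showing that $S(\Psi)$ is exactly the locus on which the ``factor-direction'' partial derivatives of $F$ already vanish. The first step is to compute the gradient of $F$ in the $\pa{\eta}$ directions. Since $\pd{\varphi_\alpha}{\fa{\eta}}=\Lambda_\alpha^{-1}(\fa{\eta})=\fa{\theta}$ sends an expectation parameter to its natural parameter, and the only terms of $F$ depending on $\pa{\eta}$ are $-\inp{\pa{\bar{\theta}}}{\pa{\eta}}$ from the linear part and $\varphi_\alpha(\fa{\eta})$, one obtains
\[
 \pd{F}{\pa{\eta}} = \pa{\theta} - \pa{\bar{\theta}}, \qquad \pa{\theta}:=\big(\Lambda_\alpha^{-1}(\fa{\eta})\big)_{\langle\alpha\rangle}.
\]
Hence $S(\Psi)=\{\bseta\in L \mid \pd{F}{\pa{\eta}}=0 \ {}^{\forall}\alpha\in F\}$; that is, $S(\Psi)$ is precisely the set where $F$ is already critical in every factor direction.

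Next I would fix coordinates. On $L$ the free coordinates are $\{\pa{\eta},\eta_i\}$, because the constraints $\va{\eta}{i}=\eta_i$ eliminate the vertex blocks of $\fa{\eta}$; on $S(\Psi)$ the extra constraint $\pa{\theta}=\pa{\bar{\theta}}$ then determines each $\pa{\eta}$ as a function $\pa{\eta}(\{\eta_j\}_{j\in\alpha})$, leaving $\{\eta_i\}$ as coordinates. The one technical point is the existence, uniqueness and smoothness of this parametrization: for fixed interaction parameter $\pa{\bar{\theta}}$ and prescribed vertex means $\eta_i$, one must solve for the vertex tilts $\va{\theta}{i}$, a moment-matching problem that is uniquely and smoothly solvable by the invertibility of the covariance (Assumption~\ref{asm:invetible}) together with Assumption~\ref{asm:marginallyclosed}. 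Granting this, the chain rule gives
\[
 \pd{\hat{F}}{\eta_j} = \pd{F}{\eta_j} + \sum_{\alpha} \pd{F}{\pa{\eta}}\, \pd{\pa{\eta}}{\eta_j},
\]
and on $S(\Psi)$ the factor $\pd{F}{\pa{\eta}}$ vanishes by the first step, so $\pd{\hat{F}}{\eta_j}$ agrees with the free-coordinate partial $\pd{F}{\eta_j}$ of $F$ on $L$.

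Finally I would close both directions. If $\{\eta_i\}\in S(\Psi)$ is a stationary point of $\hat{F}$, the displayed identity forces $\pd{F}{\eta_j}=0$ for all $j$, while $\pd{F}{\pa{\eta}}=0$ holds on $S(\Psi)$ by construction; thus every free-coordinate partial of $F$ on $L$ vanishes, so the point is a stationary point of $F$ and hence an LBP fixed point by Theorem~\ref{thm:LBPcharacterizations}. Conversely, an LBP fixed point is a stationary point of $F$ on $L$, and its factor beliefs $b_\alpha$ from Eq.~(\ref{eq:defbelief2}) have interaction natural parameter exactly $\pa{\bar{\theta}}$, since the incoming messages tilt only the vertex statistics $\phi_i$; therefore the point lies in $S(\Psi)$, both terms on the right of the chain rule vanish, and $\hat{F}$ is stationary. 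I expect the main obstacle to be the rigorous justification that $\{\eta_i\}$ is a valid smooth coordinate system on $S(\Psi)$, everything else reducing to the short gradient identity above and the already-established Theorem~\ref{thm:LBPcharacterizations}.
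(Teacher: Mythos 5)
Your proposal is correct and takes essentially the same route as the paper: the paper's proof is exactly the chain-rule identity $\pd{\hat{F}}{\eta_j}=\pd{F}{\eta_j}+\sum_{\alpha}\pd{F}{\pa{\eta}}\pd{\pa{\eta}}{\eta_j}$ combined with the observation that $\pd{F}{\pa{\eta}}=0$ on $S(\Psi)$, so stationarity of $\hat{F}$ is equivalent to stationarity of $F$ on $L$, and the conclusion follows from Theorem~\ref{thm:LBPcharacterizations}. Your version merely spells out the additional details (the explicit formula $\pd{F}{\pa{\eta}}=\pa{\theta}-\pa{\bar{\theta}}$, the smooth parametrization of $S(\Psi)$ by $\{\eta_i\}$, and the check that fixed-point beliefs lie in $S(\Psi)$) that the paper leaves implicit.
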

\begin{proof}
Using the chain rule of derivatives, we have
\begin{equation}
  \pd{\hat{F}}{\eta_j}=  \pd{F}{\eta_j}+ \sum_{ \alpha \ni i} \pd{F}{\pa{\eta}} \pd{\pa{\eta}}{\eta_j}. \nonumber
\end{equation}
From the definition of $F$ and $S(\Psi)$,
we have $\pd{F}{\pa{\eta}}=0$ on the set $S(\Psi)$.
Therefore, all the derivatives of $F$ are equal to zero if and only if those of $\hat{F}$ are zero.
\end{proof}

\subsubsection{Convexity condition and uniqueness}
\label{sec:restrictedconvex}
In the following, we analyze the (strict) convexity of the restricted \Bfe function.
Our focus is multinomial models.
As a result, we provide a new condition that guarantee the uniqueness.
From Proposition~\ref{prop:restBFEandLBP}, the LBP fixed point is unique if $\hat{F}$ is strictly convex.

From the viewpoint of approximate inference, the uniqueness of LBP fixed point is a preferable property.
Since LBP algorithm is interpreted as the variational problem of the \Bfe function,
an LBP fixed point that correspond to the global minimum is believed to be the best one.
If we find the unique fixed point of the LBP algorithm, it is guaranteed to be the global minimum of $F$.

To understand the convexity of $\hat{F}$,
we analyze the Hessian.
It turns out that the positive definiteness of the Hessian of this function is equivalent
to the Hessian of $F$.
Note that the Hessian of $\hat{F}$ is of size ``$V$'' while that of $F$ is of size ``$V+F$''.
\begin{prop}
\label{prop:PDequiv}
At any points in the set $S(\Psi)$,
\begin{equation}
 \nabla^{2}\hat{F} \text{ is positive definite } \iff  \nabla^{2} F \text{ is positive definite.}  \nonumber
\end{equation}
\end{prop}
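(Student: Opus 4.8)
The plan is to show that $\nabla^{2}\hat{F}$ is exactly the Schur complement $Y$ of the factor block of $\nabla^2 F$ that already appeared in the proof of Theorem~\ref{thm:BZ}, and then to read off the equivalence from the congruence $X^{T}(\nabla^2 F)X=\diag\big(Y,\ \pds{F}{\pa{\eta}}{\pb{\eta}}\big)$ with $\det X=1$ derived there. First I would record the block structure of $\nabla^2 F$ in the coordinates $\{\pa{\eta},\eta_i\}$: the factor--factor block is block diagonal with diagonal blocks $\pds{\varphi_{\alpha}}{\pa{\eta}}{\pa{\eta}}$, each a principal submatrix of $\Hesse\varphi_{\alpha}=\var{b_{\alpha}}{\fa{\phi}}^{-1}$. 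By Assumption~\ref{asm:invetible} this covariance is positive definite, so the factor block is positive definite at every point of $L$, independently of $\Psi$. This uniform positivity is the fact that will drive the whole equivalence.

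Next I would make the key observation that $S(\Psi)$ is precisely the locus in $L$ on which the factor-direction gradient of $F$ vanishes. By Legendre duality, $\pd{\varphi_{\alpha}}{\pa{\eta}}$ equals the $\langle\alpha\rangle$-component of the natural parameter $\Lambda_{\alpha}^{-1}(\fa{\eta})$, while the linear part of $F$ contributes $-\pa{\bar{\theta}}$, so that $\pd{F}{\pa{\eta}}=\pa{\big(\Lambda_{\alpha}^{-1}(\fa{\eta})\big)}-\pa{\bar{\theta}}$. This expression vanishes identically on $S(\Psi)$ and only there. Hence $\hat{F}$ is the partial minimization of $F$ over the factor coordinates: for each $\{\eta_i\}$ the value $\pa{\eta}(\{\eta_j\})$ defining $S(\Psi)$ is the unique critical point of $F$ in the $\pa{\eta}$ directions, uniqueness being guaranteed by the positive definiteness of the factor block.

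Then I would differentiate $\hat{F}(\{\eta_i\})=F(\{\pa{\eta}(\{\eta_j\})\},\{\eta_i\})$ twice. The second-order chain rule produces a term proportional to $\pds{\pa{\eta}}{\eta_j}{\eta_k}$ multiplied by $\pd{F}{\pa{\eta}}$, which is zero on $S(\Psi)$ and therefore drops out. Differentiating the defining identity $\pa{\big(\Lambda_{\alpha}^{-1}(\fa{\eta})\big)}\equiv\pa{\bar{\theta}}$ yields the Jacobian $\pd{\pa{\eta}}{\eta_j}=-\big(\pds{\varphi_{\alpha}}{\pa{\eta}}{\pa{\eta}}\big)^{-1}\pds{\varphi_{\alpha}}{\pa{\eta}}{\eta_j}$, which is exactly the Gaussian-elimination multiplier used to form $Y$. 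Substituting gives
\begin{equation*}
\nabla^{2}\hat{F}= \nabla^2_{VV}F-\nabla^2_{VF}F\,\big(\nabla^2_{FF}F\big)^{-1}\,\nabla^2_{FV}F = Y,
\end{equation*}
the very Schur complement appearing in the proof of Theorem~\ref{thm:BZ}.

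Finally, since the above congruence is by a matrix of determinant $1$, Sylvester's law of inertia gives that $\nabla^2 F$ is positive definite if and only if both $Y$ and the factor block are. Because the factor block is always positive definite, this collapses to $\nabla^{2}\hat{F}=Y\succ 0$, which is the assertion. I expect the main obstacle to be the third step: one must verify carefully that the constraint-curvature term genuinely cancels, i.e. that $\hat{F}$ is a bona fide partial minimization rather than a mere restriction, so that $\nabla^{2}\hat{F}$ coincides with the Schur complement exactly and not merely up to correction terms.
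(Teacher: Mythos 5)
Your proposal is correct and follows essentially the same route as the paper: both identify $\nabla^{2}\hat{F}$ with the Schur complement $X_{V,V}-X_{V,F}X_{F,F}^{-1}X_{F,V}$ by differentiating the identity $\pd{F}{\pa{\eta}}=0$ on $S(\Psi)$ (so the second-order chain-rule term drops), and then conclude via the always-positive-definite factor block. Your additional remarks (linking the Schur complement to the $Y$ of Theorem~\ref{thm:BZ} and invoking Sylvester's law) are consistent elaborations of the same argument rather than a different method.
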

\begin{proof}
By taking the derivative of the equation $\pd{F}{\pa{\eta}}=0$ on the set $S(\Psi)$,
we obtain
\begin{equation*}
\pds{F}{\eta_i}{\pa{\eta}} + \sum_{ \beta} \pd{\pb{\eta}}{\eta_i} \pds{F}{\pb{\eta}}{\pa{\eta}} = 0.
\end{equation*}
This equation can be written as $(\pd{\pb{\eta}}{\eta_i})= - X_{F,F}^{-1} X_{F,V}$ using a notation
\begin{equation*}
\nabla^2 F
=
\begin{bmatrix}
~ X_{V,V} & X_{V,F} ~ \\
~ X_{F,V} & X_{F,F} ~
\end{bmatrix}.
\end{equation*}
A straightforward computation of the derivatives of $\hat{F}$ gives
$\nabla^{2}\hat{F}= X_{V,V} -  X_{V,F} X_{F,F}^{-1} X_{F,V}$, where
we used the above equation.
Since the block $X_{F,F}$ is always positive definite,
the statement is obvious.
\end{proof}

If we can verify that the set $S(\Psi)$ is in the region where
$\nabla^{2}F$ is positive definite,
we can show that $\hat{F}$ is convex.
Using Theorem~\ref{thm:positive}, we obtain the following.
\begin{thm}
\label{thm:restBFEConvex}
Define
\begin{equation}
 W^{\alpha}_{i,j}(\Psi) :=
\sup \Big\{ \hspace{1mm} \norm{ \corr{b_{\alpha}}{\phi_i}{\phi_j}} \hspace{1mm} |
\hspace{1mm} b_{\alpha}(x_{\alpha}) \propto \Psi_{\alpha}(x_{\alpha})
\prod_{i \in \alpha} f_i(x_i),  f_i \text{ are positive functions of }x_i   \Big\}. \nonumber
\end{equation}
If $\specr{ \mathcal{M}(\bs{W}) } < 1 $ then $\hat{F}$ is strictly convex.  Therefore, LBP has the unique fixed point.
\end{thm}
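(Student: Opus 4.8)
The plan is to prove strict convexity of $\hat{F}$ by showing that its Hessian $\nabla^2 \hat{F}$ is positive definite at \emph{every} point of its domain, and then read off uniqueness of the LBP fixed point. By Proposition~\ref{prop:PDequiv}, at any point of $S(\Psi)$ the positive definiteness of $\nabla^2 \hat{F}$ is equivalent to that of $\nabla^2 F$, so it suffices to verify that $\nabla^2 F$ is positive definite throughout $S(\Psi)$. This places us squarely in the setting of Theorem~\ref{thm:positive}, and the whole argument reduces to checking the spectral hypothesis $\spec{\matmu} \subset \mathbb{C} \smallsetminus \mathbb{R}_{\geq 1}$ uniformly over $S(\Psi)$, using only the single global input $\specr{\mathcal{M}(\bs{W})} < 1$.

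First I would fix an arbitrary $\bseta \in S(\Psi)$ with associated beliefs $\{b_\alpha\}$. By the very definition of $S(\Psi)$, each $b_\alpha$ has the form $b_\alpha(x_\alpha) \propto \Psi_\alpha(x_\alpha) \prod_{i \in \alpha} f_i(x_i)$ with $f_i = \exp\inp{\theta^{'\alpha}_i}{\phi_i}$ positive. Hence the correlation coefficient matrices $c^{\alpha}_{\edij} = \corr{b_\alpha}{\phi_j}{\phi_i}$ at this point fall within the class over which the supremum defining $W^{\alpha}_{i,j}(\Psi)$ is taken, giving (the operator norm being invariant under transposition) the entrywise bound $\norm{c^{\alpha}_{\edij}} \leq W^{\alpha}_{i,j}(\Psi)$. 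This is exactly the role of the supremum over the whole tilted family: it produces a correlation bound that is uniform over all of $S(\Psi)$. I would then invoke monotonicity of the Perron--Frobenius eigenvalue of nonnegative matrices: since $\mathcal{M}(\norm{\bs{c}})$ and $\mathcal{M}(\bs{W})$ are entrywise nonnegative with $\mathcal{M}(\norm{\bs{c}}) \leq \mathcal{M}(\bs{W})$, we obtain $\specr{\mathcal{M}(\norm{\bs{c}})} \leq \specr{\mathcal{M}(\bs{W})} < 1$.

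Chaining the earlier results then closes the pointwise argument. Proposition~\ref{prop:specradiusbound} gives $\specr{\matmc} \leq \specr{\mathcal{M}(\norm{\bs{c}})} < 1$, so all eigenvalues of $\matmc$ lie in the open unit disk and in particular avoid $\mathbb{R}_{\geq 1}$. Lemma~\ref{lem:specutoc} transfers this to the weights of Eq.~(\ref{def:u}), namely $\spec{\matmu} = \spec{\matmc} \subset \mathbb{C} \smallsetminus \mathbb{R}_{\geq 1}$, whence Theorem~\ref{thm:positive} makes $\nabla^2 F(\bseta)$ positive definite and Proposition~\ref{prop:PDequiv} makes $\nabla^2 \hat{F}(\bseta)$ positive definite. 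Since $\bseta$ was arbitrary and the coordinate domain $\{\eta_i\}_{i \in V}$ of $\hat{F}$ is convex, positive definiteness everywhere yields strict convexity of $\hat{F}$. Finally, by Proposition~\ref{prop:restBFEandLBP} the LBP fixed points are exactly the stationary points of $\hat{F}$; a strictly convex function on a convex set has at most one stationary point, so the LBP fixed point is unique.

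The step I expect to be the main obstacle is the uniformity of the correlation bound. Strict convexity is a statement about the whole domain, so the hypothesis must constrain the correlation structure of $b_\alpha$ simultaneously at all of $S(\Psi)$, not merely at a single candidate fixed point; this is precisely what compels $W^{\alpha}_{i,j}(\Psi)$ to be a supremum over the entire family $b_\alpha \propto \Psi_\alpha \prod_{i} f_i$. Once this supremal bound is secured, the remaining ingredients---monotonicity of the spectral radius for nonnegative matrices and convexity of the $\{\eta_i\}$-domain---are routine.
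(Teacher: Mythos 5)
Your proposal is correct and follows essentially the same route as the paper's proof: bound the correlation matrices at any point of $S(\Psi)$ by $\bs{W}$, use monotonicity of the spectral radius together with Proposition~\ref{prop:specradiusbound} and Lemma~\ref{lem:specutoc} to verify the hypothesis of Theorem~\ref{thm:positive}, then transfer positive definiteness to $\nabla^2\hat{F}$ via Proposition~\ref{prop:PDequiv} and conclude uniqueness from Proposition~\ref{prop:restBFEandLBP}. The paper's own proof is just a terser version of exactly this chain, so nothing further is needed.
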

\begin{proof}
Let $\bseta$ be any point in $S(\Psi)$.
By definition, $\norm{ \corr{b_{\alpha}}{\phi_i}{\phi_j}}$ is smaller than $W_{i,j}^{\alpha}$.
From Theorem~\ref{thm:positive} and Proposition~\ref{prop:specradiusbound}, $\nabla^{2} \hat{F}$ is positive definite at the point.
\end{proof}

In principle, we can compute the weights, $W$, given the compatibility functions.
However, it requires optimizations with respect to $f$;
we can use standard numerical maximization techniques \citep{Venkataraman2009}. 
We leave developing efficient methods for computing the values for future works.
For binary pairwise case,
we have a useful formula $W^{\{i,j\}}_{i,j}(\Psi_{i,j})=\tanh(|J_{ij}|)$,
where $\Psi_{i,j}(x_i,x_j) \propto {\rm e}^{J_{ij}x_i x_j + h_i x_i + h_j x_j}$ \citep{WFzeta}.

Theorem \ref{thm:restBFEConvex} holds for an arbitrary LBP.
However, for Gaussian cases we obtain $W^{\{i,j\}}_{i,j}= 1$, yielding no meaningful implications.
Therefore, in the rest of this section, we focus on multinomial cases.

\subsubsection{Comparison to Mooij's condition}
For multinomial models, there are several works that give sufficient conditions for the uniqueness property.
\citet{Huniquness} analyzed the uniqueness problem by
considering an equivalent min-max problem.
Other authors analyzed the convergence property rather than the uniqueness.
LBP algorithm is said to be {\it convergent} if the messages converge to the unique fixed point
irrespective of the initial messages.
By definition, this property is stronger than the uniqueness.
\citet{TJgibbsmeasure} utilized the theory of Gibbs measure, and 
showed that the uniqueness of the Gibbs measure implies the convergence of
LBP algorithm.
Therefore, known sufficient conditions of the uniqueness of the Gibbs measure are
that of the convergence of LBP algorithm.
\citet{IFW} and \citet{MKsufficient} derived sufficient conditions
for the convergence by investigating conditions that make the LBP update a contraction;
for pairwise case, their conditions are essentially the same.

We compare our condition with Mooij's condition.
One reason is that this condition is directly applicable to factor graph models,
while Ihler's and Tatikonda's conditions are for written for pairwise models.
Another reason is that numerical experiments by \citet{MKsufficient} suggests that
Mooij's condition is far superior to the condition of Heskes.
(See numerical experiments in \citep{MKsufficient}.)

The Mooij's condition is stated as follows.

\begin{thm}[\cite{MKsufficient}]
\label{thm:MKsufficient}
Define
\begin{equation}
 N_{ij}(\Psi_{\alpha}) := \sup_{x_i \neq x_i'} \sup_{x_j \neq x_j'} \sup_{x_{\alpha_{ij}} \neq x_{\alpha_{ij}}'}
\tanh \Big(
\frac{1}{4} \log \frac{\Psi_{\alpha}(x_i,x_j,x_{\alpha_{ij}}) }{ \Psi_{\alpha}(x_i',x_j',x_{\alpha_{ij}}') }
\frac{\Psi(x_i',x_j,x_{\alpha_{ij}})}{ \Psi(x_i,x_j',x_{\alpha_{ij}}')}
\Big),
\quad
\alpha_{ij}= \alpha \setminus \{i,j\}.   \nonumber
\end{equation}
If  $\specr{ \mathcal{M}(\bs{N}) } < 1$, then LBP is convergent.
Therefore, LBP has a unique fixed point.
\end{thm}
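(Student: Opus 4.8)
The plan is to prove this as a convergence result via the contraction mapping principle, viewing the parallel \LBP update as a map on the space of messages and showing it is a strict contraction whenever $\specr{\mathcal{M}(\bs{N})} < 1$. Because the messages $m_{\edai}$ enter both the update and the beliefs only up to the positive scaling constant $\omega$, the natural ambient space is the projective cone of positive messages, and the natural metric is the Hilbert projective metric $d_H$. My first step would be to fix this setup: regard each message as a ray in the cone of positive functions on $\mathcal{X}_i$, equip each such cone with $d_H$, and equip the product over all directed edges with a weighted sup-metric to be chosen later.

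The heart of the argument is a per-factor contraction estimate. Fix a factor $\alpha$ and two of its variables $i,j$. Holding all incoming messages other than $m_{\edja}$ fixed, the map that sends $m_{\edja}$ to the outgoing message $m_{\edai}$ is, after integrating out $x_{\alpha \setminus \{i,j\}}$ against $\Psi_{\alpha}$ and the frozen messages, a \emph{positive linear} operator with kernel $K_{ij}(x_i,x_j)$. By Birkhoff's contraction theorem, a positive linear operator contracts $d_H$ by exactly its Birkhoff coefficient $\tanh\!\big(\tfrac{1}{4}\log \Delta(K_{ij})\big)$, where $\Delta(K_{ij})$ is the projective diameter of the kernel, i.e. the supremum of the cross-ratios $\frac{K_{ij}(x_i,x_j)K_{ij}(x_i',x_j')}{K_{ij}(x_i',x_j)K_{ij}(x_i,x_j')}$. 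I would then show that, as the frozen messages and the remaining configurations range over all admissible values, this cross-ratio is dominated by the cross-ratio of $\Psi_{\alpha}$ itself; taking the supremum recovers precisely the quantity $N_{ij}(\Psi_{\alpha})$. This identifies $N_{ij}$ as a uniform, message-independent bound on the amount by which one update step can separate two messages in the Hilbert metric.

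With the per-edge coefficients in hand, the next step is the global bookkeeping. One update of all messages changes each $d_H$-distance by an amount bounded, componentwise, by $\mathcal{M}(\bs{N})$ acting on the vector of incoming distances, since the contraction coefficients compose along the directed-edge incidence structure encoded by $\mathcal{M}$. Because $\mathcal{M}(\bs{N})$ is a nonnegative matrix with $\specr{\mathcal{M}(\bs{N})} < 1$, it has a strictly positive Perron--Frobenius eigenvector $v$, and measuring distances in the $v$-weighted sup-norm turns this linear domination into a genuine contraction with factor $\specr{\mathcal{M}(\bs{N})} < 1$. Completeness of the Hilbert metric on the normalized message cone then lets me invoke the Banach fixed point theorem: \LBP converges to a single fixed point from every initialization, which in particular forces the fixed point to be unique.

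I expect the main obstacle to be the per-factor estimate, specifically making the reduction to a positive linear operator rigorous and showing that its Birkhoff coefficient is bounded by $N_{ij}(\Psi_{\alpha})$ \emph{uniformly} over all frozen incoming messages, rather than only at a fixed point. This is exactly the point where the cross-ratio of $\Psi_{\alpha}$ and the $\tanh(\tfrac{1}{4}\log\,\cdot)$ form are forced upon us, and the worst case must be attained by an extremal, nearly deterministic choice of configurations, which accounts for the three suprema in the definition of $N_{ij}$. A secondary technical nuisance is the scaling invariance of messages and the associated need to work projectively throughout, together with verifying completeness and well-definedness of the normalized update on the Hilbert-metric space.
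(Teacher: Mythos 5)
First, a point of reference: the paper does not prove this theorem. It is quoted from \citet{MKsufficient} purely for comparison with Theorem~\ref{thm:restBFEConvex}, so there is no in-paper argument to measure your attempt against; what follows compares your route to the original one.

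Your Birkhoff/Hilbert-metric strategy is a genuinely different and essentially viable path to the result. Mooij and Kappen work in the log-message domain, pass to quotient spaces of messages modulo additive constants, bound the Jacobian of the update map there, and deduce contraction from the fact that a nonnegative matrix $A$ with $\specr{A}<1$ has operator norm below one in a suitable weighted $\ell_\infty$ norm; the $\tanh\bigl(\tfrac{1}{4}\log(\cdot)\bigr)$ emerges from an explicit optimization in that derivative bound. Your approach instead recognizes this expression as the Birkhoff contraction coefficient of the positive linear kernel obtained by freezing all incoming messages but one, which is more conceptual: the mediant inequality, applied separately to the numerator and denominator of the cross-ratio of the integrated kernel, is exactly what forces the two \emph{independent} suprema over $x_{\alpha_{ij}}$ and $x_{\alpha_{ij}}'$, and the invariance of the cross-ratio under multiplication of $\Psi_{\alpha}$ by single-variable factors is what makes the bound uniform over the frozen messages. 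What the original proof buys in exchange is that it avoids any completeness question for the projective metric and extends more readily to quantitative rate estimates.

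Three points need repair before your sketch is a proof. (i) The strictly positive Perron--Frobenius eigenvector of $\mathcal{M}(\bs{N})$ exists only under irreducibility; for a general nonnegative matrix you should instead use the equivalence of $\specr{A}<1$ with the existence of $v>0$ satisfying $Av<v$ (or perturb $\bs{N}$ upward by $\epsilon$). (ii) The ``global bookkeeping'' step hides a telescoping argument: an outgoing message $m_{\edai}$ depends on all messages $m_{\edbj}$ with $(\edbj)\rightharpoonup(\edai)$, so you must change them one at a time, apply the per-edge Birkhoff bound at each intermediate configuration, and sum via the triangle inequality for $d_H$; the uniformity of $N_{ij}$ over frozen messages is what makes this legitimate, and it should be said explicitly. (iii) You should verify that the quantity your cross-ratio bound produces, namely $\sup \Psi_{\alpha}(x_i,x_j,x_{\alpha_{ij}})\Psi_{\alpha}(x_i',x_j',x_{\alpha_{ij}}')\big/\Psi_{\alpha}(x_i',x_j,x_{\alpha_{ij}})\Psi_{\alpha}(x_i,x_j',x_{\alpha_{ij}}')$, is the one intended by the displayed definition of $N_{ij}$ (the grouping of arguments as typeset in the statement does not obviously coincide with this cross-ratio, and only the cross-ratio is invariant under single-variable reparametrizations of $\Psi_{\alpha}$, which your argument requires). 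None of these is fatal, but each must be addressed.
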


Interestingly, this condition looks similar to our Theorem~\ref{thm:restBFEConvex};
both of them are stated in terms of the spectral radius of the directed edge matrix, $\mathcal{M}$, with weights.
Comparison of these condition is reduced to that of $W_{ij}(\Psi_{\alpha})$ and $N_{ij}(\Psi_{\alpha})$.
(Recall that for positive matrices $X$ and $Y$, $\specr{X} \leq \specr{Y}$ if $X_{ij} \leq Y_{ij}$. )
For binary pairwise case, the conditions coincide; it is not hard to check that
$W_{ij}=N_{ij}= \tanh(|J_{ij}|)$.

By numerical computation,
we conjecture that $W_{ij}(\Psi_{\alpha}) \leq N_{ij}(\Psi_{\alpha})$ always holds.
In Figure~\ref{fig:Final}, we show a plot for the case of
$\Psi(x_1,x_2,x_3)=\exp(K x_1 x_2 x_3 + 0.3 \sum x_i x_j)$,
where $x_i \in \{ \pm 1 \}$.
We observe that $W$ and $N$ coincides for large $|K|$, but $W$ is
strictly smaller than $N$ for small $|K|$.

Next, we compare conditions of Theorem~\ref{thm:restBFEConvex}, \ref{thm:MKsufficient} and the actual LBP convergence region.
We run the LBP algorithm on the $3 \times 3$ square grid of cyclic boundary condition, where the factors correspond to
the vertices of the grid and variables are on the edges.
Thus, the degree of factors is four and that of vertices is two.
The variables are binary $(x_i \in \{ \pm 1 \})$ and
compatibility functions are given in the form of $\Psi(x_1,x_2,x_3,x_4)= \exp( K \sum_{i<j<k} x_i x_j x_k + J \sum_{i<j} x_i x_j )$;
we changed the parameters $K$ and $J$.
All the messages are initialized to constant functions and updated in parallel by Eq.~(\ref{LBPupdate}).
The result is plotted in Figure \ref{fig:Final}.
We judge LBP is convergent if message change is smaller than $10^{-3}$ after $30$ iterations.
We observe that there is a triangle region where uniqueness is guaranteed but LBP does not converge.

\begin{figure}
\begin{minipage}{.45\linewidth}
 \begin{center}
\includegraphics[scale=0.6]{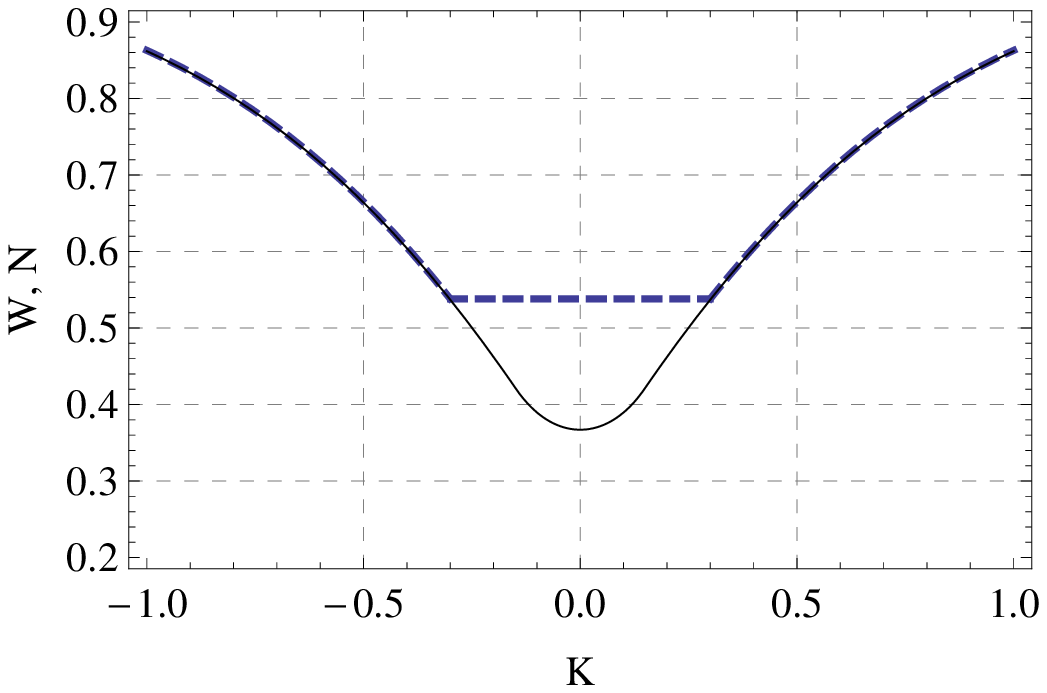}
\vspace{-1mm}
\end{center}
\end{minipage}
\begin{minipage}{.6\linewidth}
\begin{center}
\includegraphics[scale=0.5]{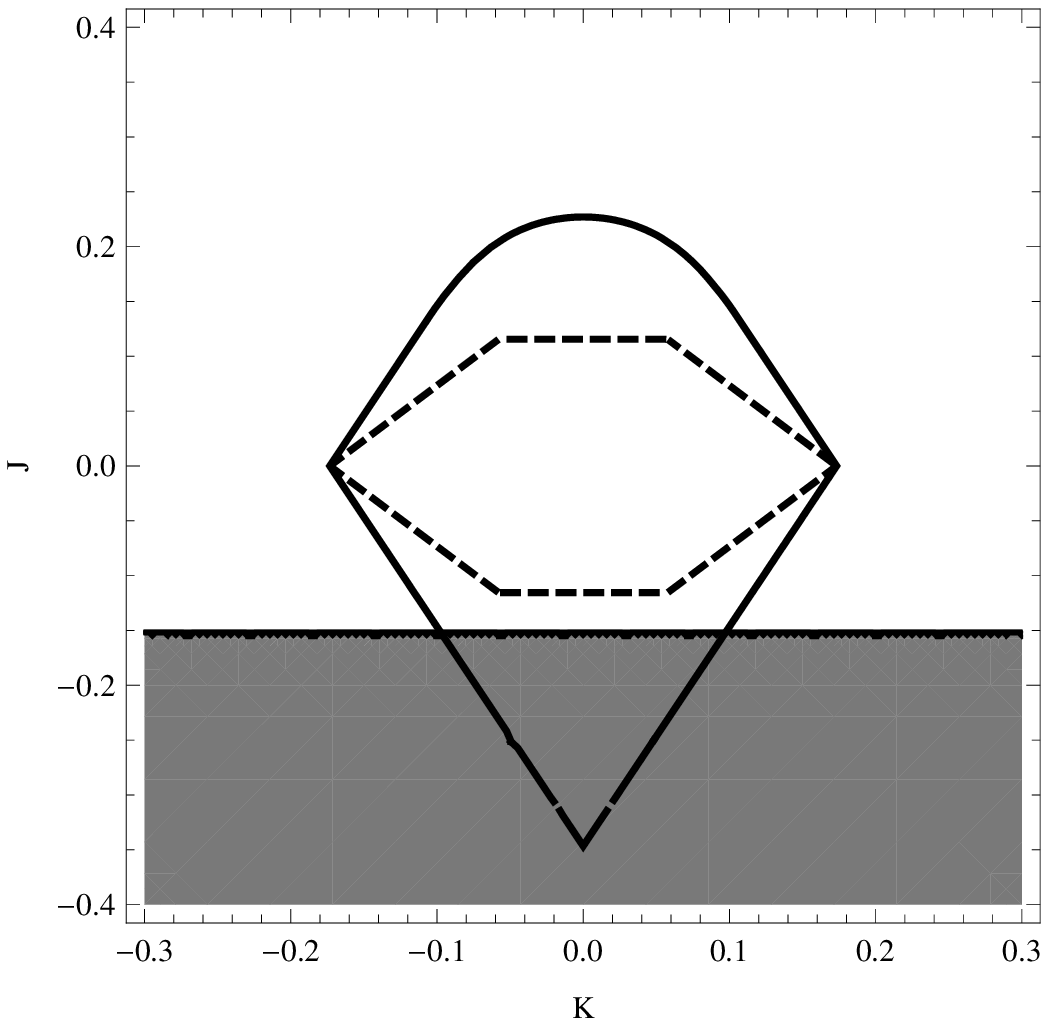}
\vspace{-1mm}
\end{center}
\end{minipage}
\caption{
Left: Comparison of $W$ and $N$. %
Solid line is the plot of $W$ and dashed line is $N$.
Right: Inside the dashed line region, LBP is guaranteed to converge by the Mooij's condition.
Inside the solid line, LBP is guaranteed to have the unique fixed point by Theorem~\ref{thm:restBFEConvex}.
In the shaded region, LBP does not converges.} \label{fig:Final}
\end{figure}

\section{Analysis of stability of LBP}
\label{sec:stab}

In this section, we analyze relations between the local stability of LBP and the local
structure of the Bethe free energy around an LBP fixed point.
Since LBP is not the gradient descent of the \Bfe function,
such a relation is not necessarily obvious.
From the view point of the variational formulation,
we hope to find the minima.
In the celebrated paper by \citet{YFWGBP}, %
they empirically found that locally stable LBP fixed points are local minima of the \Bfe function;
\cite{Hstable} have shown the fact for multinomial case.

In the following, we extend the result to two directions.
First, we derive the conditions of the local stability and local minimality
in terms of the eigenvalues of the matrix $\mathcal{M}(u)$, which  immediately implies the
above fact.
Secondly, the result is extended to LBPs formulated by \ifa including both multinomial and Gaussian cases.  
This is possible, since our analysis is based on the techniques developed in Section~\ref{sec:key}.

\subsection{LBP as a dynamical system}\label{sec:LBPstability}
First, we regard the LBP update as a dynamical system.
At each time $t$, the state of the algorithm is specified by the set of
messages $\{ m_{\edai}^t \}$, which is identified with
its \nparas $\bsmu^t=\{ \mu_{\edai}^t \} \in \mathbb{R}^{\vec{E}}$.
In terms of the parameters, the update rule Eq.~(\ref{LBPupdate}) is written as follows.
\begin{equation}
\label{eq:eparaLBPupdate}
 \mu_{\edai}^{{t+1}}=
\Lambda_{i}^{-1}
\Big(
   \Lambda_{\alpha} ( \bar{\theta}_{\alpha},
                \bar{\theta}^{\alpha}_{i_1} + \hspace{-2mm}  \sum_{\beta \in N_{i_1} \smallsetminus \alpha}\hspace{-2mm} \mu^t_{\edbione},
                \ldots, \bar{\theta}^{\alpha}_{i_k} + \hspace{-2mm}  \sum_{\beta \in N_{i_k} \smallsetminus \alpha}\hspace{-2mm} \mu^t_{\edbik}
                    )_i
\Big)
- \sum_{\gamma \in N_i \smallsetminus \alpha} \hspace{-2mm} \mu^t_{\edgi}, \nonumber
\end{equation}
where $\alpha = \fai$, $d_{\alpha}=k$ and
$\Lambda_{\alpha}(\cdots)_i$ is the $i$-th component ($i \in \alpha$).
To obtain this equation, after multiply Eq.~(\ref{LBPupdate}) by
\begin{equation*}
 \prod_{\gamma \in N_i \smallsetminus \alpha} m^t_{\edgi}(x_i), 
\end{equation*}
normalize it to be a probability density function, and then take the expectation of $\phi_i$.

Formally, this update rule can be viewed as a transform $T$ on the set of \nparas of messages $M$:
\begin{equation*}
 T:  M \longrightarrow  M, \qquad
 \bsmu^t = T( \bsmu^{t-1} ).
\end{equation*}
LBP algorithm can be formulated as repeated applications of this map.
In this formulation, the fixed points of LBP are $\{ \bsmu^* \in M |  \bsmu^*= T( \bsmu^* )\}$.

Here we compute the
differentiation of the update map $T$ around an LBP fixed point.
This expression derived by \citet{ITAinfo} for the cases of turbo and LDPC codes. %
\begin{thm}
[Differentiation of the LBP update]
\label{thm:diffofLBP}
At an LBP fixed point, the differentiation (linearization) of the LBP update is
\begin{equation}
 \pd{T(\bsmu)_{\edai}}{\mu_{\edbj}}=
\begin{cases}
\var{b_i}{\phi_i}^{-1} \cov{b_{\alpha}}{\phi_i}{\phi_j}
&\text{ if } j \in N_{\alpha} \smallsetminus i \text{ and } \beta \in N_j \smallsetminus \alpha, \\
0 &\text{ otherwise.}
\end{cases} \nonumber
\end{equation}
In other words, at an LBP fixed point $\bseta \in L$, the differentiation of $T$ is
\begin{equation}
 T' = \matmu, \nonumber
\end{equation}
where $\bsu=\{  u^{\alpha}_{\edij} \}$ is given by Eq.~(\ref{def:u}).
\end{thm}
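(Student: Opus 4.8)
The plan is to differentiate the explicit update map $T$ displayed just above the theorem, directed edge by directed edge, using the chain rule, and then to recognize the resulting Jacobian blocks as covariance matrices of the fixed-point beliefs. Write the $\edai$-component of $T$ as a \emph{forward} part plus a linear part,
\[
T(\bsmu)_{\edai} = \Lambda_i^{-1}\Big( \Lambda_\alpha\big(\bar\theta_\alpha,\, \{\bar\theta^\alpha_{j} + \textstyle\sum_{\beta\in N_j\smallsetminus\alpha}\mu_{\edbj}\}_{j\in\alpha}\big)_i \Big) - \sum_{\gamma\in N_i\smallsetminus\alpha}\mu_{\edgi},
\]
and call the first summand $G_{\edai}(\bsmu)$. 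First I would differentiate $G_{\edai}$. Only the arguments of $\Lambda_\alpha$ indexed by vertices $j\in\alpha$ depend on $\bsmu$, and the slot for such a $j$ picks up $\mu_{\edbj}$ exactly when $\beta\in N_j\smallsetminus\alpha$; hence $\pd{G_{\edai}}{\mu_{\edbj}}$ vanishes unless $j\in\alpha$ and $\beta\in N_j\smallsetminus\alpha$.

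For the surviving slots the chain rule produces three factors. By the inverse function theorem the outer derivative is $D\Lambda_i^{-1}=(\nabla^2\psi_i)^{-1}=\var{b_i}{\phi_i}^{-1}$, evaluated at the fixed-point belief $b_i$. The derivative of $\Lambda_\alpha$ is its Hessian $\nabla^2\psi_\alpha=\var{b_\alpha}{\fa\phi}$, and extracting the row for the $\phi_i$-output and the column for the $\phi_j$-input selects the block $\cov{b_\alpha}{\phi_i}{\phi_j}$. The innermost derivative of the shifted argument is the identity. Thus, for $j\in\alpha$ and $\beta\in N_j\smallsetminus\alpha$,
\[
\pd{G_{\edai}}{\mu_{\edbj}} = \var{b_i}{\phi_i}^{-1}\cov{b_\alpha}{\phi_i}{\phi_j}.
\]

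It remains to add the linear part $-\sum_{\gamma\in N_i\smallsetminus\alpha}\mu_{\edgi}$ and to check every case. If $j\notin\alpha$ the derivative is $0$, and the diagonal/same-factor inputs $\mu_{\edai}$ and $\{\mu_{\edaj}\}$ never appear on the right-hand side, so they contribute nothing. If $j\in N_\alpha\smallsetminus i$ and $\beta\in N_j\smallsetminus\alpha$, the linear part does not contribute (its messages all terminate at $i\neq j$), so $\pd{T(\bsmu)_{\edai}}{\mu_{\edbj}}=\var{b_i}{\phi_i}^{-1}\cov{b_\alpha}{\phi_i}{\phi_j}$, the claimed block. The delicate case, which I expect to be the only genuinely substantive step, is $j=i$ with $\beta\in N_i\smallsetminus\alpha$: here $G_{\edai}$ contributes $\var{b_i}{\phi_i}^{-1}\cov{b_\alpha}{\phi_i}{\phi_i}=\var{b_i}{\phi_i}^{-1}\var{b_\alpha}{\phi_i}$ while the linear part contributes $-I$. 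These cancel precisely because we sit at a fixed point: then $\bseta\in L$, so the $x_i$-marginal of $b_\alpha$ equals $b_i$ and $\var{b_\alpha}{\phi_i}=\var{b_i}{\phi_i}$, which makes the forward block equal to $I$. Hence the total derivative is $0$ for $j=i$, matching the ``otherwise'' branch.

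Finally I would match the result with $\matmu$. The nonzero blocks occur exactly for pairs with $(\edbj)\rightharpoonup(\edai)$, namely $j\in N_\alpha\smallsetminus i$ and $\beta\in N_j\smallsetminus\alpha$, and their value $\var{b_i}{\phi_i}^{-1}\cov{b_\alpha}{\phi_i}{\phi_j}$ is exactly $u^\alpha_{\edji}$ of Eq.~(\ref{def:u}); comparing with the matrix representation $\matmu_{e,e'}=u_{\etea}$ then gives $T'=\matmu$. The bookkeeping of which messages feed into which update, together with the fixed-point cancellation in the $j=i$ case, is the heart of the argument, whereas the Jacobian identifications are immediate from the exponential-family facts $\nabla^2\psi=\var{}{\cdot}$ and the inverse function theorem.
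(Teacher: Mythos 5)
Your proposal is correct and follows essentially the same route as the paper's proof: differentiate the parameterized update through $\Lambda_i^{-1}\circ(\Lambda_\alpha)_i$, identify the Jacobians as $\var{b_i}{\phi_i}^{-1}$ and $\cov{b_{\alpha}}{\phi_i}{\phi_j}$, and use local consistency $\var{b_{\alpha}}{\phi_i}=\var{b_i}{\phi_i}$ at the fixed point to cancel the $j=i$ contribution against the subtracted linear term. The only difference is that you spell out the case bookkeeping and the matching with $\matmu$ in more detail than the paper does.
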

\begin{proof}
First, consider the case that $j \in N_{\alpha} \smallsetminus i \text{ and } \beta \in N_j \smallsetminus \alpha$.
The derivative is equal to
\begin{equation}
 \pd{\Lambda^{-1}_i}{\eta_i}   \pd{(\Lambda^{}_{\alpha})_i}{\va{\theta}{j}}
=
\var{b_i}{\phi_i}^{-1} \cov{b_{\alpha}}{\phi_i}{\phi_j} . \nonumber
\end{equation}
Another case is $i=j$ and $ \alpha , \beta \in N_i ~(\alpha \neq \beta)$.
Then, the derivative is
\begin{equation}
 \pd{\Lambda^{-1}_i}{\eta_i}   \pd{(\Lambda^{}_{\alpha})_i}{\va{\theta}{i}}-I
=
0 \nonumber
\end{equation}
because $\var{b_i}{\phi_i}= \var{b_{\alpha}}{\phi_i}$ from Eq.~(\ref{eq:localconsistency}).
In other cases, the derivative is trivially zero.
\end{proof}

The relation $j \in N_{\alpha} \smallsetminus i \text{ and } \beta \in N_j \smallsetminus \alpha$
will be written as $(\edbj) \rightharpoonup (\edai)$ in Subsection \ref{sec:defgraphzeta}.
It is noteworthy that the elements of the linearization matrix is
explicitly expressed by the fixed point beliefs.

\subsection{Spectral conditions}
Let $T$ be the LBP update map.
A fixed point $\bsmu^{*}$ is called {\it locally stable}\footnote{ This property is often referred to as {\it asymptotically stable} \cite{GHnonlinear}.}
if LBP starting with a point sufficiently close to $\bsmu^{*}$ converges to $\bsmu^{*}$.
To suppress oscillatory behaviors of LBP,
{\it damping} of update $T_{\epsilon }:=(1- \epsilon ) T+ \epsilon I$
is sometimes useful, where $0 \leq \epsilon < 1$ is a damping strength
and $I$ is the identity matrix.

As we will summarize in the following theorem,
the local stability is determined by the linearization $T'$ at the fixed point.
Since $T'$ is nothing but $\matmu$ at an LBP fixed point,
Theorem~\ref{thm:positive} implies relations between the local stability and the Hessian of the \Bfe function.

\begin{thm}
\label{thm:speccondition}
Let $\bsmu^{*}$ be an LBP fixed point and assume that $T'(\bsmu^{*})$ has no eigenvalues of unit modulus for simplicity.
Then the following statements hold.
\begin{enumerate}
 \item $\spec{T'(\bsmu^{*})}  \subset \{ \lambda \in \mathbb{C}| |\lambda| < 1\}$ $\iff$ LBP is locally stable at $\bsmu^{*}$.
 \item $\spec{T'(\bsmu^{*})}  \subset \{ \lambda \in \mathbb{C}| {\rm  Re}\lambda < 1\}$ \hspace{-2mm} $\iff$ \hspace{-2mm}
       LBP is locally stable at $\bsmu^{*}$ with some damping.
 \item $\spec{T'(\bsmu^{*})}  \subset \mathbb{C} \smallsetminus \mathbb{R}_{\geq 1}$ $\Rightarrow$
       $\bsmu^{*}$ is a local minimum of the \Bfe function.
\end{enumerate}
\end{thm}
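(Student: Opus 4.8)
The three parts split into two facts from the theory of discrete dynamical systems (items 1 and 2) and one application of the machinery of Section~\ref{sec:key} (item 3). The plan is to settle local stability first and then transfer the spectral condition to the Hessian. For item 1, I would invoke the classical linearization (Lyapunov) theorem for a fixed point of a $C^1$ map: when $T'(\bsmu^*)$ has no eigenvalue of unit modulus the fixed point is hyperbolic, and it is asymptotically (locally) stable if and only if every eigenvalue lies strictly inside the unit disk. The forward implication is the contraction estimate coming from a spectral radius $<1$; the reverse is that an eigenvalue of modulus $>1$ produces an expanding direction, which precludes local stability. Since $T$ is smooth on the message parameter space (the update is a composition of the smooth maps $\Lambda_\alpha$, $\Lambda_i^{-1}$ and linear maps), this theorem applies verbatim, and the no-unit-modulus hypothesis removes the borderline case, giving the stated equivalence.

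For item 2, the key observation is that damping fixes $\bsmu^*$, i.e.\ $T_\epsilon(\bsmu^*)=\bsmu^*$, and linearizes to $(1-\epsilon)T'(\bsmu^*)+\epsilon I$, so each eigenvalue $\lambda$ of $T'$ becomes $\mu_\epsilon(\lambda)=(1-\epsilon)\lambda+\epsilon$, the convex combination of $\lambda$ and $1$. By item 1 applied to $T_\epsilon$, stability with damping strength $\epsilon$ is equivalent to $|\mu_\epsilon(\lambda)|<1$ for all eigenvalues. The reverse implication is immediate: $|\mu_\epsilon(\lambda)|<1$ forces ${\rm Re}\,\mu_\epsilon(\lambda)<1$, hence ${\rm Re}\,\lambda<1$ after dividing by $1-\epsilon>0$. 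For the forward implication, writing $\epsilon=1-\delta$ gives $\mu_\epsilon=1-\delta(1-\lambda)$ and $|\mu_\epsilon|^2=1-2\delta(1-{\rm Re}\,\lambda)+O(\delta^2)$; when ${\rm Re}\,\lambda<1$ this is $<1$ for all sufficiently small $\delta>0$. As there are only finitely many eigenvalues, a single $\delta$, hence a single $\epsilon<1$, works for all of them simultaneously, establishing stability under that damping.

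For item 3, I would transfer the spectral hypothesis to the Hessian through results already proved. By Theorem~\ref{thm:diffofLBP}, at the fixed point $T'(\bsmu^*)=\matmu$ with $\bsu$ given by Eq.~(\ref{def:u}); thus $\spec{T'(\bsmu^*)}=\spec{\matmu}$, and the hypothesis becomes exactly $\spec{\matmu}\subset\mathbb{C}\smallsetminus\mathbb{R}_{\geq 1}$. This is precisely the premise of Theorem~\ref{thm:positive}, which yields that $\nabla^2 F(\bseta^*)$ is positive definite at the belief $\bseta^*\in L$ associated to $\bsmu^*$ under the identification of Theorem~\ref{thm:LBPcharacterizations}. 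Because that identification sends the LBP fixed point to a stationary point of $F$, a positive definite Hessian there makes $\bseta^*$ a strict local minimum of the \Bfe, which is the claim.

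The main obstacle I anticipate is not an estimate but the bookkeeping of the identification in item 3: one must ensure that the Hessian whose positive definiteness Theorem~\ref{thm:positive} delivers, namely $\nabla^2 F$ in the $\{\pa{\eta},\eta_i\}$ coordinates on $L$, is the object governing local minimality of the Bethe free energy seen by LBP. This is settled by combining Theorem~\ref{thm:LBPcharacterizations} (stationarity) with Propositions~\ref{prop:restBFEandLBP} and \ref{prop:PDequiv}, which tie $\nabla^2 F$ on $L$ to the restricted Hessian $\nabla^2\hat F$ in the effective coordinates $\{\eta_i\}$; the only care needed is that the constraint block is eliminated so that positive definiteness on $L$ descends to $\hat F$ at the fixed point.
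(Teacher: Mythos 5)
Your proposal is correct and takes essentially the same route as the paper's own (much terser) proof: item 1 is the standard hyperbolic-linearization result cited there, item 2 is exactly the affine spectral map $\lambda \mapsto (1-\epsilon)\lambda+\epsilon$ of the damped update, and item 3 combines Theorem~\ref{thm:diffofLBP} with Theorem~\ref{thm:positive} at the stationary point provided by Theorem~\ref{thm:LBPcharacterizations}. The only superfluous element is your closing detour through Propositions~\ref{prop:restBFEandLBP} and \ref{prop:PDequiv}: since $\nabla^2 F$ in Theorem~\ref{thm:BZ} is already the Hessian in the coordinates $\{\pa{\eta},\eta_i\}$ that parametrize $L$, its positive definiteness at a stationary point directly gives a strict local minimum of the Bethe free energy on $L$, with no need to pass to the restricted function $\hat{F}$.
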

\begin{proof}
$1.:$ This is a standard result. (See \cite{GHnonlinear} for example.)
~$2.:$
There is an $\epsilon \in [0,1)$ that satisfy $\spec{T_{\epsilon}'(\bsmu^{*})}  \subset \{ \lambda \in \mathbb{C}| |\lambda| < 1 \}$
if and only if
$\spec{T'(\bsmu^{*})}  \subset \{ \lambda \in \mathbb{C}| {\rm  Re}\lambda < 1\}$.
~$3.:$
This assertion is a direct consequence of Theorem \ref{thm:positive} and \ref{thm:diffofLBP}.
\end{proof}

This theorem immediately implies that a 
locally stable LBP fixed point is a local minimum of the \Bfe.
The theorem applies to both the multinomial and Gaussian cases.

It is interesting to ask under which condition a local minimum of the
\Bfe function is a locally stable fixed point of (damped) LBP.
An implicit reason for the empirical success of the LBP algorithm is that
LBP finds a ``good'' local minimum rather than a local minimum nearby the initial point.
The theorem gives a new insight to the question, i.e., the difference between the 
stable local minima and the unstable local minima in terms of the spectrum of $T'(\bsmu^{*})$.

\subsection{Special cases: gaps between stability and local minimality}
Here we focus on two special cases: binary pairwise attractive models and
pairwise fixed-mean Gaussian models.
Note that a binary pairwise graphical model $\Psi=\{\Psi_{ij},\Psi_i\}$ is called {\it attractive} if $J_{ij}\geq 0$,
where  $\Psi_{i}(x_i)=\exp( h_i x_i )$ and $\Psi_{ij}(x_i,x_j)= \exp( J_{ij} x_i x_j)$ $~(x_i,x_j \in \{ \pm 1\})$.
In these cases, the stable fixed points of LBP and the local minima of \Bfe function are less different.

Consider the following situation: we have continuously parametrized compatibility functions $\{\Psi_{ij}(t),\Psi_{i}(t)\}_{t \geq 0}$,
which are constants at $t=0$
(e.g. $t$ is a inverse temperature: $\Psi_{ij}(t)=\exp(t J_{ij}x_i x_j)$ and $\Psi_{i}(t)=\exp(t h_i x_i )$).
Starting from $t=0$, we run LBP algorithm for $t$, find a stable fixed point 
and use it as initial messages of LBP for $t+\delta t$, where $\delta t$ is a sufficiently small positive number. 
Then we obtain a trajectory of a stable fixed point beliefs: we call it a {\it belief trajectory}.
It first continuously follow the local minima and then it may jump to another stable fixed point belief at $t=t_0$.
The following theorem implies that the stable fixed
point becomes unstable by continuous changes of the compatibility functions 
exactly when the corresponding local minimum becomes a saddle.

\begin{thm}
\label{thmattractive}
Suppose that we have a continuously parametrized compatibility functions 
of attractive binary pairwise model or fixed-mean Gaussian model as above.
If the LBP fixed point becomes unstable across $t=t_0$ for the first time following the belief trajectory,
then the corresponding local minimum of the Bethe free energy becomes
a saddle point across $t=t_0$.
\end{thm}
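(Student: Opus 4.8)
The plan is to reduce everything to the spectrum of the single matrix $\matmu$ evaluated along the belief trajectory, and then to exploit that in the two special cases this matrix is entrywise nonnegative. By Theorem~\ref{thm:diffofLBP}, at an LBP fixed point the linearization of the update is exactly $\matmu$ with $\bsu$ given by Eq.~(\ref{def:u}); by Lemma~\ref{lem:specutoc} its spectrum equals that of $\matmc$, the correlation-coefficient version. Theorem~\ref{thm:speccondition}(1) then says that the fixed point is locally stable iff $\spec{\matmu}\subset\{|\lambda|<1\}$, while the \Bzf (Theorem~\ref{thm:BZ}) gives
\begin{equation*}
\det(\nabla^2 F)=\frac{\det(I-\matmu)}{\prod_{\alpha\in F}\det(\var{b_{\alpha}}{\fa{\phi}})\prod_{i\in V}\det(\var{b_i}{\phi_i})^{1-d_i}},
\end{equation*}
whose denominator is strictly positive. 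Hence $\sgn\det(\nabla^2 F)=\sgn\det(I-\matmu)=\sgn\prod_{\lambda\in\spec{\matmu}}(1-\lambda)$, so the question of when the local minimum turns into a saddle becomes a statement about how the eigenvalues of $\matmu$ cross the value $1$.

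The key step is to observe that in both cases $\matmu$ (equivalently $\matmc$) is a nonnegative matrix. For the fixed-mean Gaussian \ifa this is immediate from Corollary~\ref{cor:fixed-meanBzf}, where $u_{\edij}=\eta_{ij}^2\eta_{jj}^{-2}\geq 0$. For the attractive binary pairwise model, $\phi_i=x_i$ and $b_{\{i,j\}}(x_i,x_j)\propto\exp(J_{ij}x_ix_j)f_i(x_i)f_j(x_j)$ with $J_{ij}\geq 0$; a direct two-spin computation (equivalently the FKG/Griffiths inequality) shows $\cov{b_{\{i,j\}}}{x_i}{x_j}\geq 0$, so the correlation coefficients $c_{\edij}$ are nonnegative. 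I would then invoke Perron--Frobenius theory: the spectral radius $\rho(t):=\specr{\matmu}$ is a real, nonnegative eigenvalue of $\matmu$ and, assuming $\matmu$ is irreducible and primitive (as holds for a connected factor graph with a cycle), the strictly dominant one. Consequently local stability is equivalent to $\rho(t)<1$, and the loss of stability at $t_0$ must occur through $\rho$ \emph{alone} crossing the value $1$ along the real axis, all other eigenvalues remaining strictly inside the unit disk.

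With this in hand the sign analysis is clean. For $t<t_0$ we have $\rho(t)<1$, so $\spec{\matmu}\subset\mathbb{C}\smallsetminus\mathbb{R}_{\geq 1}$; Theorem~\ref{thm:positive} gives that $\nabla^2 F$ is positive definite (the fixed point is a genuine local minimum), and in particular $\det(I-\matmu)>0$. At $t=t_0$ the real eigenvalue $\rho=1$ makes the factor $(1-\rho)$ vanish, so $\det(I-\matmu)=0$ and the Hessian is singular. For $t$ just past $t_0$ the first-time/transversality hypothesis means that only this simple eigenvalue has crossed, so exactly one real eigenvalue exceeds $1$; the remaining complex eigenvalues pair into factors $|1-\lambda|^2>0$ and the other real eigenvalues still satisfy $\lambda<1$. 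Hence $\det(I-\matmu)<0$, so $\det(\nabla^2 F)<0$. Since $\nabla^2 F$ is symmetric and was positive definite just before $t_0$ with eigenvalues moving continuously, precisely one eigenvalue has passed through zero; the Hessian now has exactly one negative eigenvalue, i.e.\ the local minimum has turned into a saddle point, as claimed.

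The main obstacle---and the whole reason the statement is restricted to attractive and Gaussian models---is justifying that destabilization happens through the real Perron--Frobenius eigenvalue passing $+1$. For a general graphical model $\matmu$ need not be nonnegative, and stability can be lost when a complex-conjugate pair crosses the unit circle at a non-real point; there $\spec{\matmu}$ can stay inside $\mathbb{C}\smallsetminus\mathbb{R}_{\geq 1}$, so $\nabla^2 F$ remains positive definite and the local minimum does \emph{not} become a saddle. Nonnegativity of $\matmu$, via Perron--Frobenius, is exactly what rules this out and forces the stability boundary and the positive-definiteness boundary to coincide. A secondary point I would make explicit is the genericity of the crossing (a single simple real eigenvalue crossing $1$ transversally), which is consistent with the ``first time'' hypothesis and the standing assumption in Theorem~\ref{thm:speccondition} that there are no eigenvalues of unit modulus away from the transition.
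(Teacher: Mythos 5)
Your proposal is correct and follows essentially the same route as the paper's own proof: establish nonnegativity of the weights $u_{\edij}$ in the two special cases, invoke Perron--Frobenius to force destabilization to occur via a real eigenvalue of $\matmu$ crossing $1$, and use the Bethe-zeta formula to translate the resulting sign change of $\det(I-\matmu)$ into a sign change of $\det(\nabla^2 F)$, hence a transition from local minimum to saddle. Your write-up is more explicit than the paper's about why a complex-conjugate pair crossing the unit circle cannot be the destabilization mechanism here, and about the irreducibility and transversality assumptions implicitly needed, but the underlying argument is the same.
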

\begin{proof}
First consider the case of attractive binary pairwise models.
From Eq.~(\ref{eq:defbelief2}),
we see that
$b_{ij}(x_i,x_j) \propto \exp (J_{ij}x_i x_j + \theta_i x_i+ \theta_j x_j)$
for some $\theta_i$ and $\theta_j$.
From $J_{ij} \geq 0$,
we have $\cov{b_{ij}}{x_i}{x_j} \geq 0$, and thus $u_{\edij} \geq 0$.
When the LBP fixed point becomes unstable,
the Perron-Frobenius eigenvalue of $\matmu$ goes over $1$, which means
$\det(I-\matmu)$ crosses $0$.
From Theorem \ref{thm:BZ}, we see that
$\det(\nabla^{2} F)$ becomes positive to negative at $t=t_0$.
The Gaussian case can be proved analogously.
Recall that the weight $u_{\edij}$ are always positive scalars as shown in Corollary \ref{cor:fixed-meanBzf}.
\end{proof}

Theorem \ref{thmattractive} extends Theorem 2 of \cite{MKproperty}, which discusses only
the case of binary pairwise models with vanishing local fields $h_i=0$
and the trivial fixed point (i.e. $\E{b_i}{x_i}=0$).

\section{Summary and discussions}

We have established a connection between graph zeta function, \Bfe and \lbp.
We have shown that this connection provides powerful tools for the analysis of \Bfe and LBP;
key theorems are given in Section \ref{sec:key}.
In Section~\ref{sec:pdconv}, based on the theorems, we analyzed the (non) convexity of the \Bfe function.
Roughly speaking, the positive definite region of \Bfe functions shrinks as
the Perron-Frobenius eigenvalue of the directed edge matrix becomes large,
or equivalently, as the pole of the Ihara zeta function closest to the origin approaches to zero.
We have shown that such knowledge can be used to derive the uniqueness property of LBP.
In Section~\ref{sec:stab}, we have shown that the local stability of LBP implies local minimality of \Bfe
as long as LBP is well defined within a class of exponential families.
A key observation is that the matrix $\matmu$
is equal to the linearization of the LBP update at LBP fixed points.

The \Bzf shows that the \Bfe function contains information on the graph geometry, especially 
on the prime cycles.
The formula helps extract graph information from the \Bfe function.
For example we observed that the number of the spanning trees
are derived from a limit of the \Bfe function.
In a sense, the connection between those three objects seems to be natural
as all of them becomes ``trivial'' if the associated graph structure is a tree.
If the associated hypergraph is a tree, zeta function is equal to $1$, \Bfe function is equal to the 
Gibbs free energy function and LBP reduces to the original BP, which computes exact marginals in finite steps.

\subsection{Path forward}
In this subsection, we list a few directions of future researches going beyond the results of this paper.

In a sequel paper \citep{WFMathZeta}, we further exploit the connection
between LBP, \Bfe and graph zeta function
to analyze the LBP fixed point equation, focusing on binary pairwise models.
We characterize the class of signed graph on which uniqueness of the LBP fixed point is guaranteed.
Note that the signs on the edges represents those of the interactions (i.e. $\sgn J_{ij}$).
The condition is contrast to the those of the past researches and the result in Section~\ref{sec:pdconv},
where the strength of interactions (i.e. $|J_{ij}|$) are bounded.

In Subsection \ref{sec:convrestricted}, we have derived a 
condition for the convexity for the restricted \Bfe function.
Unfortunately, the expression of the weight $W$ involves $\sup$ operator and
does not easy to compute directly. 
We need further consideration to find a way of compute it more easily.  
The proof of the conjecture $W \leq N$ is also an interesting problem.

The connection between graph zeta, Bethe free energy and LBP can be extended to 
a more general class of free energies including fractional and tree-reweighted types \citep{WHfractional,WJWtree}.
These free energies are obtained by modifying the coefficients in the definition of the \Bfe function.
The corresponding graph zeta function then becomes the Bartholdi type, which allows cycles with backtracking \citep{Bcounting,Ibartholdi}.
The relation may be useful to analyze such class of free energies.

\acks{This work was supported in part by Grant-in-Aid for JSPS Fellows
20-993 and Grant-in-Aid for Scientific Research (C) 22300098.}

\newpage

\appendix
\section{}
\label{app:theorem}

\subsection{Miscellaneous properties of one-variable hypergraph zeta function}\label{sec:miszeta}
This subsection provides miscellaneous facts related to the one-variable hypergraph zeta functions.
In the analyses of this paper, we sometimes reduce the multivariate zeta to the one-variable zeta. 
Therefore, it is important to understand the one-variable hypergraph zeta $\zeta_{H}(u)$
and the directed edge matrix $\mathcal{M}$.

Recall that $\specr{X}$ denotes the spectral radius of $X$.
We have the following bounds on the spectral radius of $\mathcal{M}$.
\begin{prop}
\label{prop:PFboundM}
For $e \in \vec{E}$, let 
$k_e :=| \{ e'\in \vec{E}; \etea    \} |$,
$k_m= \min k_e$ and $k_{M}=\max k_e$. 
Then
\begin{equation*}
 k_m \leq \specr{\mathcal{M}} \leq k_M. \label{eq:luboundM}
\end{equation*}
Therefore, if $H$ is a graph,
\begin{equation}
 \min_{i \in V} d_i -1 \leq \specr{{\mathcal{M}}} \leq   \max_{i \in V} d_i -1.  \label{eq:PFboundMgraph}
\end{equation}
\end{prop}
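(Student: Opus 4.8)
The plan is to read off $k_e$ as a row sum of a nonnegative integer matrix and then apply the elementary two-sided bound of the spectral radius by the extreme row sums. First I would note that $\mathcal{M}$ is the $r_e=1$, $u=1$ specialization of $\matmu$, so by the matrix representation recorded after Theorem~\ref{thm:det1} one has $\mathcal{M}_{e,e'}=1$ when $\etea$ and $0$ otherwise. Thus $\mathcal{M}$ is entrywise nonnegative and its $e$-th row sum is exactly $k_e=|\{e' : \etea\}|$. Writing $\mathbf{1}$ for the all-ones vector in $\mathbb{R}^{\vec{E}}$, we have $(\mathcal{M}\mathbf{1})(e)=\sum_{e':\etea}1=k_e$, so $k_m\mathbf{1}\le \mathcal{M}\mathbf{1}\le k_M\mathbf{1}$ holds entrywise.

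For the two-sided bound I would invoke standard facts about nonnegative matrices. The upper bound is the one-line estimate $\specr{\mathcal{M}}\le \norm{\mathcal{M}}_\infty = \max_e\sum_{e'}\mathcal{M}_{e,e'}=k_M$, since the spectral radius is dominated by any induced operator norm and the induced $\infty$-norm of a nonnegative matrix equals its maximal row sum. For the lower bound I would use the subinvariance (Collatz--Wielandt) direction: from $\mathcal{M}\mathbf{1}\ge k_m\mathbf{1}$ and $\mathcal{M}\ge 0$ one gets $\mathcal{M}^n\mathbf{1}\ge k_m^n\mathbf{1}$ by induction, whence $\norm{\mathcal{M}^n}_\infty\ge k_m^n$ and, by Gelfand's formula, $\specr{\mathcal{M}}=\lim_n\norm{\mathcal{M}^n}_\infty^{1/n}\ge k_m$. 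This gives $k_m\le\specr{\mathcal{M}}\le k_M$ with no irreducibility assumption, which is important since $\mathcal{M}$ need not be irreducible for a general hypergraph $H$.

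For the graph case I would specialize the count $k_e$ using the graph geodesic description of Subsection~\ref{sec:specialIB}: a directed edge $e'$ satisfies $\etea$ precisely when $t(e')=o(e)$ and $e'\neq\bar{e}$. There are $d_{o(e)}$ directed edges terminating at $o(e)$, one for each edge incident to $o(e)$, and exactly one of them is the reversal $\bar{e}$; removing it gives $k_e=d_{o(e)}-1$. As $e$ ranges over $\vec{E}$ its origin ranges over all vertices incident to an edge, so $k_m=\min_{i\in V}d_i-1$ and $k_M=\max_{i\in V}d_i-1$; substituting into the first bound yields Eq.~(\ref{eq:PFboundMgraph}).

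The only mildly delicate point is the lower bound $k_m\le\specr{\mathcal{M}}$: the upper bound is a routine norm estimate, whereas the lower bound genuinely uses the Perron--Frobenius/subinvariance mechanism, with $\mathbf{1}$ as the natural test vector because $\mathcal{M}\mathbf{1}$ is exactly the vector of row sums. The other place to be careful is the graph-case bookkeeping, namely keeping track of the single excluded backtracking edge so that the count is $d-1$ rather than $d$; this is precisely where the non-backtracking nature of the geodesics enters.
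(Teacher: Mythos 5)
Your proof is correct and follows essentially the same route as the paper: the paper likewise observes that $k_e=\sum_{e'}\mathcal{M}_{e,e'}$ is the $e$-th row sum of the nonnegative matrix $\mathcal{M}$ and then simply cites the standard min/max row-sum bound on the spectral radius (Theorem 8.1.22 of Horn and Johnson), which is exactly the fact you reprove via the $\infty$-norm estimate and the Collatz--Wielandt/Gelfand argument. Your explicit bookkeeping $k_e=d_{o(e)}-1$ for the graph case is the step the paper leaves implicit, and it is done correctly.
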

\begin{proof}%
Since $k_e = \sum_{e'} \mathcal{M}_{e,e'}$, the bound is trivial from 
the easy bound on the spectral radius of non-negative matrices.
See Theorem 8.1.22 of \citet{HJmatrix}.
\end{proof}
Since the directed matrix $\mathcal{M}$ is non-negative, the spectral radius
is equal to the Perron-Frobenius eigenvalue.
The pole of $\zeta_{H}$ closest to the origin is $u= \specr{\mathcal{M}}^{-1} \geq k_M^{-1}$.
For the case of Ihara's zeta function, a bound on the modulus of imaginary poles as well as Eq.~(\ref{eq:PFboundMgraph})
are given by \citet{KSzeta}.

For arbitrary hypergraph, $\zeta_H(u)$ has a pole at $u=1$ 
because $\det ( I - \mathcal{M} ) =0$.
The following theorem gives the multiplicity of the pole.
The original version of this theorem is proved by \citet{Hpadic,Hzeta}.
\begin{thm}[Hypergraph Hashimoto's theorem \citep{Hpadic,Shypergraph}]
\label{thm:Hashimoto}
Let $\chi(H):=|V|+|F|-|\vec{E}|$ be the {\it Euler number} of $H$.
\begin{equation*}
 \lim_{u \rightarrow 0}
\zeta_H(u)^{-1}(1-u)^{- \chi(H)+1}= \chi(H) \kappa(B_H),
\end{equation*}
where $\kappa(B_H)$ is the number of spanning trees of the bipartite graph $B_H$.
($B_H$ is the bipartite graph representation of the hypergraph $H$.)
\end{thm}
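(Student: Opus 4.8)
The plan is to read off the leading behaviour of $\zeta_{H}(u)^{-1}$ at its pole from the Ihara--Bass type formula of Theorem~\ref{thm:Ihara}, specialised to the one--variable situation $r_i=1$, $u^{\alpha}_{\edij}=u$. In this case the block $U_{\alpha}$ of Eq.~(\ref{eq:defU}) is the $d_{\alpha}\times d_{\alpha}$ matrix $(1-u)I+uJ$ with $J$ the all--ones matrix, so
\[
\det U_{\alpha}=(1-u)^{d_{\alpha}-1}\bigl(1+(d_{\alpha}-1)u\bigr),
\]
and therefore $\prod_{\alpha}\det U_{\alpha}$ vanishes at $u=1$ to order $\sum_{\alpha}(d_{\alpha}-1)=|\vec{E}|-|F|$ with leading coefficient $\prod_{\alpha}d_{\alpha}$. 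First I would write the inverse $W_{\alpha}=U_{\alpha}^{-1}$ explicitly and expand it about $u=1$; setting $s:=1-u$, both its diagonal and off--diagonal entries carry a simple pole $s^{-1}$, so $\mathcal{W}=s^{-1}\mathcal{W}_{-1}+\mathcal{W}_{0}+O(s)$, while $I-\mathcal{D}$ is regular.

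The crucial identification is that the residue operator $\mathcal{W}_{-1}$ on $\vfv=\mathbb{C}^{V}$ is the Schur complement $D_V-\Gamma D_F^{-1}\Gamma^{T}$ of the bipartite Laplacian $L_{B_H}=\left[\begin{smallmatrix}D_V&-\Gamma\\-\Gamma^{T}&D_F\end{smallmatrix}\right]$ with respect to the factor block, where $\Gamma$ is the $V\times F$ incidence matrix and $D_V=\diag(d_i)$, $D_F=\diag(d_{\alpha})$. A direct entrywise comparison with $\mathcal{W}_{-1}$ confirms this, and since $\Gamma\mathbf{1}_F$ and $\Gamma^{T}\mathbf{1}_V$ recover the degrees, the row sums of $\mathcal{W}_{-1}$ vanish. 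Hence $\mathcal{W}_{-1}$ is symmetric positive semidefinite of rank $|V|-1$ with kernel spanned by the constant vector $\mathbf{1}$.

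With $I-\mathcal{D}+\mathcal{W}=s^{-1}\mathcal{W}_{-1}+R(s)$, $R(0)=I-\mathcal{D}+\mathcal{W}_{0}$ analytic, a first--order perturbation of the determinant of a symmetric matrix with one--dimensional kernel gives
\[
\det(I-\mathcal{D}+\mathcal{W})=s^{-(|V|-1)}\,\bigl(v^{T}R(0)v\bigr)\!\!\prod_{\lambda\neq0}\!\lambda(\mathcal{W}_{-1})\,\bigl(1+o(1)\bigr),\qquad v=\mathbf{1}/\sqrt{|V|}.
\]
Two computations then close the argument. By the Matrix--Tree (Kirchhoff) theorem for $B_H$ — deleting one vertex $i_0\in V$ and Schur--complementing out $D_F$ (unchanged by the deletion) — one obtains $\det\bigl((\mathcal{W}_{-1})_{\widehat{i_0}}\bigr)=\kappa(B_H)/\prod_{\alpha}d_{\alpha}$, hence $\prod_{\lambda\neq0}\lambda(\mathcal{W}_{-1})=|V|\,\kappa(B_H)/\prod_{\alpha}d_{\alpha}$; and a short bookkeeping gives $\mathbf{1}^{T}R(0)\mathbf{1}=|V|-|\vec{E}|+|F|=\chi(H)$, so $v^{T}R(0)v=\chi(H)/|V|$. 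Multiplying the two factors of $\zeta_{H}^{-1}$, the powers of $s$ add to $(|\vec{E}|-|F|)-(|V|-1)=1-\chi(H)$, while the constants multiply — the $\prod_{\alpha}d_{\alpha}$ from $U_{\alpha}$ cancelling the one from the eigenvalue product — to exactly $\chi(H)\kappa(B_H)$. Thus $\zeta_{H}(u)^{-1}\sim\chi(H)\kappa(B_H)\,(1-u)^{1-\chi(H)}$, which is the asserted statement (the relevant pole of $\zeta_H$ being at $u=1$, the limit is taken as $u\to1$ after weighting $\zeta_H^{-1}$ by $(1-u)^{\chi(H)-1}$).

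I expect the main obstacle to be the constant, not the pole order. The order follows immediately from the two competing $s$--powers, but pinning down the coefficient requires (i) identifying $\mathcal{W}_{-1}$ with the bipartite Schur complement and carrying the $D_F^{-1}$ weighting through Matrix--Tree so that the spurious $\prod_{\alpha}d_{\alpha}$ cancels, and (ii) verifying that the regular part $R(0)$ pairs with $\mathbf{1}$ to give precisely $\chi(H)=|V|+|F|-|\vec{E}|$. Everything else — the explicit $W_{\alpha}$, the Laurent expansion, and the standard rank--one determinant perturbation — is routine bookkeeping once these two identifications are in place.
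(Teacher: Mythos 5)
Your proof is correct, but it takes a genuinely different route from the paper's. The paper disposes of this theorem in three lines: it cites the known graph version of Hashimoto's theorem ($\lim_{u\to1}Z_G(u)^{-1}(1-u)^{-|E|+|V|-1}=-2^{|E|-|V|+1}(|E|-|V|)\kappa(G)$), observes that prime cycles of $H$ correspond bijectively to prime cycles of $B_H$ so that $\zeta_H(u)=Z_{B_H}(\sqrt{u})$, and translates. You instead give a self-contained derivation from Theorem~\ref{thm:Ihara}: specialise to scalar weights, compute $\det U_\alpha=(1-u)^{d_\alpha-1}(1+(d_\alpha-1)u)$, Laurent-expand $W_\alpha=s^{-1}(I-d_\alpha^{-1}J)+d_\alpha^{-2}J+O(s)$ at $s=1-u$, identify the residue $\mathcal{W}_{-1}$ with the Schur complement $D_V-\Gamma D_F^{-1}\Gamma^T$ of the bipartite Laplacian, and finish with Matrix--Tree plus the rank-one determinant perturbation. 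I checked the key computations ($\mathbf{1}^T\mathcal{W}_0\mathbf{1}=\sum_\alpha d_\alpha^2/d_\alpha^2=|F|$, hence $v^TR(0)v=\chi(H)/|V|$; $\prod_{\lambda\neq0}\lambda(\mathcal{W}_{-1})=|V|\kappa(B_H)/\prod_\alpha d_\alpha$; the $s$-powers and the $\prod_\alpha d_\alpha$ factors cancel as you claim) and they are right. What your approach buys is independence from the external citation --- it effectively reproves the graph case rather than assuming it --- at the cost of more machinery; the paper's reduction buys brevity but leans on \citet{Hpadic,Hzeta,Nnote}. Two small points: your argument tacitly needs $B_H$ connected so that $\ker\mathcal{W}_{-1}$ is exactly $\mathbb{C}\mathbf{1}$ (the paper uses the theorem only for connected hypergraphs, so this is harmless but worth stating), and you correctly noted that the ``$u\to0$'' in the theorem statement is a typo for $u\to1$ --- the coefficient of $s^{|V|-1}$... rather, of $t^{|V|-1}$ in $\det(tA+B)$, is exactly $(v^TBv)\prod_{\lambda\neq0}\lambda$, so your limit claim holds even when $\chi(H)=0$ and the leading term vanishes.
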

\begin{proof}%
For a graph $G=(V,E)$, \citet{Hpadic,Hzeta} proved that
\begin{equation}
\lim_{u \rightarrow 1}
Z_{G}(u)^{-1}(1-u)^{-|E|+|V|-1}
=
-2^{|E|-|V|+1}(|E|-|V|)
\kappa(G), \nonumber
\end{equation}
where $\kappa (G)$ is the number of spanning tree of $G$.
A simple proof is given by \citet{Nnote}.
Since there is a one-to-one correspondence between
$\mathfrak{P}_H$ and $\mathfrak{P}_{B_H}$,
we have $\zeta_{H}(u)=Z_{B_H}(\sqrt{u})$.
Then the assertion is proved from the above formula.
\end{proof}

\subsection{Detailed Proofs}
\label{app:DetailProofs}
\begin{AppProof}{Theorem \ref{thm:LBPcharacterizations}}
The conditions for stationary points of the \Bfe function are
$\pa{\bar{\theta}} = \pa{\theta}$ and $\sum_{\alpha \ni i}(-\va{\bar{\theta}}{i}+\va{\theta}{i})+(1-d_i) \theta_i=0 $.\\
$(1. \Rightarrow 2.)$
The correspondence from the fixed point message to the stationary point is given by 
Eqs.~(\ref{eq:defbelief1},\ref{eq:defbelief2}).
From this construction, we see that
\begin{equation*} 
 \prodf \Psi_{\alpha}(x_{\alpha}) \propto
\prod_{\alpha}b_{\alpha}(x_{\alpha}) \prod_{i} b_i(x_i)^{1-d_i}.
\end{equation*}
This implies the above stationary point conditions.\\
$(2. \Rightarrow 1.)$
The converse correspondence is given by
$m_{\edai}(x_i) = \exp ( \inp{\theta_i + \va{\bar{\theta}}{i} - \va{\theta}{i} }{\phi_i} )$,
where $\thetasw$ are the natural parameters of the stationary point pseudomarginals $\beliefsw$.
From this construction and the stationary point conditions, we have
\begin{align*}
& \prod_{\beta \in N_i} m_{\edbi}(x_i) = \exp ( \inp{\theta_i}{\phi_i (x_i )} ) \propto b_i(x_i),\\
& \Psi_{\alpha} (x_{\alpha}) \prod_{i \in \alpha} \prod_{\beta \in N_i \smallsetminus \alpha} \hspace{-2mm} m_{\edbi}(x_i)
= \exp ( \inp{\pa{\theta}}{\pa{\phi}(x_{\alpha}) } +  \sum_{i \in N_{\alpha}} \inp{\va{\theta}{i}}{\phi_i } )
\propto b_{\alpha}(x_{\alpha}).
\end{align*}
Therefore, the local consistency condition Eq.~(\ref{eq:localconsistency}) implies that
\begin{equation*}
  \prod_{\beta \in N_i} m_{\edbi}(x_i) \propto \int \Psi_{\alpha} 
\prod_{j \in \alpha} \prod_{\beta \in N_j \smallsetminus \alpha} m_{\edbj}(x_j) {\rm d} \nu_{\alpha \smallsetminus i}.
\end{equation*}
This is equivalent to the LBP fixed point equation.
\end{AppProof}

\begin{AppProof}{Theorem \ref{thm:det1}}
The following proof proceeds in an analogous manner with Theorem 3 in \cite{STzeta1}. 
First define a differential operator 
\begin{equation*}
 \mathcal{H}:=
\sum_{\etea}\sum_{a_{e}, a_{e'}} (u_{\etea})_{a_{e}, a_{e'}}\pd{}{ (u_{\etea})_{a_{e}, a_{e'}} }
\end{equation*}
where $(u_{\etea})_{a_{e}, a_{e'}}$ denotes the $(a_{e}, a_{e'})$ element of the matrix $u_{\etea}$.
If we apply this operator to a $k$ product of $u$ terms, it is multiplied by $k$.
Since  $\log \zeta_{H}(\bs{0})=0$ and $\log \det (I -\mathcal{M}(\bs{0}))^{-1}=0$,
it is enough to prove that $\mathcal{H}\log \zeta_{H}(\bsu)=\mathcal{H} \log \det (I -\matmu)^{-1}$.
Using equations $\log \det X = \tr \log X$ and 
$- \log (1-x)=\sum_{k \geq 1} \frac{1}{k}x^k$, we have
\begin{align}
 \mathcal{H}\log \zeta_{H}(\bsu)
&=\mathcal{H} \sum_{ \mathfrak{p} \in \mathfrak{P}_H }
- \log \det ( I - \pi(\mathfrak{p}) )  \nonumber \\ 
&=\mathcal{H} \sum_{ \mathfrak{p} \in \mathfrak{P}_H }
\sum_{k \geq 1} \frac{1}{k} \tr ( \pi(\mathfrak{p})^{k}) \label{eq:thm:det3}\\
&=\sum_{ \mathfrak{p} \in \mathfrak{P}_H } 
\sum_{k \geq 1}  |\mathfrak{p}| \tr ( \pi(\mathfrak{p})^{k}) \label{eq:thm:det4}\\
&=\sum_{C: \text{closed geodesic} } \tr ( \pi(C)) 
\quad = \sum_{k \geq 1}  \tr ( \matmu^{k}). \nonumber
\end{align}
From Eq.~(\ref{eq:thm:det3}) to Eq.~(\ref{eq:thm:det4}), notice that 
$\mathcal{H}$ acts as a multiplication of $k|\mathfrak{p}|$ for each summand. 
This is because the summand is a sum of degree  $k|\mathfrak{p}|$ terms counting each $(u_{\etea})_{a_{e}, a_{e'}}$ degree one.

On the other hand, one easily observes that
\begin{align*}
 \mathcal{H} \log \det (I -\matmu)^{-1}
&= \mathcal{H} \sum_{k \geq 1} \frac{1}{k} \tr ( \matmu^{k}) \\
&= \sum_{k \geq 1}  \tr ( \matmu^{k}).
\end{align*}
Thus, the proof is completed.
\end{AppProof}

\begin{AppProof}{Theorem \ref{thm:Ihara}}

The proof is based on the decomposition in the following lemma and determinant manipulations. %
We define a linear operator by
\begin{align*}
\mathcal{T}: \vfv \rightarrow \vfe,  \qquad
&(\mathcal{T}g)(e):=g(t(e)) \\
\end{align*}
The vector spaces $\vfe$ and $\vfv$ have inner products naturally.
We can think of the adjoint of $\mathcal{T}$ which is given by
\begin{equation*}
 \mathcal{T}^{*}: \vfe \rightarrow \vfv,  \qquad
(\mathcal{T}^{*}f)(i):= \sum_{e: t(e)=i} f(e).
\end{equation*}

These linear operators have the following relation.

\noindent
{\bf Lemma}
\label{lem:decM}
\begin{equation*}
 \matmu= \bs{\iota}(\bs{u})\mathcal{T}\mathcal{T}^* - \bs{\iota} (\bs{u})
\end{equation*}
\begin{proof}[Proof of Lemma]
Let $f \in \vfv$.
\begin{align*}
\Big( \bs{\iota}(\bs{u})\mathcal{T}\mathcal{T}^* - \bs{\iota} (\bs{u}) \Big) f(e) 
&= \sum_{e': {s(e')=s(e) \atop t(e') \neq t(e)}} u^{s(e)}_{\ed{t(e')}{t(e)}}
\sum_{e'': t(e'')=t(e')}f(e'')
- \sum_{e'': {s(e'')=s(e) \atop t(e'') \neq t(e)}} u^{s(e)}_{\ed{t(e'')}{t(e)}} f(e'') \\
&= \sum_{e': {s(e')=s(e) \atop t(e') \neq t(e)}} u^{s(e)}_{\ed{t(e')}{t(e)}}
\sum_{e'': {t(e'')=t(e') \atop e''\neq e'}}f(e'') \\
&=(\matmu f)(e).
\end{align*}
\end{proof}

Using this lemma, we have
 \begin{align*}
  \zeta_{G}(\bs{u})^{-1}&= \det(I- \matmu)\\
  &=\det(I-   \bs{\iota}(\bs{u})\mathcal{T}\mathcal{T}^* + \bs{\iota} (\bs{u})) \\
  &=\det(I-   \bs{\iota}(\bs{u})\mathcal{T}\mathcal{T}^* (I+\bs{\iota} (\bs{u}))^{-1})
   \det( (I+\bs{\iota} (\bs{u})) ) \\
  &=\det(I_{r_V}- \mathcal{T}^* (I+\bs{\iota} (\bs{u}))^{-1} \bs{\iota}(\bs{u})\mathcal{T} )
  \prod_{\alpha \in F} \det(U_{\alpha})
 \end{align*}
It is easy to see that
$I_{r_V}- \mathcal{T}^* (I+\bs{\iota} (\bs{u}))^{-1} \bs{\iota}(\bs{u})\mathcal{T}
=I_{r_V}- \mathcal{T}^*\mathcal{T}+ \mathcal{T}^* (I+\bs{\iota} (\bs{u}))^{-1} \mathcal{T}$.
We also see that
\begin{equation*}
 (\mathcal{T}^*\mathcal{T}g)(i)= \sum_{e: t(e)=i} g(t(e))=d_i g(i)
\end{equation*}
and
\begin{equation*}
 (\mathcal{T}^*(I+\bs{\iota} (\bs{u}))^{-1} \mathcal{T}g)(i)
=\sum_{e: t(e)=i} ((I+\bs{\iota} (\bs{u}))^{-1} \mathcal{T}g)(e)
=(\mathcal{W}g)(i).
\end{equation*}
\end{AppProof}

\begin{AppProof}{Proposition \ref{prop:specradiusbound}}
The right inequality is obvious.
We prove the left inequality.
Let $C=\specr{  \mathcal{M}(\norm{\bsu}) } $.
It is enough to prove that $\det(I-z \mathcal{M}(\bsu))$ has no root in 
$ \{ \lambda \in \mathbb{C} |\hspace{2mm} |\lambda| < C^{-1} \}$.
Accordingly, we show that $\zeta_{H}(z \bsu)$ has no pole in the set.
Let $\mathfrak{p}$ be a prime cycle and
let $\lambda_1,\ldots,\lambda_{r}$ be the eigenvalues of $\pi(\mathfrak{p}; \bsu)$.
Then we obtain $\max |\lambda_l| \leq  \pi(\mathfrak{p}; \norm{\bsu})$.
Therefore, if $|z |< \pi(\mathfrak{p}; \norm{\bsu})^{-1} $, we have
\begin{equation}
\left|  \det(I- z^{|\mathfrak{p}|} \pi(\mathfrak{p}; \bsu)) \right|
= \left| \prod_l (1-z^{|\mathfrak{p}|} \lambda_l) \right|
\geq \Big( 1-|z|^{|\mathfrak{p}|} \pi(\mathfrak{p}; \norm{\bsu}) \Big)^{r}.  \nonumber
\end{equation}
It is not difficult to see that, for arbitrary prime cycle $\mathfrak{p}$,
an inequality
$C^{-1} \leq \pi(\mathfrak{p}; \norm{\bsu})^{-1}$ holds.
Therefore, if $|z| < C^{-1}$,
\begin{align*}
 \left| \zeta_{H}(z \bs{u}) \right|
= \left| \prod_{ \mathfrak{p} \in P } \det(I- z^{|\mathfrak{p}|} \pi(\mathfrak{p}; \bs{u}))^{-1}  \right|
\leq   \prod_{ \mathfrak{p} \in P }  \Big( 1-|z|^{|\mathfrak{p}|} \pi(\mathfrak{p}; \norm{\bsu}) \Big)^{-r}
=\zeta_{H}(|z| \norm{\bsu})^{r} < \infty.
\end{align*}
\end{AppProof}

\begin{AppProof}{Theorem \ref{thm:convexcondition} (ii) : Multinomial case}
First, we consider binary case, i.e. $\phi_{i}(x_i)=x_i \in \{ \pm 1\}$.
For $t \in [0,1]$, let us define $\eta_{ij}(t)= \E{b_{\alpha}}{x_i x_j} =t$ and $\eta_{i}(t)=0$.
Accordingly, $u^{\alpha}_{\edij}=t$ and $\bseta(t) \in L$.
As $t \nearrow 1$, $\bseta(t)$ approaches to a boundary point of $L$.
Using Theorem \ref{thm:Hashimoto}, analogous to the fixed-mean Gaussian case,
we see that $\det(\nabla^2 F(t))$ becomes negative as $t \rightarrow 1$ if  $n(H) > 1$.
Therefore, $F$ is not convex on $L$.

For general multinomial \ifas, the non convexity of $F$ is deduced from the binary case.
There is a face of (the closure of) $L$ 
that is identified with the set of pseudomarginals of the binary \ifa on the same hypergraph.
Since $0 \log 0 =0$, 
we see that the restriction of $F$ on the face is the \Bfe function of the binary \ifa.
Since this restriction is not convex, $F$ is not convex.
\end{AppProof}

\vskip 0.2in
\bibliography{BP,saved2010_BPandZetaJLMR}
\end{document}